
\documentclass{article}

\usepackage{microtype}
\usepackage{graphicx}
\usepackage{subfigure}
\usepackage{booktabs} 

\usepackage{hyperref}



\usepackage[accepted]{icml2025}

\usepackage{amsmath}
\usepackage{amssymb}
\usepackage{mathtools}
\usepackage{amsthm}

\usepackage[capitalize,noabbrev]{cleveref}

\theoremstyle{plain}
\newtheorem{theorem}{Theorem}[section]

\newtheorem{lemma}[theorem]{Lemma}
\newtheorem{corollary}[theorem]{Corollary}
\theoremstyle{definition}

\newtheorem{assumption}[theorem]{Assumption}
\theoremstyle{remark}
\newtheorem{remark}[theorem]{Remark}

\usepackage[textsize=tiny]{todonotes}

\icmltitlerunning{Convergent Analysis of Graph Diffusion Generation}


\usepackage{amsmath,amsfonts,bm}








\def\eqref#1{equation~\ref{#1}}









\def\1{\bm{1}}










\DeclareMathAlphabet{\mathsfit}{\encodingdefault}{\sfdefault}{m}{sl}
\SetMathAlphabet{\mathsfit}{bold}{\encodingdefault}{\sfdefault}{bx}{n}













\DeclareMathOperator{\Tr}{Tr}
\let\ab\allowbreak

\usepackage{hyperref}
\usepackage{url}

\usepackage{mylatexstyle}
\usepackage{mathrsfs,bbm}

\begin{document}

\twocolumn[
\icmltitle{A Non-Asymptotic Convergent Analysis for Scored-Based Graph Generative Model via a System of Stochastic Differential Equations}



\icmlsetsymbol{equal}{*}

\begin{icmlauthorlist}
\icmlauthor{Junwei Su}{yyy}
\icmlauthor{Chuan Wu}{yyy}
\end{icmlauthorlist}

\icmlaffiliation{yyy}{School of Computing and Data Science, University of Hong Kong}

\icmlcorrespondingauthor{Junwei Su}{junweisu@connect.hku.hk}
\icmlcorrespondingauthor{Chuan Wu}{cwu@cs.hku.hk}

\icmlkeywords{Machine Learning, ICML}

\vskip 0.3in
]



\printAffiliationsAndNotice{}  

\begin{abstract}
Score-based graph generative models (SGGMs) have proven effective in critical applications such as drug discovery and protein synthesis. However, their theoretical behavior, particularly regarding convergence, remains underexplored. Unlike common score-based generative models (SGMs), which are governed by a single stochastic differential equation (SDE), SGGMs involve a system of coupled SDEs. In SGGMs, the graph structure and node features are governed by separate but interdependent SDEs. This distinction makes existing convergence analyses from SGMs inapplicable for SGGMs. In this work, we present the first non-asymptotic convergence analysis for SGGMs, focusing on the convergence bound (the risk of generative error) across three key graph generation paradigms: (1) feature generation with a fixed graph structure, (2) graph structure generation with fixed node features, and (3) joint generation of both graph structure and node features. Our analysis reveals several unique factors specific to SGGMs (e.g., the topological properties of the graph structure) which affect the convergence bound. Additionally, we offer theoretical insights into the selection of hyperparameters (e.g., sampling steps and diffusion length) and advocate for techniques like normalization to improve convergence. To validate our theoretical findings, we conduct a controlled empirical study using synthetic graph models, and the results align with our theoretical predictions. This work deepens the theoretical understanding of SGGMs, demonstrates their applicability in critical domains, and provides practical guidance for designing effective models.
\end{abstract}

\section{Introduction}
Graph-structured data is ubiquitous across a wide range of domains, including social networks, biological systems, recommendation engines, and knowledge graphs~\citep{newman2018networks}. The graph generation problem involves creating new graphs that closely resemble real-world data, a task critical to many graph-based applications. In recent years, {\bf score-based graph generative models (SGGMs)}~\citep{niu2020permutation,jo2022score,vignac2022digress,chen2023efficient} have emerged as a flexible and powerful approach, delivering state-of-the-art empirical performance in graph generation tasks. These models have shown significant impact in critical areas such as molecular generation~\citep{gnaneshwar2022score}, protein design~\citep{lee2023score}, graph-based recommendation systems~\citep{liu2024score}, and automated program generation~\citep{zhu2022survey}.

SGGMs are a subclass of the broader family of {\bf score-based generative models (SGMs)}, also referred to as diffusion probabilistic models~\citep{song2020denoising,ho2020denoising,nichol2021improved,sohl2015deep,song2019generative,song2020score,li2024accelerating,yang2024diffusionmodelscomprehensivesurvey}. SGMs generate data by learning a {\bf score function} (see Sec.~\ref{sec:preliminary} for more details), which represents the gradient of the log-probability of the data distribution. These models conceptualize the data generation process as a diffusion process, where data is progressively corrupted by noise and then reconstructed through a reverse process. The forward process involves gradually adding noise over several steps, transforming the data into a simple distribution (e.g., Gaussian noise), and is governed by a {\bf stochastic differential equation (SDE)}~\citep{song2020score}. In the reverse process, the model uses the learned score function to iteratively denoise the data, starting from random noise and guiding it step-by-step toward the original data distribution. To efficiently approximate this reverse process, a discrete sampling approach is employed, using numerical methods such as the Euler-Maruyama scheme or the exponential integrator~\citep{chen2023improved}.

Understanding the convergence behavior of SGGMs is both theoretically and practically crucial~\citep{suh2024survey,huang2024convergenceanalysisprobabilityflow,chen2022sampling,chen2024probability,lee2023convergencescorebasedgenerativemodeling}. Convergence bounds quantify the discrepancy between the generated graphs and the true data distribution, indicating whether the diffusion process will eventually produce samples that align with the desired target distribution. \emph{ This is particularly important for SGGMs, as convergence guarantees the reliability and validity of the generated graphs, which are used in high-stakes applications like drug discovery}. Furthermore, understanding convergence provides valuable insights into practical considerations, such as the selection of hyperparameters (e.g., sampling steps and diffusion length). Convergence analysis also helps identify sources of error and guides the development of more effective models.

{\bf Gap in Existing Research.} Despite the empirical success of SGGMs in high-stakes applications, their convergence behavior remains underexplored. Most existing research has primarily focused on SGMs derived from the image domain~\citep{yeugin2024generative,li2024accelerating,wang2024diffusion,chen2024learning,benton2023error}. There are two key differences between SGGMs and SGMs. \emph{First}, while the generative process of SGMs is governed by a single SDE, SGGMs, due to the nature of graph data (which includes both graph structure and node features), involve a system of coupled SDEs~\citep{jo2022score,niu2020permutation}. In SGGMs, the graph structure and node features are governed by separate but interdependent SDEs. \emph{Second}, SGM convergence analyses often assume independence between data elements (e.g., pixels in images), while in SGGMs, the graph structure and node features are inherently coupled and interdependent~\citep{csbm}. Thus, the formulation and convergence analysis of SGGMs must account for these relationships. As a result, existing convergence analyses for SGMs cannot be directly applied to SGGMs, highlighting the need for new theoretical formulations and analyses to understand and ensure their convergence.

{\bf Problem Studied and Challenges.} In this paper, we address this gap by extending the convergence analysis to SGGMs. Several key challenges arise in this extension: \emph{First}, the interdependency between graph structure and node features requires a careful and nuanced formulation. Unlike the independent elements typically found in conventional SGMs, the graph structure and node features in SGGMs are deeply interconnected. Accurately modeling this interdependency is crucial for capturing the true generative process of SGGMs and deriving meaningful insights. \emph{Second}, this interdependency creates entangled dynamics in the generative process. Changes in the graph structure directly affect the node features, and vice versa. This reciprocal influence complicates the convergence analysis, as conventional methods that assume independence are no longer applicable. The simultaneous evolution of both the graph structure and node features calls for the development of new tools and methodologies to analyze the convergence of the entire system effectively. \emph{Finally}, it is essential to connect the factors and insights from this analysis with the practical SGGM framework. This includes not only understanding the theoretical results in the context of graph data but also leveraging these insights to guide model design decisions, such as hyperparameter tuning (e.g., sampling steps, diffusion length) and implementing regularization techniques. These challenges make the convergence analysis of SGGMs both novel and non-trivial.

{\bf Our Contributions and Results.}  We present a comprehensive formulation for SGGMs that captures the complex interdependencies between graph structure, node features, and the use of graph neural networks (GNNs). Based on this formulation, we present a detailed non-asymptotic  convergence analysis for three common graph generation paradigms: 1) node feature generation with a fixed graph structure (Theorem~\ref{thm:feature_only}), 2) graph structure generation with fixed node features (Theorem~\ref{thm:structure_only}), and 3) joint generation of both graph structure and node features (Theorem~\ref{thm:coupled_dynamic}). Our analysis provides a detailed account of the factors influencing convergence bound of SGGMs. In addition, our results reveal several key insights and implications (Sec.~\ref{sec:result}) regarding the convergence behavior of SGGMs:

1. {\bf Graph size vs. feature dimensionality}: There is a non-isotropic effect between the graph size (number of nodes) and feature dimensionality. Specifically, increasing the size of the graph leads to a greater risk of generative error (i.e., larger convergence bounds) than increasing the dimensionality of node features. This finding helps explain why SGGMs are empirically effective for smaller graphs, even when the node features are complex~\citep{jo2022score}.

2. {\bf Topological properties of the graph structure}: The topological properties of the graph significantly influence the convergence bounds of SGGMs. In particular, graphs with heterogeneous degree distributions—where some nodes have significantly more connections than others—tend to result in a larger generative error. Our findings suggest that SGGMs perform more reliably when generating graphs with more uniform degree distributions, as the risk of large error is lower in these graphs. This insight is critical for applications where the generated graphs need to closely match real-world graph structures.

3. {\bf Impact of norms on feature matrices}: Smaller norms in the feature matrices reduce the risk of generative error in the generated graphs, providing theoretical support for employing normalization techniques in SGGMs. By controlling the scale of the feature matrices, these techniques help tighten the convergence bounds, resulting in more stable and accurate graph generation.

The empirical results from our controlled experiments using synthetic graphs are consistent with our theoretical predictions, thereby validating our analysis and conclusions. These results enhance the theoretical understanding of SGGMs, confirm their applicability in high-stakes applications, and provide practical insights for designing and deploying more effective SGGMs.



\section{Related Work}
\paragraph{Graph Generation Methods.} 
The graph generation problem has a long history of study, with rule-based random graph models traditionally dominating the field~\cite{barabasi1999emergence,holland1983stochastic,erdHos1960evolution,newman2002random}. A prime example of such models is the Stochastic Block Model (SBM)~\citep{holland1983stochastic}, which is based on the observation that real-life graphs often consist of densely connected blocks of vertices exhibiting similar behaviors~\cite{abbe2018community,newman2002random,newman2004finding,newman2006modularity,karrer2011stochastic,cherifi2019community,su2022structure}. However, such rule-based models 
fail to capture the complex and nuanced distribution of graph-structured data observed in real-world problems~\cite{russell2016artificial}. As a result, the focus has shifted towards deep learning-based methods that can model more intricate graph properties~\citep{zhu2022survey,you2018graph,xie2021mars,liao2019efficient,li2018learning,jensen2019graph,fu2021differentiable,de2018molgan,zang2020moflow,simonovsky2018graphvae}. Among these, SGGMs have emerged as a promising approach, showing impressive empirical performance in graph generation~\cite{niu2020permutation,jo2022score,vignac2022digress,chen2023efficient}. Due to the nature of graph data, the standard 
formulation of SGGMs involves a system of stochastic processes, either manifested as a Markov chain (discrete)~\citep{chen2023efficient,vignac2022digress} or SDE (continuous)~\citep{jo2022score,niu2020permutation}. In this paper, we focus on the formulation with SDE, and point out our analysis can be extended to the discrete version by replacement of suitable theoretical tools (Sec.~\ref{sec:discussion}).

\paragraph{Convergent Analysis of SGMs.} 
Theoretical studies on SGMs have garnered significant attention in recent years~\citep{de2021diffusion,zhang2024convergence,lee2022convergence,lee2023convergence,wang2024diffusion,chen2024learning,li2023generalization,benton2023error,chen2022sampling,chen2023improved,chen2023score}. A central focus of these studies is examining the convergence behavior of these models, specifically how well they approximate the true data distribution. Existing research has shown that, under suitable smoothness and regularity assumptions on the score functions, SGMs provide provable guarantees for convergence, meaning that the generated samples increasingly resemble the true data distribution as the number of iterations increases~\citep{chen2022sampling,chen2023improved,chen2023score}. However, much of the existing work has been motivated by tasks such as image generation, where the generative process is governed by a single stochastic process. In contrast, SGGMs involve a system of dependent SDEs, with separate equations governing the graph structure and the node features, respectively. It remains unclear how this interconnectedness nature of graph data is manifested in the convergent behavior of SGGMs. In this paper, we address this gap by providing a non-asymptotic convergence analysis for SGGMs, accounting for the distinctive challenges posed by graph data.
\section{Preliminaries and Problem Formulation}\label{sec:preliminary}
In this section, we introduce the formulation SGGMs via a system of SDEs, and different graph generation paradigms.

\paragraph{Notation.} We use bold upper and lower-case letters to denote vectors and matrices. For two functions $f(x) \ge 0$ and $g(x) \ge 0$,
we write $f(x) \lesssim g(x)$ if $f(x) \le c\cdot g(x)$ for some absolute constant $c > 0$.  
For a given SDE,
we use $x$ to denote the forward continuous process, $\bar{x}$ to denote the backward continuous process and $\hat{x}$ to denote the backward approximated process. A graph $\cG$ can be formally defined as a two-tuple  $\cG = (\Xb, \Ab)$, where $\Xb \in \mathbb{R}^{N \times F}$ is the node feature matrix and $\Ab \in \mathbb{R}^{N \times N}$ is the adjacency matrix representing the graph structure. Here, $N$ is the number of nodes in the graph and $F$ is the dimension of node features. In addition, we use $\NN(a,b)$ to denote Gaussian distribution with mean $a$ and variance $b$.


\subsection{SGGMs via System of SDEs.}
SGGMs consist of three main components: 1) a forward process, 2) a reverse process, and 3) a sampling process that approximates the reverse process to facilitate data generation. In the following sections, we provide an introduction to each of these components.

\subsubsection{Forward Process.} 
Formally,  the forward diffusion process can be described by the trajectory of random variables $\{\cG_t = (\Xb_t, \Ab_t)\}_{t \in [0, T]}$ in a fixed time horizon $[0,T]$, where $\cG_0=(\Xb_0,\Ab_0)$ is sampled from the data distribution $\PP(\cG_0)$ (what we want to learn).
This process is modelled using the following system of dependent SDEs:
{ 
\begin{align}\label{eq:join_generation}
    \mathrm{d}\Xb_t  & =  f_{\Xb}(\cG_t,t) \, \mathrm{d} t +  g_{\Xb}(\cG_t,t) \, \mathrm{d}\Wb_{\Xb}, \notag \\
    \mathrm{d}\Ab_t 
    & =  f_{\Ab}(\cG_t,t)  \, \mathrm{d}t + g_{\Ab}(\cG_t,t) \, \mathrm{d}\Wb_{\Ab}, 
\end{align}
}
where the first equation is the forward SDE for the node feature and the second equation is the forward SDE for the graph structure. $f_{\Xb}(.)$ and $f_{\Ab}(.)$ are drift coefficients, 
and $g_{\Xb}(.)$ and $g_{\Ab}(.)$ are scalar diffusion coefficients. $\Wb_{\Xb},\Wb_{\Ab}$ are standard Wiener processes (also known as Brownian motion) acting in the node feature space and graph structure space, respectively.
This diffusion process gradually adds noise to the 
initial graph samples, and at the terminal time $T$, the sample $\cG_{T} = (\Xb_T, \Ab_T)$ 
follows a simple convergent distribution (also referred to as the prior distribution) which we denote as $\bm{\Pi}_{\Ab}$ and $\bm{\Pi}_{\Xb}$ for the graph structure process and node feature process respectivelys. In this paper, we focus on the commonly used standard Gaussian distribution (with zero mean and unit variance) as the convergent distribution.

We focus our analysis on the following choices of the drift and diffusion coefficients:
{ 
\begin{align}\label{eq:cont_reverse_spec}
    f_{\Xb}(\cG_t,t) &= -1/2 g_{\Xb}(\cG_t,t)^2\Xb_t, \notag \\
    f_{\Ab}(\cG_t,t) & = -1/2 g_{\Ab}(\cG_t,t)^2 \Ab_t, \notag \\
    g_{\Xb}(\cG_t,t) &= g_{\Ab}(\cG_t,t) \equiv 1.
\end{align}
}
 These choices match those in the original SGMs paper~\citep{song2020denoising,jo2022score},
 and are commonly used for similar convergence analysis~\citep{chen2023improved}. We emphasize that our analysis can be adapted for some other choices of linear drift terms and constant variance functions as well (we provide a further discussion on this regard in Appendix~\ref{append:formulation}). 
 
Under these choices of 
coefficients, the forward process becomes the Ornstein-Uhlenbeck (OU) process~\citep{jacobsen1996laplace},
which has an explicit conditional density:
{ 
\begin{align}\label{eq:conditional_density}
    \Xb_t | \Xb_0 \sim \NN(e^{-1/2t}\Xb_0, (1-e^{-t})\Ib_{N\times F}), \notag \\
    \Ab_t | \Ab_0 \sim \NN(e^{-1/2t}\Ab_0, (1-e^{-t})\Ib_{N\times N}).
\end{align}
}
Moreover, the OU process converges exponentially to the standard Gaussian
distribution:
{ 
\begin{align*}
    \mathrm{KL}(\PP(\Xb_t)\| \bm{\Pi}_{\Xb}) \leq e^{-t} \mathrm{KL}(\PP(\Xb_0)\| \bm{\Pi}_{\Xb}), \\
    \mathrm{KL}(\PP(\Ab_t)\| \bm{\Pi}_{\Ab}) \leq e^{-t} \mathrm{KL}(\PP(\Ab_0)\| \bm{\Pi}_{\Ab}), 
\end{align*}
}
where $\mathrm{KL}$ denotes the Kullback–Leibler (KL) divergence used to quantify the discrepancy of two distributions (see Appendix~\ref{append:additioanl_result} for a further discussion).

\subsubsection{Reverse Process.}
 The reverses process aims to generate graph samples from the convergent distribution by reversing the forward process. This is also described by a system of SDEs driven by the score function:
 { 
\begin{align}\label{eq:reverse_sde}
    \mathrm{d}\bar{\Xb}_t  & = \left[ 1/2\bar{\Xb}_t -   \nabla_{\Xb} \log \PP(\cG_t) \right] \,  \mathrm{d}t 
     +   \mathrm{d}\bar{\Wb}_{\Xb}, \notag \\
    \mathrm{d}\bar{\Ab}_t 
    & = \left[ 1/2\bar{\Ab}_t -   \nabla_{\Ab} \log \PP(\cG_t) \right] \, \mathrm{d}t  + \mathrm{d}\bar{\Wb}_{\Ab},
\end{align}
}
where $\bar{\Xb}_t, \bar{\Ab}_t, \bar{\Wb}_{\Xb}, \bar{\Wb}_{\Ab}$ are the respective reverse processes in time. $\nabla_{\Xb} \log \PP(\cG_t)$ and $\nabla_{\Ab_t} \log \PP(\cG_t)$ are the partial score (partial gradient of log-probility of graph distribution) with respect to node feature and graph structure respectively.  

\paragraph{Training SGGMs.}
The partial 
score functions can be estimated by training time-dependent neural networks (also referred to as the score networks)
$s_{\bm{\theta}}(.)$ and $s_{\bm{\phi}}(.)$,  so that
{ 
\begin{align*}
    s_{\bm{\theta}}(\cG_t,t) \approx \nabla_{\Xb} \log \PP(\cG_t), \quad
    s_{\bm{\phi}}(\cG_t,t) \approx \nabla_{\Ab_t} \log \PP(\cG_t),
\end{align*}
}
where $\theta$ and $\phi$ are learnable parameters.
Since the drift coefficients of the forward diffusion process are linear, the transition distribution $\PP(\cG_t | \cG_0)$ can be decomposed in terms of $\Xb_t$ and $\Ab_t$, as follows:
{ 
\begin{align*}
 \PP(\cG_t | \cG_0) = \PP(\Xb_t | \Xb_0) \PP(\Ab_t | \Ab_0).
\end{align*}
}
Notably, it is easy to sample from the transition distributions of each component, $\PP(\Xb_t | \Xb_0)$ and $\PP(\Ab_t | \Ab_0)$, as they are Gaussian distributions  
with mean and variance determined by the coefficients of the forward diffusion process given by Eq.~\ref{eq:conditional_density}.  Then, with the decomposition above, training objectives of SGGMs generalize the one from ~\citep{ song2020score} to minimizing the estimation of partial scores 
on the given graph dataset:
{ 
\begin{align*}
        \min_{\bm{\theta}} \mathbb{E}_t \bigg[  \mathbb{E}_{\cG_0} \mathbb{E}_{\cG_t | \cG_0} \big\| s_{\bm{\theta}}(\cG_t,t) - \nabla_{\Xb} \log \PP (\Xb_t | \Xb_0) \big\|_2^2 \bigg],\\ 
    \min_{\bm{\phi}} \mathbb{E}_t \bigg[ \mathbb{E}_{\cG_0} \mathbb{E}_{\cG_t | \cG_0} \big\| s_{\bm{\phi}}(\cG_t,t) - \nabla_{\Ab} \log \PP(\Ab_t | \Ab_0) \big\|_2^2 \bigg].
\end{align*}
}
The expectations 
above can be efficiently computed using Monte Carlo estimation with samples $(t, \cG_0, \cG_t)$. For completeness, we provide a more detailed derivation and discussion on the training objectives in Appendix~\ref{append:formulation}.
 
\paragraph{Functional Form of Score and Score Networks.}
To capture and model the dependencies between graph structure and node features, GNNs, such as Graph Convolutional Networks (GCNs)~\citep{gcn} and Graph Transformers~\citep{dwivedi2020generalization},
are used as the score networks for SGGMs. A defining characteristic of GNNs is their usage of graph structure to combine and update the representation of each node or edge. From a functional perspective, GNNs can be expressed as a function \( \mathscr{F}\left (\mathcal{T}(\mathbf{A})\mathcal{T}'(\mathbf{X}) \right ) \),
where \( \mathcal{T} \) and \( \mathcal{T}' \) represent some transformations applied to the adjacency matrix \( \mathbf{A} \) and the feature matrix \( \mathbf{X} \), respectively. For simplicity and interpretability, in this paper, we focus on the case where \( \mathcal{T} \) and \( \mathcal{T}' \) are identity mappings, corresponding to a vanilla GCN without the normalization. Our analysis and results can be extended to other transformations by replacing \( \mathbf{A}, \mathbf{X} \) with \( \mathcal{T}(\mathbf{A}), \mathcal{T}'(\mathbf{X}) \). In addition, we focus on a feasible setting where we assume certain regularity conditions on the data distribution, and that the score function aligns with the functional form of GNNs.

\begin{assumption}\label{assump:dist_score}
    The data distributions for node features and graph structure are twice differentiable and have bounded second moments, i.e.,
    { 
    \begin{align*}
        H_{\mathbf{X}} := \mathbb{E} \| \mathbf{X} \|^2 \leq \infty, \quad 
        H_{\mathbf{A}} := \mathbb{E} \| \mathbf{A} \|^2 \leq \infty.
    \end{align*}
    }
    Furthermore, the score functions \( \nabla \log \mathbb{P}_t \) are \( L \)-Lipschitz and can be written as a function of the form \( \mathscr{F}(\mathbf{A}_t\mathbf{X}_t) \).
\end{assumption}

The regularity conditions such as continuity, boundedness and smoothness of data distributions and score function are commonly employed in other similar studies~\citep{benton2023error,chen2022sampling,chen2023improved,chen2023score}. We note that the smoothness assumption could be 
relaxed in recent analysis~\citep{chen2023improved}; however, such a relaxation would complicate the results. In this paper, 
we target more user-friendly results for better interpretability and connection with practical settings, and hence retain these standard assumptions.

\subsubsection{Sampling Process.}
A discrete-time approximation of the sampling dynamics in Eq.~\ref{eq:reverse_sde} is required for practical implementation. Let
$0 = t_0 \leq t_1 \leq \cdots \leq t_M = T$ 
be the discretization points. For the \(k\)-th discretization step \((1 \leq k \leq M)\), we denote \(\Delta t_k := t_k - t_{k-1}\) as the step size for the $k$-th step. Let \(t'_k = T - t_{M-k}\) be the corresponding discretization points in the reverse-process SDE. In addition, we make the following assumption on the learned score functions with respect to the discretization.
\begin{assumption}\label{assump:estimation}
    For both score functions \(s_{\bm{\theta}}(.)\) and \(s_{\bm{\phi}}(.)\), we assume that there exist constant $\epsilon_{\Xb}, \epsilon_{\Ab}$ for any \(1 \leq i \leq M\):
{ 
\begin{align*}
    \sum_{i=1}^{M} \frac{\Delta {t_i}}{T} \EE \left \| \nabla_{\Xb} \log \PP(\cG_{t_i}) - s_{\bm{\theta}}(\cG_{t_i}, t_i) \right \|^2 &\leq \epsilon_{\Xb}^2, \\
    \sum_{i=1}^{M} \frac{\Delta {t_i}}{T} \EE \left\| \nabla_{\Ab} \log \PP(\cG_{t_i}) - s_{\bm{\phi}}(\cG_{t_i}, t_i) \right\|^2 &\leq \epsilon_{\Ab}^2. 
\end{align*}
}
\end{assumption}
Assumption~\ref{assump:estimation} is commonly used in 
convergence analyses of diffusion models with general distributions~\citep{chen2022sampling,zhang2024convergence, chen2023improved}.
We will further discuss in Sec.~\ref{sec:discussion} how we can extend the analysis and relax this assumption to obtain a more precise (but less general) result, with additional modelling assumptions on the underlying distribution.

We consider two types of discretization schemes that are widely used in existing works: the Euler-Maruyama scheme and the exponential integrator scheme.

\noindent\textbf{Euler-Maruyama Scheme}
is a simple, first-order discretization method that approximates an SDE by applying a first-order Taylor approximation. For approximated trajectory $\hat{\Xb}_t, \hat{\Ab}_t $ with the discrete scheme, the progression rule at the $k$-th step is given by:
{ 
\begin{align}\label{eq:euler}
     & \hat{\Xb}_{t_{k+1}'}  =  \hat{\Xb}_{t_{k}'} + \left[ -1/2 \hat{\Xb}_{t_{k}'} - s_{\bm{\theta}}(\hat{\cG}_{t_{k}'}, {t_{k}'}) \right] \Delta {t_{k}} + \Delta {t_{k}} \bar{\Wb}_{\Xb}, \notag \\
    & \hat{\Ab}_{t_{k+1}'} =  \hat{\Ab}_{t_{k}'} + \left[ -1/2 \hat{\Ab}_{t_{k}'} - s_{\bm{\phi}}(\hat{\cG}_{t_{k}'}, {t_{k}'}) \right] \Delta {t_{k}} + \Delta {t_{k}} \bar{\Wb}_{\Ab},
\end{align}
}
where $\Delta \bar{\Wb}_{\Ab}$ and $\Delta \bar{\Wb}_{\Ab}$ represent the increment in the Wiener processes within the interval.

\noindent\textbf{Exponential Integrator Scheme}
provides a more accurate method for solving SDEs, especially when the system has a semi-linear structure. It discretizes the nonlinear terms while retaining the continuous dynamics arising from the linear terms. Specifically,
the nonlinear term is discretized while the linear part evolves continuously according to:
{ 
\begin{align}\label{eq:exponential}
    \hat{\Xb}_{t_{k+1}'}  = & e^{ 1/2 \Delta {t_{k}}} \hat{\Xb}_{t_{k}'} + 2\left(e^{ 1/2 \Delta {t_{k}}} - 1 \right) s_{\bm{\theta}}(\hat{\cG}_{t_{k}'}, {t_{k}'}) \notag \\
    &  + \sqrt{e^{\Delta {t_{k}}} - 1} \bm{\xi}_k, \notag \\
    \hat{\Ab}_{t_{k+1}'}  = & e^{ 1/2 \Delta {t_{k}}} \hat{\Ab}_{t_{k}'} + 2\left(e^{ 1/2 \Delta {t_{k}}} - 1 \right) s_{\bm{\phi}}(\hat{\cG}_{t_{k}'}, {t_{k}'}) \notag \\
    &  + \sqrt{e^{\Delta {t_{k}}} - 1} \bm{\xi}_k'.
\end{align}
}
where $\bm{\xi}_k$ and $\bm{\xi}_k'$ are sampled from standard Gaussian.

\subsection{Graph Generation Paradigms}
For graph generation tasks, there are three distinct paradigms, each with its unique significance. 
\paragraph{Joint Generation of Graph Structure and Node Features.}
This paradigm aims to simultaneously generate both the graph structure and node features. It is particularly significant in biological applications, such as protein design~\citep{vignac2022digress, jo2022score}, where the graph structure represents the interactions between proteins (nodes), and the node features describe attributes like protein function, structure, or expression levels. Jointly generating both components ensures that the resulting graph is biologically plausible, accurately reflecting both the interaction patterns and functional properties of the proteins. The corresponding forward process is given by Eq.~\ref{eq:join_generation}.

\paragraph{Node Feature Generation with Fixed Graph Structure.}
In this paradigm, the goal is to generate node features while maintaining a fixed graph structure. This is particularly important in domains like molecular graph generation, where the connectivity of atoms is predefined, but their properties—such as chemical features—can vary~\citep{sanchez2017optimizing, simonovsky2018graphvae, jin2018junction}. By focusing on feature generation, this paradigm enables the exploration of diverse attribute combinations while maintaining a consistent structural backbone, facilitating the generation of realistic and varied instances for a given graph structure. In this setting, node features are generated based on a fixed graph structure. The corresponding forward process is: 
{ 
\begin{align}\label{eq:paradigm1}
    \mathrm{d}\Xb_{t} &= 1/2 \Xb_t   \, \mathrm{d} t +  \mathrm{d} \Wb_{\Xb}, \notag \\
    \mathrm{d}\Ab_t 
    & = 0, \quad \Ab_0 = \Ab^*
\end{align}
}
where 
\(\Ab^*\) is the fixed graph structure that remains constant during the feature generation process.

\paragraph{Graph Structure Generation with Fixed Node Features.}
This paradigm focuses on generating the graph structure while keeping the node features fixed. It is particularly useful when the node features are known or predefined, but the relationships between the nodes (i.e., the graph structure) need to be learned or generated~\citep{martinez2016survey, zhang2018link, Benson_2016, niu2020permutation}. For example, in social network analysis, the characteristics of individuals (e.g., age, location, interests) are known, but the interactions or connections between them need to be inferred. In this setting, the generating process focuses solely on the graph structure, as modeled by:
{ 
\begin{align}\label{eq:paradigm2}
    \mathrm{d}\Xb_t 
    & = 0, \quad \Xb_0 = \Xb^* \notag \\ 
    \mathrm{d}\Ab_{t} &= 1/2 \Ab_t  \, \mathrm{d} t +  \mathrm{d} \Wb_{\Ab},
\end{align}
}
where \(\Xb^*\) is the node feature matrix that remains constant during the feature generation process.

\section{Main Results}\label{sec:result}
We next present our main results 
on the convergence behaviour of SGGMs across the three graph generation paradigms. We first present the convergence results for each paradigm, deferring the detailed discussion of these results until after all paradigms have been presented. 

\subsection{Convergence Results}

\begin{theorem}[Convergence Bound of Node Feature Generation with Fixed Structure]\label{thm:feature_only}
      Consider the graph generation paradigm 
      in Eq.~\ref{eq:paradigm1}. Under Assumptions~\ref{assump:dist_score} and~\ref{assump:estimation} and supposing that the fixed graph structure $\Ab^*$ has a bounded norm, i.e.,
      $\|\Ab^*\|^2 \leq \sigma_{\Ab}^2,$
      we have the following results:
      
      $\bullet$ For exponential integration scheme, 
        {  
        \begin{align}\label{eq:feat_exp}
        & \mathrm{KL}  (\PP(\Xb_0) \| \PP(\hat{\Xb}_{0})) \lesssim 
         \quad (H_{\Xb} + NF)e^{-T} + T \epsilon_{\Xb}^2 \notag \\
         & + NFL\left
         (\sigma_{\Ab}^2 L \sum_{i=1}^M\Delta t_i^2+ \sum_{i=1}^M\Delta t_i^3\right ).
         \end{align}
         }
        $\bullet$ For Euler-Maruyama scheme, 
        {  
            \begin{align}\label{eq:feat_euler}
             & \mathrm{KL}  (\PP(\Xb_0) \| \PP(\hat{\Xb}_{0})) \lesssim  (H_{\Xb} + NF)e^{-T} +  T \epsilon_{\Xb}^2 \notag \\
             & + (NFL^2\sigma^2_{\Ab}+NF) \sum_{i=1}^M\Delta t_i^2 + (NFL + H_{\Xb}) \sum_{i=1}^M\Delta t_i^3.
        \end{align}
        }
         
\end{theorem}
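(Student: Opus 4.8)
The plan is to adapt the Girsanov-based convergence framework developed for single-SDE score-based models~\citep{chen2023improved,benton2023error} to the feature-only paradigm of Eq.~\ref{eq:paradigm1}, exploiting the fact that freezing the adjacency matrix at $\Ab^*$ collapses the coupled system to a single SDE for $\Xb_t$ whose score nonetheless retains the functional dependence $\mathscr{F}(\Ab^*\Xb_t)$. First I would invoke the data-processing inequality together with a chain-rule decomposition of the KL divergence along the reverse-path measure, producing three contributions: a prior-mismatch term $\mathrm{KL}(\PP(\Xb_T)\|\bm{\Pi}_{\Xb})$, a score-estimation term, and a time-discretization term.

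For the prior-mismatch term, I would apply the exponential decay of the OU process recorded in Sec.~\ref{sec:preliminary}, which gives $\mathrm{KL}(\PP(\Xb_T)\|\bm{\Pi}_{\Xb}) \le e^{-T}\,\mathrm{KL}(\PP(\Xb_0)\|\bm{\Pi}_{\Xb})$, and then bound the initial KL against the standard Gaussian via the second-moment assumption $H_{\Xb}$ plus the ambient dimension $NF$ (the dimension entering through the Gaussian log-normalization), yielding $(H_{\Xb}+NF)e^{-T}$. The score-estimation term is controlled directly by Assumption~\ref{assump:estimation}: the $\Delta t_i/T$-weighted sum of squared score errors is at most $\epsilon_{\Xb}^2$, and multiplying through by $T$ to match the unweighted Girsanov integral gives $T\epsilon_{\Xb}^2$.

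The main work lies in the discretization term, which via Girsanov reduces to $\sum_k \int_{t_k'}^{t_{k+1}'}\EE\|b(\Xb_s,s)-b(\Xb_{t_k'},t_k')\|^2\,\mathrm{d}s$ for the reverse drift $b$ of Eq.~\ref{eq:reverse_sde}. Here I would use the functional form of Assumption~\ref{assump:dist_score}: since the score equals $\mathscr{F}(\Ab^*\Xb_t)$ and $\nabla\log\PP$ is $L$-Lipschitz, perturbing $\Xb$ is amplified by $\Ab^*$ inside the argument, so the score difference obeys a bound of order $L^2\|\Ab^*\|^2\|\Xb_s-\Xb_{t_k'}\|^2 \le L^2\sigma_{\Ab}^2\|\Xb_s-\Xb_{t_k'}\|^2$, which is the origin of the $\sigma_{\Ab}^2$ factor. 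I would then control the one-step increment by splitting $\Xb_s-\Xb_{t_k'}$ into its drift integral and its Brownian increment: the Brownian part contributes $NF\,(s-t_k')$ and, after integrating over the step and summing, yields the $L^2\sigma_{\Ab}^2 NF\sum_i\Delta t_i^2$ term, while the drift-integral part is higher order and yields the $NFL\sum_i\Delta t_i^3$ term. For the exponential integrator the linear drift $\tfrac{1}{2}\Xb$ is propagated exactly, so only the score is frozen and the bound stops here; for Euler--Maruyama the linear term is discretized as well, and freezing $\tfrac12\Xb_s$ at $\tfrac12\Xb_{t_k'}$ adds the extra $NF\sum_i\Delta t_i^2$ (Brownian-driven) and $H_{\Xb}\sum_i\Delta t_i^3$ (magnitude-driven) contributions, accounting exactly for the difference between Eq.~\ref{eq:feat_exp} and Eq.~\ref{eq:feat_euler}.

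The hard part will be the uniform one-step moment estimate for $\EE\|\Xb_s-\Xb_{t_k'}\|^2$ along the approximate reverse trajectory, which requires an a-priori second-moment bound on the process itself---obtained through a Gr\"onwall-type argument---before the Lipschitz amplification by $\Ab^*$ can be applied. Getting the dimension counting ($NF$) and the powers of $\Delta t_i$ exactly right, while cleanly separating the score-freezing error from the linear-drift-freezing error, is where most of the care is required.
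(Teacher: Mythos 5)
Your overall architecture matches the paper's proof: a chain-rule/Girsanov decomposition into prior-mismatch, score-estimation, and discretization terms (Lemma~\ref{lemma:decomposition}), the prior term bounded by $(H_{\Xb}+NF)e^{-T}$ via the OU decay (Lemma~\ref{lemma:convergent_error}), the score term read off from Assumption~\ref{assump:estimation}, and the Euler--Maruyama surplus $NF\sum_i\Delta t_i^2 + H_{\Xb}\sum_i\Delta t_i^3$ coming exactly from freezing the linear drift, i.e.\ from $\EE\|\Xb_t-\Xb_{t_i}\|^2$ (Lemma~\ref{lemma:euler_extra}). The gap is in the discretization term. You write the score-freezing error as $L^2\sigma_{\Ab}^2\,\EE\|\Xb_s-\Xb_{t_k'}\|^2$, treating $\nabla\log\PP$ as a single Lipschitz function evaluated at two points. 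But the quantity to control is $\EE\|\nabla_{\Xb}\log\PP(\cG_t)-\nabla_{\Xb}\log\PP(\cG_{t_k})\|^2$, where the two scores are \emph{different functions} (the densities $\PP_t$ and $\PP_{t_k}$ differ), so Lipschitzness in the spatial argument alone does not apply. The paper resolves this with the Gaussian-perturbation identity $\nabla_{\Xb}\log\PP(\cG_s)=\alpha_{t,s}^{-1}\,\EE_{\PP_{t|s}}[\nabla_{\Xb}\log\PP(\cG_t)]$ (Lemma~\ref{lemma:gaussian_perturbation_form}), which splits the error into a spatial piece $\EE\|\nabla_{\Xb}\log\PP(\cG_t)-\nabla_{\Xb}\log\PP(\alpha_{t,s}^{-1}\cG_s)\|^2$ — this is where the Lipschitz constant and the $\sigma_{\Ab}^2$ amplification legitimately enter, giving $NFL^2\sigma_{\Ab}^2(t_k-t)$ — plus a temporal piece $(1-\alpha_{t,s}^{-1})^2\,\EE\|\nabla_{\Xb}\log\PP(\cG_t)\|^2\le NFL(t_k-t)^2$ via Lemma~\ref{lemma:score_bound} (see Lemmas~\ref{lemma:score_diff} and~\ref{lemma:score_interval_diff_feat}).

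This is not merely a bookkeeping quibble: it changes where the $NFL\sum_i\Delta t_i^3$ term comes from. In the paper it is the temporal correction $(1-\alpha_{t,s}^{-1})^2\cdot NFL$, integrated over each step. In your accounting it is supposed to come from the drift part of $\EE\|\Xb_s-\Xb_{t_k'}\|^2$ multiplied by $L^2\sigma_{\Ab}^2$, which would instead produce a coefficient of order $L^2\sigma_{\Ab}^2(NF+H_{\Xb})$ on $\sum_i\Delta t_i^3$ — a different (and in general incomparable) bound — while leaving the genuine time-inhomogeneity of the score uncontrolled. A secondary, minor point: the a-priori moment bound you anticipate needing a Gr\"onwall argument for is unnecessary here, because the decomposition is evaluated along the \emph{forward} OU trajectory, whose explicit conditional Gaussian law gives $\EE\|\Xb_s\|^2\le H_{\Xb}+NF$ directly.
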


\begin{theorem}[Convergence Bound of Graph Structure Generation with Fixed Node Features]\label{thm:structure_only}
      Consider the graph generation paradigm 
      in Eq.~\ref{eq:paradigm2}. 
      Under Assumptions~\ref{assump:dist_score} and~\ref{assump:estimation} and supposing the fixed feature matrix $\Xb^*$ has a bounded norm, i.e.,
      $\|\Xb^*\|^2 \leq \sigma_{\Xb}^2,$
    we have the following results:

        $\bullet$ For exponential integration scheme, 
        {  
        \begin{align}\label{eq:struct_exp}
        & \mathrm{KL}  (\PP(\Ab_0) \| \PP(\hat{\Ab}_{0}))\lesssim  (H_{\Ab} + N^2)e^{-T} + T \epsilon_{\Ab}^2 \notag \\
        & + N^2L \left
         (\sigma_{\Xb}^2 L \sum_{i=1}^M\Delta t_i^2+ \sum_{i=1}^M\Delta t_i^3 \right ).
    \end{align}
    }
    $\bullet$ For Euler-Maruyama scheme, 
    {  
        \begin{align}\label{eq:struct_euler}
         & \mathrm{KL}  (\PP(\Ab_0) \| \PP(\hat{\Ab}_0)) \lesssim  (H_{\Ab} + N^2)e^{-T} + T \epsilon_{\Ab}^2 \notag \\
        & +(N^2L^2\sigma^2_{\Xb}+N^2) \sum_{i=1}^M\Delta t_i^2 + (N^2L + H_{\Xb}) \sum_{i=1}^M\Delta t_i^3.
    \end{align}
    }

\end{theorem}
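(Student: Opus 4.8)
The plan is to exploit the structural symmetry between this paradigm and the node-feature paradigm of \cref{thm:feature_only}. When the node features are frozen at $\Xb^\ast$, the system in Eq.~\ref{eq:reverse_sde} collapses to a single reverse OU-type SDE in the variable $\bar{\Ab}_t$, whose score is $\nabla_\Ab \log\PP(\cG_t) = \mathscr{F}(\Ab_t \Xb^\ast)$ by \cref{assump:dist_score}. This is formally identical to the feature-only reverse SDE after the relabeling $\Xb \leftrightarrow \Ab$, with the generated object now living in $\R^{N\times N}$ (so the ambient dimension $NF$ is replaced by $N^2$), the fixed matrix being $\Xb^\ast$ with $\|\Xb^\ast\|^2\le\sigma_\Xb^2$ (replacing $\sigma_\Ab^2$), and the score-estimation tolerance being $\epsilon_\Ab$ (replacing $\epsilon_\Xb$). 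Consequently I would reuse the argument of \cref{thm:feature_only} under this substitution, and only the step where the fixed matrix enters requires fresh attention.

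Concretely, I would first invoke the standard measure-change framework for diffusion samplers: apply Girsanov's theorem to bound $\mathrm{KL}(\PP(\Ab_0)\|\PP(\hat\Ab_0))$ by the sum of (i) an initialization error from replacing $\PP(\Ab_T)$ by the Gaussian prior $\bm\Pi_\Ab$, which the exponential contraction of the OU process controls by $(H_\Ab + N^2)e^{-T}$; (ii) the accumulated score-estimation error, which \cref{assump:estimation} bounds by $T\epsilon_\Ab^2$; and (iii) the discretization error, i.e. the time-integrated discrepancy between the frozen score $s_{\bm\phi}(\hat\cG_{t_k'},t_k')$ used on each interval and the exact score along the continuous reverse trajectory. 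The data-processing inequality and the chain rule for KL reduce the whole bound to controlling these three pieces separately.

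The graph-specific content sits in step (iii). Over a step $[t_{k-1}',t_k']$ the discretization error is governed by the mean-square variation $\E\|\nabla_\Ab\log\PP(\cG_t)-\nabla_\Ab\log\PP(\cG_{t_{k-1}'})\|^2$, which I would split into a contribution from the explicit time dependence and one from the motion of $\Ab_t$. Using the $L$-Lipschitz property together with the functional form $\mathscr{F}(\Ab_t\Xb^\ast)$, the second contribution obeys
\begin{align*}
\E\|\mathscr{F}(\Ab_t\Xb^\ast)-\mathscr{F}(\Ab_{t_{k-1}'}\Xb^\ast)\|^2 \le L^2\sigma_\Xb^2\,\E\|\Ab_t-\Ab_{t_{k-1}'}\|^2,
\end{align*}
so the fixed-feature norm enters precisely through the matrix product, yielding the $\sigma_\Xb^2 L^2$ prefactor. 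The increment $\E\|\Ab_t-\Ab_{t_{k-1}'}\|^2$ is then of order $N^2\Delta t_k$ (each of the $N^2$ coordinates accrues Brownian variance of order $\Delta t_k$, plus a drift term), which after the $\Delta t_k/T$-weighted summation produces the $N^2 L^2\sigma_\Xb^2\sum_i\Delta t_i^2$ term and the higher-order $N^2 L\sum_i\Delta t_i^3$ remainder. For the exponential integrator the linear drift is integrated exactly, so only this score variation survives; for Euler--Maruyama the additional first-order discretization of the linear term $\tfrac12\Ab_t$ contributes the extra $N^2\sum_i\Delta t_i^2$ term and the $\Delta t_i^3$ correction, matching Eq.~\ref{eq:struct_euler}.

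The main obstacle I anticipate is making the Lipschitz/moment estimate in step (iii) rigorous: the bound $\E\|\Ab_t-\Ab_{t_{k-1}'}\|^2\lesssim N^2\Delta t_k$ presupposes uniform control of the second moments of the reverse trajectory $\bar\Ab_t$ (and of the approximate trajectory $\hat\Ab_t$) on $[0,T]$. I would establish these a priori moment bounds via a Gr\"onwall argument on the reverse SDE, exploiting that the drift is linear and the score is Lipschitz, so that the second moment stays bounded by a constant multiple of $H_\Ab + N^2$ uniformly in $t$. Closing this moment estimate---and verifying that the product structure $\Ab_t\Xb^\ast$ introduces no dimension factors beyond $N^2$ and $\sigma_\Xb^2$---is the delicate part; everything else transfers directly from the feature-only analysis.
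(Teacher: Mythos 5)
Your overall architecture matches the paper's proof exactly: a chain-rule/Girsanov decomposition of $\mathrm{KL}(\PP(\Ab_0)\|\PP(\hat\Ab_0))$ into (i) a prior-initialization term controlled by OU contraction (the paper's Lemma on the convergent error gives $(H_\Ab+N^2)e^{-T}$), (ii) the accumulated score-estimation error $T\epsilon_\Ab^2$ from Assumption~\ref{assump:estimation}, and (iii) a discretization term, with the Euler--Maruyama scheme picking up the extra increment term $\E\|\Ab_t-\Ab_{t_i}\|^2\lesssim N^2(t_i-t)+H_\Ab(t_i-t)^2$. Your identification of where $\sigma_\Xb^2$ enters --- through the product $\Ab_t\Xb^\ast$ inside $\mathscr{F}$ --- is also the same mechanism the paper uses, and all the final scalings agree.

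The one step you have not actually closed is the control of the score difference $\E\|\nabla_\Ab\log\PP(\cG_t)-\nabla_\Ab\log\PP(\cG_{t_k})\|^2$. You correctly split it into a spatial part (motion of $\Ab_t$) and a part coming from the explicit time dependence of the density, and you bound the spatial part by Lipschitzness of $\mathscr{F}$; but the time-dependence part is never bounded, and Lipschitzness alone cannot do it, since $\nabla_\Ab\log\PP_t$ and $\nabla_\Ab\log\PP_{t_k}$ are gradients of \emph{different} densities evaluated at different points. The paper resolves this with the Gaussian-perturbation identity $\nabla\log\PP(\cG_s)=\alpha_{t,s}^{-1}\,\E_{\PP_{t|s}}[\nabla\log\PP(\cG_t)]$ (its Lemma~\ref{lemma:gaussian_perturbation_form} and Lemma~\ref{lemma:score_diff}), which reduces the time shift to a deterministic rescaling $\Ab_t\mapsto\alpha_{t,t_k}^{-1}\Ab_t$ plus a term $(1-\alpha_{t,t_k}^{-1})^2\,\E\|\nabla_\Ab\log\PP(\cG_t)\|^2\le N^2L(t_k-t)^2$ via the second-moment bound on the score; this is exactly where the $N^2L\sum_i\Delta t_i^3$ term comes from. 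You would need this (or an equivalent Fokker--Planck argument) to make step (iii) rigorous. Separately, the Gr\"onwall moment estimate you flag as the delicate part is unnecessary: after the decomposition, all expectations are taken along the forward OU process, whose conditional law is an explicit Gaussian, so $\E\|\Ab_s\|^2\le H_\Ab+N^2$ is immediate and no a priori control of the reverse or approximate trajectories is needed.
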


\begin{theorem}[Convergence Bound of Joint Generation of Graph Structure and Node Features]\label{thm:coupled_dynamic}
    Consider the graph generation paradigm 
    in Eq.~\ref{eq:join_generation}. 
    Under Assumptions~\ref{assump:dist_score} and~\ref{assump:estimation}, we have the following results:

       $\bullet$ For exponential integration scheme, 
        {  
        \begin{align}\label{eq:joint_exp}
        & \mathrm{KL}  (\PP(\Xb_0) \| \PP(\hat{\Xb}_{0})) \lesssim 
          (H_{\Xb} + NF)e^{-T} + T \epsilon_{\Xb}^2 \notag \\
         & + NFL \left
         (L \sum_{i=1}^M\Delta t_i^2+ \sum_{i=1}^M\Delta t_i^3\right ), \notag \\
        & \mathrm{KL}  (\PP(\Ab_0) \| {\PP}(\hat{\Ab}_0)) \lesssim  (H_{\Ab} + N^2)e^{-T} + T \epsilon_{\Ab}^2 \notag \\
        & + N^2L \left
         (L \sum_{i=1}^M\Delta t_i^2+ \sum_{i=1}^M\Delta t_i^3\right ).
    \end{align}
    }
    $\bullet$ For Euler-Maruyama scheme, 
     {  
     \begin{align}\label{eq:joint_euler}
         & \mathrm{KL}  (\PP(\Xb_0) \| {\PP}(\hat{\Xb}_{0})) \lesssim  (H_{\Xb} + N^2)e^{-T} + T \epsilon_{\Xb}^2 \notag \\
        & + (NFL + NF) \sum_{i=1}^M\Delta t_i^2+ (NFL+ H_{\Xb}) \sum_{i=1}^M\Delta t_i^3, \notag \\
         & \mathrm{KL}  (\PP(\Ab_0) \| {\PP}(\hat{\Ab}_{0})) \lesssim  (H_{\Ab} + N^2)e^{-T} + T \epsilon_{\Ab}^2 \notag  \\
         & + (N^2L + N^2) \sum_{i=1}^M\Delta t_i^2+ (N^2L+ H_{\Ab}) \sum_{i=1}^M\Delta t_i^3. \notag \\
    \end{align}
    }

\end{theorem}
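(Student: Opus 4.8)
The plan is to prove both marginal bounds by running the same three-part error decomposition already used for Theorems~\ref{thm:feature_only} and~\ref{thm:structure_only}, but now carrying the full coupling in the shared score $\mathscr{F}(\Ab_t\Xb_t)$. I would first apply Girsanov's theorem to the joint reverse process $\cG_t=(\bar{\Xb}_t,\bar{\Ab}_t)$ of Eq.~\ref{eq:reverse_sde}. Because the diffusion coefficients satisfy $g_{\Xb}=g_{\Ab}\equiv 1$ and the drivers $\bar{\Wb}_{\Xb},\bar{\Wb}_{\Ab}$ are independent, the change-of-measure functional factorizes and the drift-difference integral splits additively into a feature block and a structure block. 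Together with an additive initialization term this gives
\begin{align*}
\mathrm{KL}(\PP(\cG_0)\|\PP(\hat{\cG}_0)) \lesssim \underbrace{\text{(prior mismatch)}}_{e^{-T}\text{ terms}} + \underbrace{\text{(score error)}}_{\epsilon^2\text{ terms}} + \underbrace{\text{(discretization)}}_{\Delta t\text{ terms}},
\end{align*}
and the two stated inequalities are obtained by attributing each block of this bound to the corresponding marginal $\PP(\Xb_0)$ or $\PP(\Ab_0)$.

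The prior-mismatch and score-error pieces are handled component-wise exactly as in the single-process theorems. The exponential OU convergence stated after Eq.~\ref{eq:conditional_density} controls the prior mismatch, yielding $(H_{\Xb}+NF)e^{-T}$ for the feature block and $(H_{\Ab}+N^2)e^{-T}$ for the structure block, while Assumption~\ref{assump:estimation} bounds the accumulated score-approximation error by $T\epsilon_{\Xb}^2$ and $T\epsilon_{\Ab}^2$. These contribute the first two terms of each inequality verbatim.

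The heart of the argument is the discretization block, where the coupling enters. On a step $[t_{k-1},t_k]$ the drift gap comes from freezing the score at the left endpoint; by the $L$-Lipschitz property of Assumption~\ref{assump:dist_score},
\begin{align*}
\|\mathscr{F}(\Ab_t\Xb_t)-\mathscr{F}(\Ab_{t_k}\Xb_{t_k})\| \le L\left(\|\Ab_t\|\,\|\Xb_t-\Xb_{t_k}\| + \|\Ab_t-\Ab_{t_k}\|\,\|\Xb_{t_k}\|\right).
\end{align*}
In the feature-only paradigm the second summand vanished since $\Ab_t\equiv\Ab^*$, leaving the explicit $\sigma_{\Ab}^2$ factor; here both survive. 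I would bound the one-step mean-square increments $\EE\|\Xb_t-\Xb_{t_k}\|^2$ and $\EE\|\Ab_t-\Ab_{t_k}\|^2$, which scale as $O(\Delta t_k^2)$ from the linear-drift/score contribution plus $O(\mathrm{dim}\cdot\Delta t_k)$ from the Brownian part, and combine them with moment bounds on the norms. The key point is that, since both components now diffuse toward the standard Gaussian, the explicit OU density of Eq.~\ref{eq:conditional_density} gives $\EE\|\Ab_t\|^2\lesssim H_{\Ab}+N^2$ and $\EE\|\Xb_t\|^2\lesssim H_{\Xb}+NF$; these replace the fixed-norm quantities $\sigma_{\Ab}^2$ and $\sigma_{\Xb}^2$, and after absorbing the other component's second moment into the $\lesssim$ constant the $\sigma^2$ factors disappear, producing $NFL(L\sum_i\Delta t_i^2+\sum_i\Delta t_i^3)$ and $N^2L(L\sum_i\Delta t_i^2+\sum_i\Delta t_i^3)$ for the exponential scheme of Eq.~\ref{eq:exponential}. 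For Euler--Maruyama (Eq.~\ref{eq:euler}) I would additionally account for the first-order discretization of the linear drift $\tfrac{1}{2}\bar{\Xb}_t$ (resp. $\tfrac{1}{2}\bar{\Ab}_t$), which the exponential integrator treats exactly; this yields the extra $NF\sum_i\Delta t_i^2,\,H_{\Xb}\sum_i\Delta t_i^3$ (resp. $N^2\sum_i\Delta t_i^2,\,H_{\Ab}\sum_i\Delta t_i^3$) terms.

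The main obstacle is controlling the cross term $\|\Ab_t-\Ab_{t_k}\|\,\|\Xb_{t_k}\|$ (and its structure-side counterpart) without the estimate becoming circular: the increment of $\Xb$ depends on the score, hence on $\Ab$, and vice versa. I would resolve this with a simultaneous (coupled) Gr\"onwall argument, bounding the two mean-square increments jointly on each step and then summing, so that the feedback between the feature and structure increments enters only at higher order in $\Delta t_k$ and does not inflate the leading $\sum_i\Delta t_i^2$ term. Verifying that the coupled increments stay uniformly bounded by $H_{\Xb}+NF$ and $H_{\Ab}+N^2$ over the whole horizon, rather than accumulating multiplicatively across the $M$ steps, is the delicate point that distinguishes this joint result from the two single-process theorems.
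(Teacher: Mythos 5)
Your overall skeleton (three-term decomposition into prior mismatch, score-estimation error, and discretization error, with the first two handled exactly as in the single-process theorems) matches the paper's Lemma~\ref{lemma:decomposition}-based proof. The divergence, and the problem, is in the discretization block. The paper never bounds $\|\mathscr{F}(\Ab_t\Xb_t)-\mathscr{F}(\Ab_{t_k}\Xb_{t_k})\|$ through the forward-process increments $\Xb_t-\Xb_{t_k}$ and $\Ab_t-\Ab_{t_k}$ at all: it uses the Gaussian-perturbation identity of Lemma~\ref{lemma:gaussian_perturbation_form} to write $\nabla_{\Xb}\log\PP(\cG_{t_k})=\alpha_{t,t_k}^{-1}\,\EE_{\PP_{t|t_k}}[\nabla_{\Xb}\log\PP(\cG_t)]$, applies Jensen, and then the Lipschitz bound is evaluated on $\Ab_t\Xb_t-\alpha_{t,t_k}^{-2}\Ab_t\Xb_t$ — a deterministic rescaling of a \emph{single} random variable — so the factor $(e^{2(t_k-t)}-1)\lesssim(t_k-t)$ supplies the $\Delta t$ powers directly (Lemmas~\ref{lemma:score_diff} and~\ref{lemma:score_interval_diff}). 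No cross-increment control and no Gr\"onwall argument is needed; your worry about multiplicative accumulation over the $M$ steps does not arise because all moments are taken along the explicit OU forward marginals of Eq.~\ref{eq:conditional_density}, not recursively along the discretized reverse process.

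The concrete gap in your route is the step where you "absorb the other component's second moment into the $\lesssim$ constant." The product-rule split
$\|\Ab_t\|\,\|\Xb_t-\Xb_{t_k}\|+\|\Ab_t-\Ab_{t_k}\|\,\|\Xb_{t_k}\|$
forces you, after squaring and taking expectations, to control $\EE\bigl[\|\Ab_t\|^2\|\Xb_t-\Xb_{t_k}\|^2\bigr]$. First, $\Ab_t$ and $\Xb_t-\Xb_{t_k}$ are not independent, so you need Cauchy--Schwarz and hence fourth moments, which Assumption~\ref{assump:dist_score} does not provide. Second, even granting a factorization, $\EE\|\Ab_t\|^2\lesssim H_{\Ab}+N^2$ is not an absolute constant — it scales with the graph size and the data's second moment, both of which appear explicitly elsewhere in the bound — so it cannot be hidden in $\lesssim$. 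Carried out honestly, your argument yields a feature-side discretization term of order $NFL^2(H_{\Ab}+N^2)\sum_i\Delta t_i^2$ (and symmetrically for the structure side), which is strictly weaker than the claimed $NFL^2\sum_i\Delta t_i^2$ and would change the theorem's dependence on $N$. To recover the stated bound you need the paper's rescaling representation of the score difference (or some equivalent device that avoids multiplying the two components' norms), not a coupled Gr\"onwall estimate on the increments.
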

The proofs of Theorem~\ref{thm:feature_only}, Theorem~\ref{thm:structure_only} and Theorem~\ref{thm:coupled_dynamic} can be found in the Appendix.

\subsection{Discussion and Insights.}
In this section, we provide a detailed discussion of the convergence results, highlighting several interesting insights and their implications.

\paragraph{Source of Error.} From Theorems~\ref{thm:feature_only},~\ref{thm:structure_only}, and~\ref{thm:coupled_dynamic}, we identify three primary sources of generative errors 
in SGGMs: 1) {\bf Distance to convergent distribution}: represented by the first term, $(H_{\Ab} + N^2)e^{-T}$ and $(H_{\Xb} + NF)e^{-T}$, in the convergence bounds,  this error quantifies how far the forward process has transformed the initial data distribution towards the convergent distribution. Interestingly, it is independent of the initial data distribution and is influenced by the dimensionality, second moments of the data distribution, and the length of the diffusion process; 2) {\bf Score estimation error}: captured by the second term, $T \epsilon_{\Xb}^2$ and $T \epsilon_{\Ab}^2$ in the convergence bounds, this error arises from inaccuracies in estimating the underlying score functions; 3) {\bf Discretization error}: represented by the remaining terms in the convergence bounds, this error is induced by the discretization of sampling scheme.

\paragraph{Connection with Graph Data.}
The convergence bounds reveal several interesting connections between the properties of the graph data and the convergence behavior of the SGGM. The scale of graph data is determined by two factors: the size of the graph $N$ and the dimension of the features $F$. Our convergence results show that these two factors affect convergence behavior in a non-isotropic manner. Specifically, we have the following remark:
\begin{remark} The performance of SGGMs deteriorates more significantly as the graph size increases compared to an increase in the feature dimensionality. \end{remark}

This is evident from the convergence bounds, where the bounds grow quadratically with respect to graph size but only linearly with respect to feature dimensionality. This also explains why SGGMs perform well for smaller graphs, even when the features are complex~\citep{jo2022score}. 

Additionally, the terms $\sigma_{\Ab}$ and $\sigma_{\Xb}$, which represent bounds on the norms of the graph structure and feature matrices, are significant in the feature generation and structure generation paradigms respectively. Larger $\sigma_{\Ab}$ and $\sigma_{\Xb}$ lead to a larger risk of generative errors (i.e., larger convergence bound). Based on spectral graph theory, $\sigma_{\Ab}$ has an intrinsic connection with the graph structure. For example, $\sigma_{\Ab}$ is closely related to the maximum degree of the graph~\citep{spielman2012spectral}: $$
    \sigma_{\Ab} \leq \text{max degree}(\cG).$$
Based on this connection, we make the following remark:

\begin{remark}
    SGGMs are better 
    at learning and generating graphs with uniform degree distributions (more regular-like) than graphs with heterogeneous degree distributions (e.g., power-law graphs).
\end{remark}
On the other hand, $\sigma_{\Xb}$, can be reduced with the commonly used normalization techniques, leading to a smaller convergence bound. This insight results in the following remark:
\begin{remark}
    Applying the normalization technique to $\Xb$ in the feature generation paradigm can improve the convergence bound of SGGMs.
\end{remark}

\paragraph{Hyperparameters.}
Next, we discuss the implications of the convergence results for the hyperparameters involved in SGGM learning, particularly the length of the diffusion process $T$ and the sampling step $M$. For clarity, we assume uniform discretization steps and focus on the joint generation paradigm with the exponential integrator scheme. The implications also extend to other paradigms and the Euler-Maruyama scheme (see Appendix~\ref{append:additioanl_result}).

\begin{corollary}\label{cor:hyper_selection}
    Suppose the discretization step is uniform $\Delta t_i  = T/M \leq 1, \forall i \in 1,..., M$. Then the convergence results of SGGMs under the exponential integrator scheme are given by
    {  
    \begin{align*}
         \mathrm{KL}  (\PP(\Xb_0) & \| {\PP}(\hat{\Xb}_T)) \lesssim \\
        &
          (H_{\Xb} + NF)e^{-T} + T \epsilon_{\Xb}^2 
          + \frac{NFL^2 T^2}{M},\\
         \mathrm{KL}  (\PP(\Ab_0) &  \| {\PP}(\hat{\Ab}_T)) \lesssim \\ 
        &  (H_{\Ab} + N^2)e^{-T} + T \epsilon_{\Ab}^2  + \frac{N^2L T^2}{M}.
    \end{align*}
    }
    Furthermore,  taking 
    {  
    $$T = \max \left \{ \log \left( \frac{H_{\Xb}+NF}{\epsilon_{\Xb}^2}\right ), \log \left( \frac{H_{\Xb}+N^2}{\epsilon_{\Ab}^2}\right ) \right \},$$
    $$M = \max \left \{ \frac{NFL^2T^2}{\epsilon_{\Xb}^2}, \frac{N^2L^2T^2}{\epsilon_{\Ab}^2} \right \},$$
    }
    we have that the overall generative error of SGGMs is bounded by the score estimation errors, i.e.,
    {  
    \begin{align*}
         \mathrm{KL}  (\PP(\Xb_0) \| \PP(\hat{\Xb}_T)) \lesssim \epsilon_{\Xb}^2, \quad \mathrm{KL}  (\PP(\Ab_0) \| \PP(\hat{\Ab}_T)) 
        \lesssim  \epsilon_{\Ab}^2.
    \end{align*}
    }
\end{corollary}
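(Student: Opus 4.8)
The plan is to treat Corollary~\ref{cor:hyper_selection} as a direct specialization of the joint-generation bound in Eq.~\ref{eq:joint_exp}, so the work is almost entirely bookkeeping rather than new analysis. First I would substitute the uniform step $\Delta t_i = T/M$ into the two exponential-integrator bounds of Theorem~\ref{thm:coupled_dynamic}. Since each summand is now constant, $\sum_{i=1}^M \Delta t_i^2 = M(T/M)^2 = T^2/M$ and $\sum_{i=1}^M \Delta t_i^3 = M(T/M)^3 = T^3/M^2$. The hypothesis $\Delta t_i = T/M \le 1$ is exactly what lets me discard the cubic contribution: $T^3/M^2 = (T^2/M)(T/M) \le T^2/M$, so the cubic sum is absorbed into the quadratic sum (using $L \ge 1$ so that $NFL\,T^2/M \le NFL^2\,T^2/M$). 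This collapses the feature discretization error and the structure discretization error to the single-term forms appearing in the first pair of displayed inequalities.

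Next I would fix the two free hyperparameters so that the remaining two error sources are each driven to order $\epsilon^2$. For the distance-to-prior term, I use that $e^{-T} \le \epsilon_{\Xb}^2/(H_{\Xb}+NF)$ precisely when $T \ge \log\!\big((H_{\Xb}+NF)/\epsilon_{\Xb}^2\big)$, which makes $(H_{\Xb}+NF)e^{-T} \le \epsilon_{\Xb}^2$; the analogous inequality holds for the structure component. Taking $T$ to be the maximum of the two logarithms ensures both distance terms are simultaneously $\lesssim \epsilon_{\Xb}^2$ and $\lesssim \epsilon_{\Ab}^2$ --- this is the reason a single shared $T$ must be the larger of the two requirements. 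For the discretization term, choosing $M \ge NFL^2T^2/\epsilon_{\Xb}^2$ gives $NFL^2T^2/M \le \epsilon_{\Xb}^2$, and again the max over the feature and structure requirements produces one $M$ that controls both. Substituting these choices back leaves, for each component, a distance term $\lesssim \epsilon^2$, a discretization term $\lesssim \epsilon^2$, and the score-estimation term, so the overall bound is dominated by the latter.

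The only genuine subtlety --- and the step I would be most careful about --- is the score-estimation term $T\epsilon_{\Xb}^2$, which carries the logarithmic factor $T$ and is therefore not literally $\lesssim \epsilon_{\Xb}^2$ with an absolute constant. The intended reading, consistent with the paper's user-friendly $\lesssim$ convention, is that after the choices of $T$ and $M$ the only non-negligible contribution is the (scaled) score estimation error, with the $\log$ factor $T$ absorbed; I would state this explicitly so the final conclusion $\mathrm{KL}(\PP(\Xb_0)\|\PP(\hat{\Xb}_T)) \lesssim \epsilon_{\Xb}^2$ and its structure analogue are not misread. Everything else is routine: no new probabilistic estimates are needed beyond Theorem~\ref{thm:coupled_dynamic}, and the coupling between the two SDEs enters only through the requirement that $T$ and $M$ be large enough to satisfy both component bounds at once, which is handled by the two maxima.
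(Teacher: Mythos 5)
Your proposal is correct and follows essentially the same route as the paper's own proof: substitute the uniform step into Theorem~\ref{thm:coupled_dynamic}, use $\Delta t_i \le 1$ to absorb the cubic sum into the quadratic one, and then pick $T$ and $M$ as the maxima of the two componentwise requirements. Your explicit caveat about the residual factor of $T$ in the score-estimation term $T\epsilon_{\Xb}^2$ is a fair and slightly more careful reading than the paper's ``it is easy to check,'' but it does not change the argument.
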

This corollary provides a clearer formulation for the convergence bounds and offers guidance for choosing the hyperparameters. It is evident that there is a trade-off in the length of the diffusion $T$: larger values of $T$ lead to smaller errors and better convergence to the target distribution, while a larger sampling step 
$M$ (which increases computational complexity) is required to control the sampling error.

\begin{figure*}[t!]
\subfigure[Regular Graph]{
\includegraphics[width=.2\textwidth]{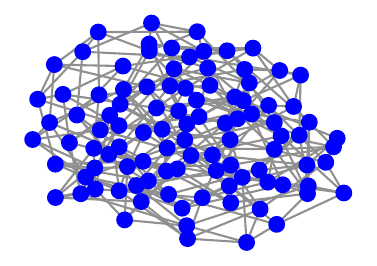}
\label{fig:regular_graph}
}
\subfigure[Power-law Graph]{
\includegraphics[width=.2\textwidth]{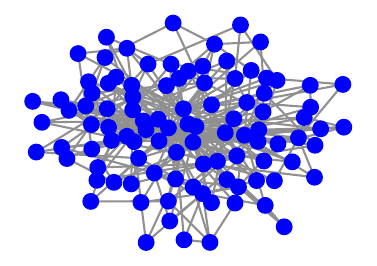}
\label{fig:power_graph}
}
\subfigure[Graph Size and Structure]{
\includegraphics[width=.245\textwidth]{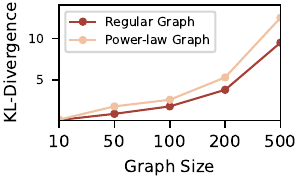}
\label{fig:struct}
}
\subfigure[Feature and Normalization]{
\includegraphics[width=.245\textwidth]{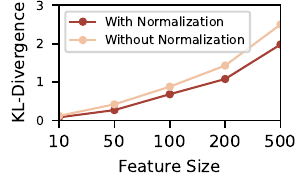}
\label{fig:feat}
}
\caption{Performance of SGGMs. Fig.~\ref{fig:regular_graph} and Fig.~\ref{fig:power_graph} are examples of regular and power-law graphs generated from our synthetic graph model. Fig.~\ref{fig:struct} plots the performance of SGGMs with respect to increasing graph size for regular and power-law graphs. The feature size in this experiment is fixed to $50$.  Fig.~\ref{fig:feat} plots the performance of SGGMs with respect to increasing feature size w./w.o. normalization. The graph size is fixed to $50$.}
\label{fig:exp}
\end{figure*}

\paragraph{Sampling Scheme.} When comparing the convergence bounds under the Euler-Maruyama scheme and the exponential integrator scheme (e.g., Eq.\ref{eq:feat_exp} vs. Eq.\ref{eq:feat_euler}), we observe that the Euler-Maruyama scheme introduces additional error terms due to higher-order discretization errors. As a result, the Euler-Maruyama scheme leads to a larger convergence bound than the exponential integrator scheme. Therefore, theoretically, the exponential integrator scheme is expected to outperform the Euler-Maruyama scheme in graph generation tasks. This finding aligns with the results in SGMs~\citep{chen2023improved}.


\section{Empirical Study}
We next present an empirical study to validate our theoretical results. Specifically, we answer the following questions:

Q1: Is SGGM more effective at learning and generating regular graphs compared to power-law graphs in the feature generation paradigm?

Q2: Does applying normalization techniques improve the convergence of SGGM in the feature generation paradigm?

Q3: Does increasing graph size result in more significant performance degradation than increasing feature size?

\paragraph{Experimental Setup.}  
We conduct controlled experiments using synthetic graph models, consistent with other theoretical studies, to examine how the performance of SGGMs varies with changes in graph size, graph structure (regular vs. power-law), feature size, and the application of normalization techniques. To generate power-law graphs, we employ the well-known Barabási-Albert model~\citep{posfai2016network}. Node features for all experiments are drawn from a Gaussian distribution with a mean of 1 and a variance of 2 (to distinguish them from the convergent distribution). The SGGM is implemented using the hyperparameters specified in the theoretical analysis, with a diffusion length $
T=100$ and sampling steps $M=500$, and a simple one-layer GCN as the score network. For simplicity, we use uniform discretization and the exponential integrator scheme to facilitate the sampling process. Each experiment generates 200 independent samples, which are then split into training, validation, and test sets in a 6:2:2 ratio. Each experiment consists of five independent trials, with the results averaged across these trials to ensure statistical robustness. Further technical details of the experiments and implementation can be found in Appendix~\ref{appendix:exp}.

\paragraph{Results.}
The experimental results, summarized in Fig.\ref{fig:exp}, provide affirmative answers to the questions (Q1-Q3) and validate our theoretical predictions. Fig.\ref{fig:struct} shows the performance of SGGMs in the graph structure generation paradigm as the graph size increases. As predicted, SGGMs perform better on regular graphs compared to power-law graphs. Fig.\ref{fig:feat} illustrates the performance of SGGMs as the feature size increases. As predicted, applying normalization improves SGGMs' performance. Comparing the trends in Fig.\ref{fig:struct} and Fig.~\ref{fig:feat}, we observe that SGGMs' performance degrades more rapidly with increasing graph size than with increasing feature size.

\section{Concluding Discussions}\label{sec:discussion}
This paper presents a novel convergence analysis for SGGMs across three common graph generation paradigms. Our analysis identifies the primary sources of generative error in SGGMs and provides valuable insights into the factors unique to graph data that influence their convergence behavior. Specifically, we examine how graph size, structure, and feature dimensionality affect the convergence bound. Additionally, we offer practical recommendations for improving model performance, including the use of normalization and the selection of key hyperparameters, such as diffusion length and sampling step size. Our empirical study using synthetic graph data validates the theoretical predictions. Overall, this work advances the theoretical foundation of score-based generative models, confirms the applicability of SGGMs in critical applications, and provides actionable insights for their effective use in graph generation tasks.

\subsection{Future Works}

{\bf Learning Process.} Our current analysis assumes a static outcome of the learning process, as specified in Assumption~\ref{assump:estimation}. However, in practice, the estimation (or learning) of score functions is subject to errors that accumulate over time as the score network learns from data. These errors are influenced by the specific behavior of the learning algorithm. Future work could incorporate the dynamics of the learning process in SGGMs, as explored in studies such as~\citep{chen2022sampling, chen2022improved, benton2024nearly, zhu2023samplecomplexityboundsscorematching}, to gain a more nuanced understanding of SGGMs’ behavior. This would be especially valuable for addressing challenges like sample complexity in SGGMs and for providing insights into their empirical performance.

\paragraph{More Precise Analysis with Synthetic Graph Model.}  
The analysis presented in this paper does not assume any specific structure for the data distribution, making it applicable to general smooth and bounded distributions. However, to make the analysis more precise, one could impose additional structural assumptions on the underlying data distribution, similar to case studies with Gaussian mixture models in SGM research~\citep{chen2024learning, shah2023learning}. A natural extension for graph data would be to assume that the graph is generated from the contextual stochastic block model~\citep{csbm}. Integrating this modeling choice could lead to more accurate convergence bounds and provide valuable insights that would require a more detailed, fine-grained analysis.





\section*{Impact Statement}
This paper presents work whose goal is to advance the understanding of score-based graph generation methods. There are many potential societal consequences of our work, none which we feel must be specifically highlighted here.

\section*{Acknowledgement}
We would like to thank the anonymous reviewers and area chairs for their helpful comments. This work was supported in part by grants from Hong Kong RGC under the contracts 17207621, 17203522, and C7004-22G (CRF).

\bibliography{reference}
\bibliographystyle{icml2025}

\onecolumn
\appendix
\section{Problem Formulation Discussion}\label{append:formulation}
In this appendix, we provide a further discussion on our problem formulation, including the choice of hyper-parameter and the derivation of the training objectives.
\subsection{Choice of Hyper-parameter}
For our analysis, we have chosen the following set of hyper-parameter:
\begin{align}
    f_{\Xb}(\cG_t,t) &= -1/2 g_{\Xb}(\cG_t,t)^2\Xb_t, \notag \\
    \quad f_{\Ab}(\cG_t,t) & = -1/2 g_{\Ab}(\cG_t,t)^2 \Ab_t, \notag \\
    g_{\Xb}(\cG_t,t) &= g_{\Ab}(\cG_t,t) \equiv 1.
\end{align}
As mentioned in the main paper, our analysis can be generalize further to any linear drift function $f_{\Xb}(\cG_t,t)$ and $f_{\Ab}(\cG_t,t)$ and other constant variance functions so long as the underlying process remain a OU process and would converge to the prior distribution (i.e., the convergent distribution)

Without loss of generality, we can express the set of linear function and constant variance function as,
\begin{align}
    f_{\Xb}(\cG_t,t) &= \xi g_{\Xb}(\cG_t,t)^2\Xb_t, \notag \\
    \quad f_{\Ab}(\cG_t,t) & = \xi g_{\Ab}(\cG_t,t)^2 \Ab_t, \notag \\
    g_{\Xb}(\cG_t,t) &= g_{\Ab}(\cG_t,t) \equiv \kappa,
\end{align}
where $\xi \in (-1,0)$ and $\kappa$ is some positive constant. 

The selection of a constant variance function does not result in any loss of generality. This is because altering the variance function can be viewed as a re-scaling of time, provided that the drift function does not explicitly depend on time. In other words, changing the variance function only affects the rate at which the diffusion process evolves, but this transformation is effectively equivalent to adjusting the temporal scale of the process. Consequently, the analysis remains valid even if the variance function is modified, as long as the time re-scaling is appropriately accounted for.

By changing the coefficient in the linear drift function, this amounts to a different conditional density for the forward process and score:
\begin{align}
    \Xb_t | \Xb_0 \sim \NN(e^{\xi t}\Xb_0, (1-e^{-2\xi t})\Ib_{N\times F}), \notag \\
    \Ab_t | \Ab_0 \sim \NN(e^{\xi t}\Ab_0, (1-e^{-2\xi t})\Ib_{N\times N}).
\end{align}
Since all the theoretical tools we use for the analysis are grounded in the OU process and independent of the scale of the hyper-parameters, all the results and analysis can be immediately applied with the general formulation above. However, in the paper, we strike for cleaner results for better interpretability and focus on the set of hyper-parameters specified in the paper. 
\subsection{Training Objective Derivation}
In this appendix, we present a derivation for the Equations used in the problem formulation for completeness. A similar derivation can be found in ~\citep{jo2022score}.

The partial score functions can be estimated by training the time-dependent score-based models $s_{\theta}(.)$ and $s_{\phi}(.)$, so that 
$$s_{\theta}(\cG_t,t) \approx \nabla_{\Xb} \log \PP(\cG_t), s_{\phi}(\cG_t,t) \approx \nabla_{\Ab} \log \PP(\cG_t).$$
However, the objectives introduced in SGM for estimating the score function are not directly applicable here, since the partial score functions are defined as the gradient of each component, rather than the gradient of the data as in the conventional score function. This interdependence between the two diffusion processes tied by the partial scores adds another layer of difficulty.

To address this issue, an new objective for estimating the partial scores is needed. Intuitively, the score-based models should be trained to minimize the distance to the corresponding ground-truth partial scores. The following new objectives generalize score matching \cite{song2020score} to the estimation of partial scores for the given graph dataset, as follows:
\begin{align}\label{eq:SGGM_score_objective}
    \min_{\theta} \mathbb{E}_t \bigg[  \mathbb{E}_{\cG_0} \mathbb{E}_{\cG_t | \cG_0} \big\| s_{\theta, t}(\cG_t) - \nabla_{\Xb} \log \PP(\cG_t) \big\|_2^2 \bigg],\\
    \min_{\phi} \mathbb{E}_t \bigg[ \, \mathbb{E}_{\cG_0} \mathbb{E}_{\cG_t | \cG_0} \big\| s_{\phi, t}(\cG_t) - \nabla_{\Ab} \log \PP(\cG_t) \big\|_2^2 \bigg],
\end{align}

where $t$ is uniformly sampled from $[0, T]$. The expectations are taken over samples $\cG_0 \sim p_{\text{data}}$ and $\cG_t \sim \PP(\cG_t | \cG_0)$, where $\PP(\cG_t | \cG_0)$ denotes the transition distribution from $0$ to $t$ induced by the forward diffusion process.

Unfortunately, the equations above are still not directly trainable since the ground-truth partial scores are not analytically accessible in general. This is why we need to underlying process to be an OU process, as we can leverage the known conditional density of OU process for training. 

\begin{align}
    \min_{\theta} \mathbb{E}_t \bigg[ \mathbb{E}_{\cG_0} \mathbb{E}_{\cG_t | \cG_0} \big\| s_{\theta, t}(\cG_t,t) - \nabla_{\Xb} \log \PP(\cG_t | \cG_0) \big\|_2^2 \bigg],\\ 
    \min_{\phi} \mathbb{E}_t \bigg[ \mathbb{E}_{\cG_0} \mathbb{E}_{\cG_t | \cG_0} \big\| s_{\bm{\phi}}(\cG_t,t) - \nabla_{\Ab} \log \PP(\cG_t | \cG_0) \big\|_2^2 \bigg].
\end{align}

Since the drift coefficient of the forward diffusion process is linear, the transition distribution $\PP(\cG_t | \cG_0)$ can be separated in terms of $\Xb_t$ and $\Ab_t$ as follows:

\begin{equation}
    \PP(\cG_t | \cG_0) = \PP(\Xb_t | \Xb_0) \PP(\Ab_t | \Ab_0).
\end{equation}

Notably, we can easily sample from the transition distributions of each component, $\PP(\Xb_t | \Xb_0)$ and $\PP(\Ab_t | \Ab_0)$, as they are Gaussian distributions with mean and variance determined by the coefficients of the forward diffusion process. This leads to the following training objective:
\begin{align}
        \min_{\theta} \mathbb{E}_t \bigg[ \mathbb{E}_{\cG_0} \mathbb{E}_{\cG_t | \cG_0} \big\| s_{\bm{\theta}}(\cG_t,t) - \nabla_{\Xb} \log \PP(\Xb_t | \Xb_0) \big\|_2^2 \bigg],\\
            \min_{\phi} \mathbb{E}_t \bigg[  \mathbb{E}_{\cG_0} \mathbb{E}_{\cG_t | \cG_0} \big\| s_{\bm{\phi}}(\cG_t,t) - \nabla_{\Ab} \log \PP(\Ab_t | \Ab_0) \big\|_2^2 \bigg].
\end{align}
The expectations in the equation above can be efficiently computed using the Monte Carlo estimate with the samples $(t, \cG_0, \cG_t)$. Note that estimating the partial scores is not equivalent to estimating $\nabla_{\Xb} \log \PP(\Xb_t)$ or $\nabla_{\Ab} \log \PP(\Ab_t)$, the main objective of previous score-based generative models, since estimating the partial scores requires capturing the dependency between $\Xb_t$ and $\Ab_t$ determined by the joint probability through time. 

\subsubsection{Derivation of Training Objective~\ref{eq:SGGM_score_objective}}
The original score matching objective can be written as follows:
\begin{align*}
    \mathbb{E}_{\cG_t} \left[ \| s_{\bm{\theta}}(\cG_t, t) - \nabla_{\Xb} \log \PP(\cG_t) \|_2^2 \right] = \mathbb{E}_{\cG_t} \left[ \| s_{\bm{\theta}}(\cG_t, t) \|_2^2 \right] - 2 \mathbb{E}_{\cG_t} \left[ \langle s_{\bm{\theta}}(\cG_t, t), \nabla_{\Xb} \log \PP(\cG_t) \rangle \right] + C_1,
\end{align*}

where \( C_1 \) is a constant that does not depend on \( \Wb \). On the other hand, we have 
\begin{align*}
    \mathbb{E}_{\cG_t} \mathbb{E}_{\cG_t | \cG_0} \left[ \| s_{\bm{\theta}}(\cG_t, t) - \nabla_{\Xb} \log \PP (\cG_t | \cG_0) \|_2^2 \right] = \mathbb{E}_{\cG_t} \mathbb{E}_{\cG_t | \cG_0} \left[ \| s_{\bm{\theta}}(\cG_t, t) \|_2^2 \right] - 2 \mathbb{E}_{\cG_t} \mathbb{E}_{\cG_t | \cG_0} \left[ \langle s_{\bm{\theta}}(\cG_t, t), \nabla_{\Xb} \log \PP (\cG_t | \cG_0) \rangle \right] + C_2,    
\end{align*}

For the second term, from the derivation (Appendix A.1 from ~\citep{jo2022score}), we know that it has the following equivalency:
\begin{align*}
    \mathbb{E}_{\cG_t} \left[ \langle s_{\bm{\theta}}(\cG_t, t), \nabla_{\Xb} \log \PP(\cG_t) \rangle \right]      =   \mathbb{E}_{\cG_t} \mathbb{E}_{\cG_t | \cG_0} \left[ \langle s_{\bm{\theta}}(\cG_t, t), \nabla_{\Xb} \log \PP (\cG_t | \cG_0) \rangle \right]
\end{align*}

Since the constant $C_1$ and $C_2$ does not affect the optimization results, we can conclude that the following two objectives are equivalent with respect to ${\bm{\theta}}$
\begin{align*}
    \mathbb{E}_{\cG_t} \mathbb{E}_{\cG_t | \cG_0} \left[ \| s_{\bm{\theta}}(\cG_t, t) - \nabla_{\Xb} \log \PP(\cG_t | \cG_0) \|_2^2 \right]\\
    \mathbb{E}_{\cG_t} \left[ \| s_{\bm{\theta}}(\cG_t, t) - \nabla_{\Xb} \log \PP(\cG_t) \|_2^2 \right] 
\end{align*}

Similarly, computing the gradient with respect to \( \Ab \), we can show that the following two objectives are also equivalent with respect to \( \bm{\phi} \):
\begin{align*}
    \mathbb{E}_{\cG_t} \mathbb{E}_{\cG_t | \cG_0} \left[ \| s_{\bm{\phi}}(\cG_t, t) - \nabla_{\Ab} \log \PP(\cG_t | \cG_0) \|_2^2 \right]\\
    \mathbb{E}_{\cG_t} \left[ \| s_{\bm{\phi}}(\cG_t, t) - \nabla_{\Ab} \log \PP(\cG_t) \|_2^2 \right] 
\end{align*}

Now, it remains to show that $\nabla_{\Xb} \log \PP(\cG_t | \cG_0)$ is equivalent to \( \nabla_{\Xb} \log \PP(\Xb_t | \Xb_0) \). Using the chain rule, we get that
\begin{align*}
    \frac{\partial \log \PP(\Ab_t | \Ab_0)}{\partial (\Xb_t)_{ij}} = \Tr \left[ \nabla_{\Ab} \log \PP(\Ab_t | \Ab_0) \frac{\partial \Ab_t}{\partial (\Xb_t)_{ij}} \right] = 0.
\end{align*}
With this result, we have that,
\begin{align*}
    \nabla_{\Xb} \log \PP(\cG_t | \cG_0) = \nabla_{\Xb} \log \PP(\Xb_t | \Xb_0) + \nabla_\Xb \log \PP(\Ab_t | \Ab_0)  = \nabla_{\Xb} \log \PP(\Xb_t | \Xb_0).
\end{align*}

Therefore, we can conclude that 
\begin{align*}
    \nabla_{\Xb} \log \PP(\cG_t | \cG_0) = \nabla_{\Xb} \log \PP(\Xb_t | \Xb_0) 
\end{align*}

With a similar computation for \( \Ab_t \), we can also show that \( \nabla_{\Ab} \log \PP(\cG_t | \cG_0) \) is equal to \( \nabla_{\Ab} \log \PP(\Ab_t | \Ab_0) \).

\subsubsection{Tractable Training Objective}
In the previous section, we have proved the equivalence of the training objective used in our analysis and the common score-based generative model. It remains to show how we compute this training objective with tractable objects. To simplify the notation, we define the following for the rest of appendix, for any $0 \leq t \leq s \leq T$
\begin{align*}
    \sigma_t^2 &:= 1 - e^{-t}, \\
    \alpha_{t,s} &:= e^{-1/2(s-t)},\\ 
    \alpha_{t} &:= \alpha_{0,t}.
\end{align*}
Then, using the ideas in ~\citep{vincent2011connection}, we get the following,
\begin{align*}
     &\EE \|s_{\bm{\theta}}(\cG_t,t) - \nabla_{\Xb} \PP(\cG_t) \| ^2 \\
    & \quad = \EE \| s_{\bm{\theta}}(\cG_t,t)  \|^2 + \EE\|\nabla_{\Xb} \PP(\cG_t) \|^2 - 2 \EE \langle s_{\bm{\theta}}(\cG_t, t), \nabla_{\Xb} \log \PP(\cG_t) \rangle \\
    & \quad  = \EE \| s_{\bm{\theta}}(\cG_t,t)  \|^2 + \EE\|\nabla_{\Xb} \PP(\cG_t) \|^2 - 2 \EE \nabla \cdot s_{\bm{\theta}}(\cG_t,t) \\
     & \quad  = \EE \| s_{\bm{\theta}}(\cG_t,t)  \|^2 + \EE\|\nabla_{\Xb} \PP(\cG_t) \|^2 - 2 \EE_{\PP(\cG_0)} \EE_{\PP(\cG_t|\cG_0)} \nabla \cdot s_{\bm{\theta}}(\cG_t,t) \\
     & \quad  = \EE \| s_{\bm{\theta}}(\cG_t,t)  \|^2 + \EE\|\nabla_{\Xb} \PP(\cG_t) \|^2 - 2 \EE_{\PP(\cG_0)} \EE_{\PP(\cG_t|\cG_0)} \langle \nabla_{\Xb} \log \PP(\cG_t|\cG_0),s_{\bm{\theta}}(\cG_t,t)\rangle \\
    & \quad  = \EE \| s_{\bm{\theta}}(\cG_t,t)  \|^2 + \EE\|\nabla_{\Xb} \PP(\cG_t) \|^2 - 2 \EE_{\PP(\cG_0)} \EE_{\PP(\cG_t|\cG_0)} \langle \nabla_{\Xb} \log \PP(\Xb_t|\Xb_0),s_{\bm{\theta}}(\cG_t,t)\rangle \\
    & \quad  = \EE \| s_{\bm{\theta}}(\cG_t,t)  \|^2 + \EE\|\nabla_{\Xb} \PP(\cG_t) \|^2 - 2 \EE_{\PP(\cG_0)} \EE_{\PP(\cG_t|\cG_0)} \left \langle \frac{\Xb_t-\alpha_t\Xb_0}{\sigma_t^2},s_{\bm{\theta}}(\cG_t,t) \right \rangle \\
    & \quad  = \EE \left \| s_{\bm{\theta}}(\cG_t,t) - \frac{\Xb_t-\alpha_t\Xb_0}{\sigma_t^2} \right \|^2 + \EE\|\nabla_{\Xb} \PP(\cG_t) \|^2 - \frac{d}{\sigma_t^2} \\
    & \quad  = \EE \left \| s_{\bm{\theta}}(\cG_t,t) - \frac{\Xb_t-\alpha_t\Xb_0}{\sigma_t^2} \right \|^2 + C_3
\end{align*}
where $C_3$ is some constant independent of $\bm{\theta}$. Therefore, we can use the last equation as the training objective. Through a similar computation, we get that,
\begin{align*}
    \EE \|s_{\bm{\phi}}(\cG_t,t) - \nabla_{\Ab} \PP(\cG_t) \| ^2 = \EE \left \| s_{\bm{\phi}}(\cG_t,t) - \frac{\Ab_t-\alpha_t\Ab_0}{\sigma_t^2} \right \|^2 + C_4
\end{align*}

\subsubsection{Discussion of Assumption~\ref{assump:estimation}}
In this section, we present an argument for why Assumption~\ref{assump:estimation} is reasonable and hold in practice. We can observe that 
\begin{align*}
    \EE \left \| \frac{\Xb-\alpha_t\Xb_0}{\sigma_t^2} \right \| = \frac{1}{\sigma_t^2}.
\end{align*}
Therefore, it is natural to expect the error to scale as 
\begin{align*}
    \EE \left \| s_{\bm{\theta}}(\cG_t,t) - \nabla_{\Xb} \log \PP(\cG_t)\right \|^2 \lesssim \frac{\delta_{\Xb}^2}{\sigma_t^2}.
\end{align*}
for some $\delta_{\Xb}$. Furthermore, notice that $\sigma_{t_k}^2 \simeq \min \{t_k,1\}$, then we have,
\begin{align*}
    \frac{1}{T}\sum_{i=1}^T \Delta_{t_{i}} \EE \| s_{\bm{\theta}}(\cG_t,t) - \nabla_{\Xb} \log \PP(\cG_t) \|^2 \lesssim \frac{1}{T}\int_{t_1}^{T} \frac{\delta_{\Xb}^2}{t \wedge 1} \dd t \lesssim \delta_{\Xb}^2 \log(1/t_1).
\end{align*}

Through a symmetric argument, we can get a similar result for the structure process. 
\section{Useful Lemmas}
In this appendix, we present a set of useful results that are going to be used in the proof of the main theorem. The proof of some of the results can also be found in other diffusion convergent analysis~\citep{chen2023improved,chen2022sampling,zhu2023samplecomplexityboundsscorematching,lee2023convergencescorebasedgenerativemodeling,gnaneshwar2022score}. We present the proof for these results for completeness.

To simplify the notation a bit, in the following, we use $\PP_t$ and $\PP(\Xb_t)$ interchangeably to represent the density of $\Xb$ at time $t$. We use $\PP_{t|t'}$ and $\PP(\Xb_t|\Xb_{t'})$ interchangeably to represent the condition density of $\Xb$ at time $t$ given $t'$. In addition, we adopt the Frobenius norm as the matrix norm.

\begin{lemma}\label{lemma:kl_der}
    Given two It\^{o} processes coupled by the same initial condition and random noise as follows,
    \begin{align*}
        \mathrm{d} \cX_t = f_1(\cX_t,t) \mathrm{d} t + g(t) \mathrm{d} \Wb_t, \quad \cX_0 = \bm{\gamma}, \\
         \mathrm{d} \cX'_t = f_2(\cX'_t,t) \mathrm{d} t + g(t) \mathrm{d} \Wb_t, \quad \cX'_0 = \bm{\gamma}, 
    \end{align*}
    where  $f_1,f_2, g$ are continuous function. Furthermore, suppose the two SDEs satisfy the following conditions,
    \begin{enumerate}
        \item the two SDEs have unique solutions, 
        \item  $\cX_t,\cX'_t$ admit densities $\PP_t, \QQ_t$ that are twice continuously differentiable with respect to inputs $t > 0$.
    \end{enumerate}
    Then, we denote the relative Fisher information between $\PP_t$ and $\QQ_t$ by 
    \begin{align*}
         J(\PP_t \| \QQ_t) = \int \PP_t(\cX) \left \| \nabla \log \frac{\PP_t}{\QQ_t}\right \|^2 \mathrm{d} \Xb .
    \end{align*}
    Then for any $t >0 $, the time derivative of $\mathrm{KL}(\PP_t \| \QQ_t)$ is given by,
    \begin{align*}
        \frac{\mathrm{d}}{\mathrm{d}t} \mathrm{KL}(\PP_t \| \QQ_t) = -g(t)^2 J(\PP_t \| \QQ_t) + \EE \left[ \left \langle f_1(\cX_t,t) - f_2(\cX_t,t), \nabla \log \frac{\PP_t}{\QQ_t} \right \rangle \right ]
    \end{align*}
\end{lemma}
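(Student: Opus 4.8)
The plan is to work with the two marginal laws and their Fokker--Planck (Kolmogorov forward) equations, rather than with the pathwise coupling: since $\mathrm{KL}(\PP_t\|\QQ_t)$ depends only on the marginal densities, the shared initial condition and shared Brownian motion are immaterial for \emph{this} computation (they matter elsewhere in the analysis). Because the diffusion coefficient $g(t)$ is common and state-independent, the hypothesized existence, uniqueness and smoothness of the solutions give $\partial_t \PP_t = -\nabla\cdot(f_1 \PP_t) + \tfrac{1}{2}g(t)^2\Delta \PP_t$ and $\partial_t \QQ_t = -\nabla\cdot(f_2 \QQ_t) + \tfrac{1}{2}g(t)^2\Delta \QQ_t$, and these are the only inputs I need.

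First I would differentiate $\mathrm{KL}(\PP_t\|\QQ_t)=\int \PP_t\log(\PP_t/\QQ_t)\,\mathrm{d}\Xb$ under the integral. Writing $\tfrac{\mathrm{d}}{\mathrm{d}t}\log(\PP_t/\QQ_t)=\partial_t\PP_t/\PP_t-\partial_t\QQ_t/\QQ_t$ and using conservation of mass $\int\partial_t\PP_t\,\mathrm{d}\Xb=0$, the $\partial_t\PP_t/\PP_t$ contribution cancels and I am left with $\tfrac{\mathrm{d}}{\mathrm{d}t}\mathrm{KL}=\int(\partial_t\PP_t)\log\tfrac{\PP_t}{\QQ_t}\,\mathrm{d}\Xb-\int \PP_t\,\tfrac{\partial_t\QQ_t}{\QQ_t}\,\mathrm{d}\Xb$.

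Next I would substitute the two Fokker--Planck equations and integrate by parts, setting $h=\PP_t/\QQ_t$ and repeatedly using $\nabla h=h\nabla\log h$ and $\nabla\PP_t=\PP_t\nabla\log\PP_t$. The two first-order (drift) contributions become $\EE_{\PP_t}[\langle f_1,\nabla\log h\rangle]$ and $-\EE_{\PP_t}[\langle f_2,\nabla\log h\rangle]$, which combine into the inner-product term $\EE[\langle f_1(\cX_t,t)-f_2(\cX_t,t),\nabla\log\tfrac{\PP_t}{\QQ_t}\rangle]$ of the statement. After one integration by parts each, the two second-order (diffusion) contributions become $-\tfrac12 g^2\int \PP_t\,\langle\nabla\log\PP_t,\nabla\log h\rangle\,\mathrm{d}\Xb$ and $+\tfrac12 g^2\int \PP_t\,\langle\nabla\log\QQ_t,\nabla\log h\rangle\,\mathrm{d}\Xb$; adding them and using $\nabla\log\QQ_t-\nabla\log\PP_t=-\nabla\log h$ collapses the pair to $-\tfrac12 g(t)^2\int \PP_t\|\nabla\log h\|^2\,\mathrm{d}\Xb$, i.e. precisely the negative relative-Fisher-information term $-g(t)^2 J(\PP_t\|\QQ_t)$ of the claim (the numerical constant being fixed by the adopted Fokker--Planck normalization).

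The hard part will not be the algebra but its analytic justification: I must legitimize differentiation under the integral sign and, above all, verify that every boundary term generated by the integrations by parts vanishes at spatial infinity. This is where the regularity assumptions (twice continuous differentiability of $\PP_t,\QQ_t$ for $t>0$) together with the decay/bounded-second-moment properties inherited from the OU-type dynamics are essential. Concretely, I would carry out each integration by parts on an expanding ball $B_R$ and argue that the surface integrals of quantities such as $f_i\PP_t\log h$ and $\nabla\PP_t\cdot\nabla\log h$ tend to $0$ as $R\to\infty$; this step also implicitly requires that $J(\PP_t\|\QQ_t)$ and the inner-product expectation be finite, so that both sides of the identity are well defined in the first place.
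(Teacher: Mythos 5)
Your proposal follows essentially the same route as the paper's proof: differentiate the KL integral in time, substitute the two Fokker--Planck equations, integrate by parts, and combine the drift contributions into the inner-product term and the diffusion contributions into the relative Fisher information. Your added care about boundary terms and differentiation under the integral is a strengthening the paper omits, and your observation that the diffusion term naturally comes out as $-\tfrac{1}{2}g(t)^2 J(\PP_t\|\QQ_t)$ (with the constant fixed by the Fokker--Planck normalization) is consistent with the paper's own intermediate computation, which also produces the factor $\tfrac{1}{2}$ before the final line absorbs it.
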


\begin{proof}
    By definition of KL divergence, we have that
    \begin{align*}
        \mathrm{KL}(\PP_t\|\QQ_t) &= \int \PP_t(\cX) \log \left(\frac{\PP_t(\cX)}{\QQ_t(\cX)} \right) d\cX.
    \end{align*}
    Taking the time derivative of the expression above, we obtain,
    \begin{align*}
        \pdv{}{t}\mathrm{KL}(\PP_t\|\QQ_t) &= \pdv{}{t} \left[ \int \PP_t(\cX) \log \left(\frac{\PP_t(\cX)}{\QQ_t(\cX)} \right) d\cX \right ] \\
        & = \int \pdv{\PP_t(\cX)}{t} \log \left(\frac{\PP_t(\cX)}{\QQ_t(\cX)} \right) d\cX -  \int \frac{\PP_t(\cX)}{\QQ_t(\cX)} \pdv{ \QQ_t(\cX)}{t} d\cX
    \end{align*}
    Then, we can use Fokker-Plank equation to obtain the time derivatives of $\PP_t$ and $\QQ_t$ which are given by
    \begin{align*}
        \pdv{\PP_t(\cX)}{\cX} & = \nabla \cdot \left [ -f_1(\cX,t)\PP_t(\cX) + \frac{g(t)^2}{2}\nabla \PP_t(\cX)\right ], \\
        \pdv{\QQ_t(\cX)}{\cX} & = \nabla \cdot \left [ -f_2(\cX,t)\QQ_t(\cX) + \frac{g(t)^2}{2}\nabla \QQ_t(\cX)\right ].
    \end{align*}
    Substituting the time derivatives of $\PP_t$ and $\QQ_t$ into the corresponding terms of the time derivative of $\pdv{}{t}\mathrm{KL}(\PP_t\|\QQ_t)$, we get that,
    \begin{align*}
        \int \pdv{\PP_t(\cX)}{t} \log \left(\frac{\PP_t(\cX)}{\QQ_t(\cX)} \right) d\cX & =  \int \nabla \cdot \left [ -f_1(\cX,t)\PP_t(\cX) + \frac{g(t)^2}{2}\nabla \PP_t(\cX)\right ] \log \left(\frac{\PP_t(\cX)}{\QQ_t(\cX)} \right) d\cX \\
        & = \int \left \langle \nabla \log \left(\frac{\PP_t(\cX)}{\QQ_t(\cX)}\right), \PP_t(\cX)f_1(\cX,t) - \frac{g(t)^2}{2} \nabla\PP_t(\cX)  \right \rangle  d\cX,\\
        & = \int \PP_t(\cX) \left \langle f_1(\cX,t),  \nabla \log \left(\frac{\PP_t(\cX)}{\QQ_t(\cX)}\right)  \right \rangle -  \frac{g(t)^2}{2} \left \langle \nabla \log \left(\frac{\PP_t(\cX)}{\QQ_t(\cX)}\right), \nabla \PP_t(\cX) \right \rangle d\cX.
    \end{align*}

    \begin{align*}
        \int \frac{\PP_t(\cX)}{\QQ_t(\cX)} \pdv{ \QQ_t(\cX)}{t} d\cX & = \int \frac{\PP_t(\cX)}{\QQ_t(\cX)} \nabla \cdot \left [ -f_2(\cX,t)\QQ_t(\cX) + \frac{g(t)^2}{2}\nabla \QQ_t(\cX) \right ] d\cX, \\
        & = \int \left \langle \nabla \frac{\PP_t(\cX)}{\QQ_t(\cX)}, f_2(\cX,t)\QQ_t(\cX)-\frac{g(t)^2}{2}\nabla \QQ_t(\cX) \right \rangle,\\
        & = \int \QQ_t(\cX) \left \langle \nabla \frac{\PP_t}{\QQ_t}, f_2(\cX,t) \right \rangle - \frac{g(t)^2}{2}\left \langle \nabla \frac{\PP_t(\cX)}{\QQ_t(\cX)}, \nabla \QQ_t(\cX) \right \rangle d\cX.
    \end{align*}
    Combining the results above, we obtain,
    \begin{align*}
        \pdv{}{t}\mathrm{KL}(\PP_t\|\QQ_t) &= \pdv{}{t} \left[ \int \PP_t(\cX) \log \left(\frac{\PP_t(\cX)}{\QQ_t(\cX)} \right) d\cX \right ], \\
        & = \int \pdv{\PP_t(\cX)}{t} \log \left(\frac{\PP_t(\cX)}{\QQ_t(\cX)} \right) d\cX -  \int \frac{\PP_t(\cX)}{\QQ_t(\cX)} \pdv{ \QQ_t(\cX)}{t} d\cX, \\
        & = \int \PP_t(\cX) \left \langle f_1(\cX,t),  \nabla \log \left(\frac{\PP_t(\cX)}{\QQ_t(\cX)}\right)  \right \rangle -  \frac{g(t)^2}{2} \left \langle \nabla \log \left(\frac{\PP_t(\cX)}{\QQ_t(\cX)}\right), \nabla \PP_t(\cX) \right \rangle d\cX - ...\\
        & \quad \int \QQ_t(\cX) \left \langle \nabla \frac{\PP_t}{\QQ_t}, f_2(\cX,t) \right \rangle + \frac{g(t)^2}{2}\left \langle \nabla \frac{\PP_t(\cX)}{\QQ_t(\cX)}, \nabla \QQ_t(\cX) \right \rangle d\cX, \\
        & = \frac{g(t)^2}{2} \left ( \int \left \langle \nabla \frac{\PP_t(\cX)}{\QQ_t(\cX)}, \nabla \QQ_t(\cX) \right \rangle - \left \langle \nabla \log \left(\frac{\PP_t(\cX)}{\QQ_t(\cX)}\right), \nabla \PP_t(\cX) \right \rangle d\cX \right ) + ...\\
        & \quad \int \PP_t(\cX) \left \langle f_1(\cX,t), \nabla \log \frac{\PP_t(\cX)}{\QQ_t(\cX)} \right \rangle - \QQ_t(\cX) \left \langle \nabla \frac{\PP_t(\Xb)}{\QQ_t(\Xb)}, f_2(\cX,t) \right \rangle d\cX,
    \end{align*}
Notice that 
\begin{align*}
    \int &  \left \langle \nabla \frac{\PP_t(\cX)}{\QQ_t(\cX)}, \nabla \QQ_t(\cX) \right \rangle - \left \langle \nabla \log \left(\frac{\PP_t(\cX)}{\QQ_t(\cX)}\right), \nabla \PP_t(\cX) \right \rangle d\cX \\
    & = \int \left \langle \frac{\QQ_t(\cX) \nabla \PP_t(\cX)- \PP_t(\cX) \nabla \QQ_t(\cX)}{\QQ_t(\cX)}, \nabla \log \QQ_t(\cX) \right \rangle - \PP_t \left \langle \nabla \log \frac{\PP_t(\cX)}{\QQ_t(\cX)}, \nabla \log \PP_t(\cX) \right \rangle, \\
    & = \int \PP_t(\cX) \left \langle \nabla \log \frac{\PP_t(\cX)}{\QQ_t(\cX)}, \nabla \log \QQ_t(\cX) \right \rangle - \PP_t \left \langle \nabla \log \frac{\PP_t(\cX)}{\QQ_t(\cX)}, \nabla \log \PP_t(\cX) \right \rangle d\cX,\\
    & = -\mathrm{J}(\PP_t(\cX)\| \QQ_t(\cX)).
\end{align*}
In addition, we have 
\begin{align*}
    \int & \PP_t(\cX) \left \langle f_1(\cX,t), \nabla \log \frac{\PP_t(\cX)}{\QQ_t(\cX)} \right \rangle - \QQ_t(\cX) \left \langle \nabla \frac{\PP_t(\Xb)}{\QQ_t(\Xb)}, f_2(\cX,t) \right \rangle d\cX \\
    &= \int \PP_t(\cX) \left \langle f_1(\cX,t), \nabla \log \frac{\PP_t(\cX)}{\QQ_t(\cX)} \right \rangle - \PP_t(\cX) \left \langle \nabla \log \frac{\PP_t(\Xb)}{\QQ_t(\Xb)}, f_2(\cX,t) \right \rangle d\cX \\
    & = \int \PP_t(\cX) \left \langle \nabla \log \frac{\QQ_t(\cX)}{\PP_t(\cX)}, f_1(\cX,t) - f_2(\cX,t) \right \rangle\\
    & = \EE \left [ \left \langle f_1(\cX,t)-f_2(\cX,t), \nabla \log \frac{\QQ_t(\cX)}{\PP_t(\cX)} \right \rangle \right ]
\end{align*}
Combining all the results above, we get that, 
\begin{align*}
    \pdv{}{t}\mathrm{KL}(\PP_t\|\QQ_t) = -g(t)^2 \mathrm{J}(\PP_t(\cX)\| \QQ_t(\cX)) + \EE \left [ \left \langle f_1(\cX,t)-f_2(\cX,t), \nabla \log \frac{\QQ_t(\cX)}{\PP_t(\cX)} \right \rangle \right ]
\end{align*}
This completes the proof.
\end{proof}

The next two lemmas capture the properties of Gaussian perturbation in the forward process.

\begin{lemma}\label{lemma:gaussian_perturbation_form}
    Let \( \PP \) be a probability measure on \( \mathbb{R}^{N \times M} \). Consider the Gaussian perturbation of \( \PP \) that admits a density \( \PP_{\mu, \sigma}(\Xb) \), where \( \Xb \in \mathbb{R}^{N \times M} \). Specifically, we define
\[
\PP_{\mu, \sigma}(\Xb) \propto \int_{\mathbb{R}^{N \times M}} \exp \left( - \frac{\| \Xb - \mu \Yb \|_F^2}{2 \sigma^2} \right) \mathrm{d}\PP(\Yb)
\]
where \( \| \cdot \|_F \) is the Frobenius norm. Let \( {\PP}_{\mu, \sigma}(\Yb|\Xb) \) be the conditional probability measure given \( \Xb \), defined as

\[
\mathrm{d}{P}_{\mu, \sigma}(\Yb|\Xb) \propto \exp \left( - \frac{\| \Xb - \mu \Yb \|_F^2}{2 \sigma^2} \right) \mathrm{d}\PP(\Yb)
\]

 If \( \PP \) admits a density in \(  C^1(\mathbb{R}^{N \times M}) \), we have
\[
\nabla_{\Xb} \log \PP_{\mu, \sigma}(\Xb) = \frac{1}{\mu} \mathbb{E}_{\PP_{\mu, \sigma}(\Yb|\Xb)} \left[ \nabla_{\Yb} \log \PP(\Yb) \right]
\]
\end{lemma}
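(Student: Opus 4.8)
The plan is to reduce the claim to a Tweedie-type integration-by-parts identity, exploiting the fact that the Gaussian kernel depends on $\Xb$ and $\Yb$ only through the combination $\Xb-\mu\Yb$. Write $p$ for the $C^1$ density of $\PP$ and set $k(\Xb,\Yb) := \exp\!\big(-\|\Xb-\mu\Yb\|_F^2/(2\sigma^2)\big)$, so that $\PP_{\mu,\sigma}(\Xb) = Z^{-1}\int k(\Xb,\Yb)\,p(\Yb)\,\mathrm{d}\Yb$ for the $\Xb$-independent Gaussian normalizer $Z>0$, and $\PP_{\mu,\sigma}(\Yb|\Xb) = k(\Xb,\Yb)p(\Yb)\big/\!\int k(\Xb,\Yb')p(\Yb')\,\mathrm{d}\Yb'$. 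The single structural observation driving everything is that, since $k$ is a function of $\Xb-\mu\Yb$,
\[
\nabla_\Xb k(\Xb,\Yb) = -\frac{1}{\mu}\,\nabla_\Yb k(\Xb,\Yb),
\]
which follows entrywise from the chain rule (both gradients are proportional to $(\Xb-\mu\Yb)\,k$, with ratio $-\mu$).

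First I would differentiate the log-marginal, interchanging the gradient and the integral, to obtain
\[
\nabla_\Xb \log \PP_{\mu,\sigma}(\Xb) = \frac{\int \nabla_\Xb k(\Xb,\Yb)\,p(\Yb)\,\mathrm{d}\Yb}{\int k(\Xb,\Yb)\,p(\Yb)\,\mathrm{d}\Yb},
\]
where the constant $Z^{-1}$ has cancelled. Applying the structural identity rewrites the numerator as $-\mu^{-1}\int \nabla_\Yb k(\Xb,\Yb)\,p(\Yb)\,\mathrm{d}\Yb$. Next I would integrate by parts in $\Yb$, moving the derivative from $k$ onto $p$; assuming the boundary terms vanish this equals $+\mu^{-1}\int k(\Xb,\Yb)\,\nabla_\Yb p(\Yb)\,\mathrm{d}\Yb$. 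Finally, substituting $\nabla_\Yb p = p\,\nabla_\Yb\log p$ and dividing by the marginal makes the conditional density $\PP_{\mu,\sigma}(\Yb|\Xb)$ appear explicitly in the quotient, yielding exactly $\mu^{-1}\,\EE_{\PP_{\mu,\sigma}(\Yb|\Xb)}[\nabla_\Yb\log\PP(\Yb)]$, which is the assertion.

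The two analytic points to justify are the interchange of differentiation and integration in the first display and the vanishing of the boundary term in the integration by parts; both are where the $C^1$ hypothesis and integrability (e.g., the bounded-second-moment regularity of Assumption~\ref{assump:dist_score}) are used. I expect the boundary term to be the main obstacle: one must argue that $k(\Xb,\Yb)\,p(\Yb)\to 0$ as $\|\Yb\|_F\to\infty$ and that $\nabla_\Yb p$ is integrable against $k$. The Gaussian factor $k$ decays faster than any polynomial in $\Yb$, uniformly for $\Xb$ in compact sets, so pairing it with the integrable density $p$ provides an integrable dominating envelope for both $\nabla_\Xb k\,p$ and $\nabla_\Yb k\,p$; this simultaneously licenses the dominated-convergence step for differentiating under the integral and forces the boundary contribution to zero, completing the argument.
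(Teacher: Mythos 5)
Your proposal is correct and follows essentially the same route as the paper's own proof: differentiate the marginal under the integral, use the chain-rule identity $\nabla_\Xb k = -\mu^{-1}\nabla_\Yb k$ for the Gaussian kernel, integrate by parts in $\Yb$ to land the derivative on the density, and recognize the resulting quotient as the conditional expectation. Your write-up is in fact more careful than the paper's (which leaves the integration by parts implicit and contains several sign/notation slips), since you explicitly flag the two analytic justifications — differentiating under the integral and the vanishing boundary term — and supply the Gaussian-decay domination argument for both.
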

\begin{proof}
\begin{align*}
    \nabla \log \PP_{\mu, \sigma}(\Xb) &= \frac{\int_{\mathbb{R}^{N \times M}} \PP(\Yb) \nabla_\Xb \left[ \exp \left( - \frac{\| \Xb - \mu \Yb \|^2}{2 \sigma^2} \right) \right] \, d\Yb}{\int_{\mathbb{R}^{N \times M}} \PP(\Yb) \exp \left(-\frac{\|\Xb-\mu\Yb\|}{2\sigma^2}\right)} \\
    & = \frac{ - \int_{\mathbb{R}^{N \times M}} \PP(\Yb) \left[ \nabla_\Xb \exp \left( - \frac{\| \Xb - \mu \Yb \|^2}{2 \sigma^2} \right) \right] \, \mathrm{d}\Yb}{\mu \int_{\mathbb{R}^{N \times M}} \PP(\Yb) \exp \left(-\frac{\|\Xb-\mu\Yb\|}{2\sigma^2}\right) \mathrm{d}\Yb} \\
    &  = \frac{  \int_{\mathbb{R}^{N \times M}} \PP(\Yb) \left[ \nabla_\Xb \exp \left( - \frac{\| \Xb - \mu \Yb \|^2}{2 \sigma^2} \right) \right] \, \mathrm{d}\Yb}{\alpha_{t,s} \int_{\mathbb{R}^{N \times M}} \PP(\Yb) \exp \left(-\frac{\|\Xb-\mu\Yb\|}{2\sigma^2}\right) \mathrm{d}\Yb} \\
    & = \frac{1}{\mu} \EE_{\PP_{\mu,\sigma}(\Yb|\Xb)}\nabla_{\Yb}\log \PP(\Yb).
\end{align*}
\end{proof}

\begin{lemma}\label{lemma:discrete_approx}
    For $0 \leq k \leq M-1$ and for time $t \in (t_k, t_{k+1}]$, consider the continuous and the discrete approximated reverse SDE starting from $\bm{\gamma}$, 
    \begin{align*}
         \mathrm{d} \bar{\Xb}_t & = \left [ 1/2 \bar{\Xb}_t + \nabla_{\Xb} \log \PP(\cG_t) \right ] \mathrm{d} t + \mathrm{d} \Wb_t  , & \quad \bar{\Xb}_0 = \bm{\gamma}, \\
        \mathrm{d}  \hat{\Xb}_t & = \left [ 1/2 \hat{\Xb}_t + s_{\bm{\theta}}(\hat{\cG}_{t_{k}'},t_{k}')\right ] \mathrm{d}t + \mathrm{d} \Wb_t, & \quad \hat{\Xb}_0 = \bm{\gamma},\\
    \end{align*}
    for time $t \in (t_k,t_{k+1}]$. Let $\bar{\PP}_{t|t_k}$ be the density of $\bar{\Xb}_t$ given $\bar{\Xb}_{t_k}$ and $\hat{\PP}_{t|t_k}$ be the density of $\hat{\Xb}_t$ given $\hat{\Xb}_{t_k}$. Then we have that
    \begin{enumerate}
        \item For any $\bm{\gamma}$, the two processes above satisfy: 1) there is a unique solution; and 2) the density functions are two continuously differentiable for $t > 0$.
        \item For a.e. $\bm{\gamma}$ (with respect to he Lebesgue measure), we have that
        \begin{align*}
            \lim_{t \mapsto t_k^+} \mathrm{KL}\left(\bar{\PP}_{t|t_k}(.|\bm{\gamma})\| \hat{\PP}_{t|t_k}(.|\bm{\gamma})\right) = 0
        \end{align*}
    \end{enumerate}
\end{lemma}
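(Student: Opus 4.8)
The plan is to handle the two claims of the lemma with different tools. For the well-posedness in Part~1, I would observe that the continuous reverse SDE has drift $b_1(\Xb,t)=\tfrac12\Xb+\nabla_\Xb\log\PP(\cG_t)$, which is globally Lipschitz in the state (the score is $L$-Lipschitz by Assumption~\ref{assump:dist_score}) and of at most linear growth, while the diffusion coefficient is the constant $g\equiv 1$; the standard strong existence–uniqueness theorem for SDEs with Lipschitz coefficients then applies. For the discrete process the drift $\tfrac12\hat\Xb_t+s_{\bm\theta}(\hat\cG_{t_k'},t_k')$ is \emph{affine} in the state, because the score term is frozen at $t_k$ and is therefore a constant vector on the interval; this not only gives uniqueness but shows $\hat\PP_{t|t_k}$ is an explicit Gaussian (Ornstein–Uhlenbeck) transition density, hence $C^\infty$. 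For the twice-differentiability of $\bar\PP_{t|t_k}$ I would invoke parabolic regularity: the density solves the Fokker–Planck equation $\partial_t p=\tfrac12\Delta p-\nabla\cdot(b_1 p)$, which is \emph{uniformly} parabolic since $g^2\equiv 1$ makes the principal part $\tfrac12\Delta$ elliptic; as the data distribution is twice differentiable the drift $b_1$ is $C^1$, so interior Schauder estimates yield $p\in C^2$ in space for every $t>t_k$, settling Part~1.

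For Part~2 the key structural fact is that both processes start from the \emph{same} deterministic point $\bm\gamma$ and share the driving Wiener process and the constant diffusion coefficient. I would therefore bound the transition-density KL by the path-space KL through the data-processing inequality (the time-$t$ state is a measurable functional of the path) and evaluate the latter by Girsanov's theorem. Since the two laws on $C([t_k,t])$ differ only in their drifts, and the $\tfrac12\Xb$ contributions — being the same function of the current state — cancel, the computation reduces to
\[
\mathrm{KL}\!\left(\bar\PP_{t|t_k}(\cdot|\bm\gamma)\,\big\|\,\hat\PP_{t|t_k}(\cdot|\bm\gamma)\right)
\le \tfrac12\,\EE\!\int_{t_k}^{t}\big\|\,\nabla_\Xb\log\PP(\cG_s)-s_{\bm\theta}(\hat\cG_{t_k'},t_k')\,\big\|^2\,\mathrm{d}s .
\]
The integrand is the state-dependent score-estimation gap evaluated along the continuous trajectory $\bar\Xb_s$. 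For a.e.\ starting point $\bm\gamma$ the score is finite there, and by path-continuity $\bar\Xb_s\to\bm\gamma$ as $s\to t_k^+$, so the integrand stays bounded on a small right-neighborhood of $t_k$; letting $t\to t_k^+$ shrinks the integration window to zero and drives the bound to $0$, which is exactly the claimed limit.

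The hard part will be making the Girsanov step fully rigorous: the drift gap is only of linear growth, not globally bounded, so Novikov's condition is not immediate and I would discharge it by a localization/stopping-time argument on the short interval $(t_k,t]$, using the finite second moments of Assumption~\ref{assump:dist_score} to pass to the limit. A related technical point is the ``a.e.\ $\bm\gamma$'' qualifier, which is precisely what guarantees that $\nabla_\Xb\log\PP(\cG_{t_k})$ is finite at $\bm\gamma$ so the integrand does not blow up as $s\to t_k^+$; the exceptional set where the density vanishes or the score is singular is Lebesgue-null and is excluded. As a more elementary alternative to Girsanov, one could instead expand both transition densities to leading order as $\NN\!\big(\bm\gamma+(s-t_k)b_i,\,(s-t_k)\Ib\big)$ and compute directly that two Gaussians sharing covariance $(s-t_k)\Ib$ with mean gap $O(s-t_k)$ satisfy $\mathrm{KL}=O(s-t_k)\to 0$; this avoids Girsanov but then requires controlling the small-time density expansion uniformly, which is itself delicate.
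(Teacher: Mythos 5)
Your Part~2 argument is essentially the paper's proof: bound the transition-density KL by the path-space KL via data processing, apply Girsanov so that the $\tfrac12\Xb$ contributions cancel and only the score-estimation gap remains in the integrand, and let the integration window shrink to zero (the paper phrases the last step as dominated convergence after asserting the integrand is bounded by Assumptions~\ref{assump:dist_score} and~\ref{assump:estimation}, and it also treats the Euler--Maruyama variant where an extra $\tfrac12(\cG_t-\bm\gamma)$ term appears in the drift gap). Where you go beyond the paper is in two places that the paper leaves implicit or skips entirely: you actually supply an argument for Part~1 (Lipschitz coefficients for strong uniqueness, the frozen-drift process being an explicit Gaussian/OU transition, and parabolic regularity of the Fokker--Planck equation for smoothness of $\bar\PP_{t|t_k}$), whereas the paper's written proof addresses only the KL limit; and you correctly flag that Novikov's condition is not immediate for a drift gap of linear growth and propose a localization argument, while the paper simply asserts the condition "is easy to check.'' Both of these additions strengthen rather than change the argument, so I would regard your proposal as correct and, on the substantive step, the same route as the paper.
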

\begin{proof}
    Let \( \PP_{[t_k',t]} \) and \( \QQ_{[t_k',t]} \) denote the path measure of \( (\bar{\Xb}_s)_{t_k' \leq s \leq t} \) and \( (\hat{\Xb}_s)_{t_k' \leq s \leq t} \), respectively. For any \( \Yb \in \mathbb{R}^{N \times M} \), we have
\[
\mathrm{KL}(\bar{\PP}_{t|t_k'}(.|\Yb) \| \hat{\PP}_{t|t_k'}(.|\Yb)) \leq \mathrm{KL}(\PP_{[t_k',t]}(.|\Yb) \|  \QQ_{[t_k',t]}(.| \Yb).
\]
Thus, it suffices to show
\[
\lim_{t \to t_k'+} \mathrm{KL}(\PP_{[t_k',t]}(.|\Yb) \|  \QQ_{[t_k',t]}(.| \Yb)) = 0,
\]
for a.e. \( \Yb \in \mathbb{R}^{N \times M} \). It is easy to check the Novikov condition satisfied under Assumption~\ref{assump:dist_score} and ~\ref{assump:estimation}. Therefore, we can apply Girsanov change of measure~\citep{revuz2013continuous} on $\PP_{[t_k',t]}(.|\Yb) $ and $  \QQ_{[t_k',t]}(.| \Yb) $. Then, for the exponential integrator scheme, we have that 
\[
 \mathrm{KL}(\PP_{[t_k',t]}(.|\Yb) \|  \QQ_{[t_k',t]}(.| \Yb)) = \EE \left [ \int_{t_k}^{t} \| \nabla_{\Xb} \log \PP(\cG) -   s_{\bm{\theta}}(\hat{\cG}_{t_{i}'},t_{i}')\|^2 | \bar{\Xb}_{t_k} = \bm{\gamma} \right ]
\]
Again, by Assumption~\ref{assump:estimation}, we have that $ \| \nabla_{\Xb} \log \PP(\cG) -   s_{\bm{\theta}}(\hat{\cG}_{t_{i}'},t_{i}')\|^2$ is bounded and therefore, we can apply the dominated convergence theorem~\citep{trench2013introduction}  and move the limit inside the expectation, i.e., 
\begin{align*}
    \lim_{t \to t_k'+} \mathrm{KL}(\PP_{[t_k',t]}(.|\Yb) \|  \QQ_{[t_k',t]}(.| \Yb)) &= \lim_{t \to t_k'+}  \EE \left [ \int_{t_k}^{t} \| \nabla_{\Xb} \log \PP(\cG) -   s_{\bm{\theta}}(\hat{\cG}_{t_{i}'},t_{i}')\|^2 | \bar{\Xb}_{t_i} = \bm{\gamma} \right ] \\
    &=  \EE \left [ \lim_{t \to t_k'+}  \int_{t_k}^{t} \| \nabla_{\Xb} \log \PP(\cG) -   s_{\bm{\theta}}(\hat{\cG}_{t_{i}'},t_{i}')\|^2 | \bar{\Xb}_{t_i} = \bm{\gamma} \right ] \\
    & = 0
\end{align*}
This complete the proof for the exponential integrator scheme. Similarly for the Euler-Maruyama scheme, we have that 
\[
 \mathrm{KL}(\PP_{[t_k',t]}(.|\Yb) \|  \QQ_{[t_k',t]}(.| \Yb)) = \EE \left [ \int_{t_k}^{t} \| \nabla_{\Xb} \log \PP(\cG_t) -   s_{\bm{\theta}}(\hat{\cG}_{t_{i}'},t_{i}') + 1/2 (\cG_t - \bm{\gamma})\|^2 | \bar{\Xb}_{t_i} = \bm{\gamma} \right ].
\]
Again, by Assumption~\ref{assump:dist_score} and ~\ref{assump:estimation}, we can conclude that  $\| \nabla_{\Xb} \log \PP(\cG_t) -   s_{\bm{\theta}}(\hat{\cG}_{t_{i}'},t_{i}') + 1/2 (\cG_t - \bm{\gamma})\|^2$ is bounded. Then, again, by the dominant convergence theorem, we get that
\begin{align*}
    \lim_{t \to t_k'+} \mathrm{KL}(\PP_{[t_k',t]}(.|\Yb) \|  \QQ_{[t_k',t]}(.| \Yb)) &= \EE \left [ \int_{t_k}^{t} \| \nabla_{\Xb} \log \PP(\cG_t) -   s_{\bm{\theta}}(\hat{\cG}_{t_{i}'},t_{i}') + 1/2 (\cG_t - \bm{\gamma})\|^2 | \bar{\Xb}_{t_i} = \bm{\gamma} \right ] \\ 
    &=  \EE \left [ \lim_{t \to t_k'+}  \int_{t_k}^{t} \| \nabla_{\Xb} \log \PP(\cG_t) -   s_{\bm{\theta}}(\hat{\cG}_{t_{i}'},t_{i}') + 1/2 (\cG_t - \bm{\gamma})\|^2 | \bar{\Xb}_{t_i} = \bm{\gamma} \right ] \\
    & = 0
\end{align*}
This completes the proof. 
\end{proof}

We have the following results for the decomposition of the convergence bound for the Euler-Maruyama scheme and the exponential integrator scheme. We emphasize that the result below is independent of the generation paradigms. 

\begin{lemma}\label{lemma:decomposition}
   For the exponential integrator scheme, we have that,
   \begin{align}
        \mathrm{KL}(\PP(\Xb)  \| \PP(\hat{\Xb}_{T})) & \lesssim \mathrm{KL}(\PP(\Xb_T)\|\bm{\Pi}_{\Xb}) +  \sum_{i=1}^{M} \int_{t_{i-1}}^{t_i} \|s_{\bm{\theta}}(\cG_t,t) - \nabla_{\Xb} \log \PP(\cG_{t_{i}}) \|^2  \mathrm{d}t\\
        & + \sum_{i=1}^{M} \int_{t_{i-1}}^{t_i}  \EE\| \nabla_{\Xb}\log {\PP_t}(\cG_t)- \nabla_{\Xb}\log {\PP} (\cG_{t_i}) \|^2   \mathrm{d}t, \notag
    \end{align}
    \begin{align}
        \mathrm{KL}(\PP(\Ab)  \| \PP(\hat{\Ab}_{T})) & \lesssim \mathrm{KL}(\PP(\Ab_T)\|\bm{\Pi}_{\Ab}) + \sum_{i=1}^{M} \int_{t_{i-1}}^{t_i} \|s_{\bm{\phi}}(\cG_t,t) - \nabla_{\Ab} \log \PP(\cG_{t_i}) \|^2 \dd t  \\
        & + \sum_{i=1}^{M} \int_{t_{i-1}}^{t_i}  \EE\| \nabla_{\Ab}\log {\PP}(\cG_t)- \nabla_{\Ab}\log {\PP} (\cG_{t_i}) \|^2   \dd t. \notag 
    \end{align}
   For the Euler-Maruyama scheme, we have that,
    \begin{align}
        \mathrm{KL}(\PP(\Xb)  \| \PP(\hat{\Xb}_{T})) & \lesssim \mathrm{KL}(\PP(\Xb_T)\|\bm{\Pi}_{\Xb}) +  \sum_{i=1}^{M} \int_{t_{i-1}}^{t_i} \|s_{\bm{\theta}}(\cG_t,t) - \nabla_{\Xb} \log \PP(\cG_{t_i}) \|^2  \mathrm{d}t\\
        & + \sum_{i=1}^{M} \int_{t_{i-1}}^{t_i} \left( \EE\| \nabla_{\Xb}\log {\PP_t}(\cG_t)- \nabla_{\Xb}\log {\PP} (\cG_{t_i-1}) \|^2  + \EE \| \Xb_t - \Xb_{t_{i}}\|^2 \right) \mathrm{d}t, \notag
    \end{align}
    \begin{align}
        \mathrm{KL}(\PP(\Ab)  \| \PP(\hat{\Ab}_{T})) & \lesssim \mathrm{KL}(\PP(\Ab_T)\|\bm{\Pi}_{\Ab}) + \sum_{i=1}^{M} \int_{t_{i-1}}^{t_i} \|s_{\bm{\phi}}(\cG_t,t) - \nabla_{\Ab} \log \PP(\cG_{t_i}) \|^2 \dd t  \\
        & + \sum_{i=1}^{M} \int_{t_{i-1}}^{t_i} \left( \EE\| \nabla_{\Ab}\log {\PP}(\cG_t)- \nabla_{\Ab}\log {\PP} (\cG_{t_i}) \|^2  + \EE \| \Ab_t - \Ab_{t_{i}}\|^2 \right) \dd t. \notag 
    \end{align}
\end{lemma}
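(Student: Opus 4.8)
The plan is to bound the endpoint KL divergence by the KL divergence between the \emph{full path measures} of the true and approximate reverse processes, and then split the latter via the chain rule for KL. First I would let $\PP_{[0,T]}$ denote the path law of the exact reverse SDE (Eq.~\ref{eq:reverse_sde}) initialized from the true terminal marginal $\PP(\Xb_T)$, so that its endpoint marginal is the data distribution $\PP(\Xb)$, and let $\QQ_{[0,T]}$ denote the path law of the discretized process (Eq.~\ref{eq:exponential} for the exponential integrator, Eq.~\ref{eq:euler} for Euler-Maruyama) initialized from the prior $\bm{\Pi}_{\Xb}$, whose endpoint marginal is $\PP(\hat{\Xb}_T)$. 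Since the generated sample is the endpoint pushforward of $\QQ_{[0,T]}$, the data-processing inequality gives $\mathrm{KL}(\PP(\Xb)\|\PP(\hat{\Xb}_T)) \le \mathrm{KL}(\PP_{[0,T]}\|\QQ_{[0,T]})$, so it suffices to control the path KL.

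Next I would apply the chain rule for KL, $\mathrm{KL}(\PP_{[0,T]}\|\QQ_{[0,T]}) = \mathrm{KL}(\PP(\Xb_T)\|\bm{\Pi}_{\Xb}) + \sum_{i=1}^M \EE\big[\mathrm{KL}(\PP_{[t_{i-1},t_i]}(\cdot\mid\cG_{t_{i-1}})\|\QQ_{[t_{i-1},t_i]}(\cdot\mid\cG_{t_{i-1}}))\big]$, where the interval factorization is legitimate because both processes freeze the drift at the discretization endpoints and are Markov in between. The first term is precisely the prior-mismatch term in the claimed bound. For each interval I would invoke Lemma~\ref{lemma:discrete_approx}, whose Girsanov computation identifies the conditional path KL on $(t_{i-1},t_i]$ with the integrated squared drift discrepancy: for the exponential integrator this is $\EE\int_{t_{i-1}}^{t_i}\|\nabla_\Xb\log\PP(\cG_t) - s_{\bm{\theta}}(\hat{\cG}_{t_i'},t_i')\|^2\,\mathrm{d}t$, while for Euler-Maruyama the integrand additionally carries the frozen linear-drift term $\tfrac12(\cG_t-\cG_{t_i})$.

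Then I would use the triangle inequality $\|a-c\|^2 \le 2\|a-b\|^2 + 2\|b-c\|^2$ with the insertion point $b = \nabla_\Xb\log\PP(\cG_{t_i})$, splitting each interval's contribution into the score-estimation error $\|s_{\bm{\theta}}(\cG_t,t)-\nabla_\Xb\log\PP(\cG_{t_i})\|^2$ and the discretization error $\|\nabla_\Xb\log\PP(\cG_t)-\nabla_\Xb\log\PP(\cG_{t_i})\|^2$. For the Euler-Maruyama scheme, a further application of the triangle inequality to the extra linear-drift term produces the additional summand $\EE\|\Xb_t-\Xb_{t_i}\|^2$, which quantifies the cost of freezing the linear part of the drift across the interval (this term is absent for the exponential integrator precisely because it integrates the linear dynamics exactly). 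Summing over $i$ and collecting the three groups of terms yields both claimed decompositions; the argument for the structure process $\Ab$ is entirely symmetric, replacing $s_{\bm{\theta}}$ by $s_{\bm{\phi}}$ and $\nabla_\Xb$ by $\nabla_\Ab$.

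The main obstacle is making the path-measure chain rule and the Girsanov change of measure rigorous: one must verify absolute continuity of $\PP_{[0,T]}$ with respect to $\QQ_{[0,T]}$ together with the Novikov integrability condition so that the Radon–Nikodym derivative and its logarithm are well defined, and one must justify that conditioning on the discretization endpoints factorizes the path KL into the initial KL plus the sum of interval KLs. These are exactly the regularity facts secured in Lemma~\ref{lemma:discrete_approx} under Assumptions~\ref{assump:dist_score} and~\ref{assump:estimation} (boundedness of the score discrepancy and applicability of the dominated convergence theorem); once they are in hand, the remaining steps are elementary triangle-inequality bookkeeping that do not depend on the generation paradigm.
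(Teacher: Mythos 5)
Your proposal is correct, and it arrives at the same three-part decomposition through the same overall skeleton (telescoping over discretization intervals, reducing each interval to an integrated squared drift discrepancy, then splitting that discrepancy by the triangle inequality with insertion point $\nabla_{\Xb}\log\PP(\cG_{t_i})$). Where you genuinely diverge from the paper is in the per-interval step. The paper bounds the \emph{marginal} conditional KL $\mathrm{KL}(\bar{\PP}_{t|t_i}(\cdot|\bm{\gamma})\|\hat{\PP}_{t|t_i}(\cdot|\bm{\gamma}))$ by differentiating it in time via Lemma~\ref{lemma:kl_der} (a Fokker--Planck identity expressing $\frac{\mathrm{d}}{\mathrm{d}t}\mathrm{KL}$ as a negative relative Fisher information plus a drift-mismatch cross term), cancels the Fisher term with Young's inequality, and integrates from $t_i$ using the vanishing initial condition supplied by Lemma~\ref{lemma:discrete_approx}; only that limit statement is what the paper extracts from its Girsanov computation. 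You instead pass to \emph{path measures}: data processing bounds the endpoint KL by the path KL, the chain rule factorizes the path KL over intervals, and Girsanov evaluates each conditional interval path KL exactly as the integrated squared drift difference --- i.e., you promote the Girsanov identity already proved inside Lemma~\ref{lemma:discrete_approx} to the main quantitative estimate rather than using it only for the limit. Both routes are standard in the SGM convergence literature and give the same bound up to absolute constants. Your route is more economical here (one identity instead of a differential inequality plus an integration argument) at the price of needing absolute continuity of the path measures and Novikov's condition on each full interval; the paper's route needs only smoothness of the marginal densities for the Fokker--Planck equation, which can be the safer option when path-space absolute continuity is delicate. Since the paper verifies Novikov under Assumptions~\ref{assump:dist_score} and~\ref{assump:estimation} anyway, the two arguments are interchangeable in this setting, and your bookkeeping of the Euler--Maruyama extra term $\EE\|\Xb_t-\Xb_{t_i}\|^2$ (arising from freezing the linear drift, absent for the exponential integrator) matches the paper's.
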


\begin{proof}
    We start with deriving the expression for the feature progression. Consider an arbitrary time interval $t_{i} < t \leq t_{i+1}$, let $\bar{\PP}_{t|t_i}$ denote the distribution of $\bar{\Xb}_t$ given $\bar{\Xb}_{t_i}$. Let $\hat{\PP}_{t|t_i}$ denote the distribution of the discrete approximation $\hat{\Xb}_t$ given $\bar{\Xb}_{t_i}$. 

    Then for any $\bm{\gamma} \in \RR^{N \times F}$, by the chain rule of KL-divergence, we can get the following progression relation,
    \begin{align}\label{eq:kl_chain}
        \mathrm{KL}(\bar{\PP}_{t_{i+1}'}|| \hat{\PP}_{t_{i+1}'}) \leq \EE_{\bar{\PP}_{t'}} \mathrm{KL}(\bar{\PP}_{t_{i+1}'|t_{i}'}(.|\bm{\gamma}))\|\hat{\PP}_{t_{i+1}'|t_{i}'}(.|\bm{\gamma})) + \mathrm{KL}(\bar{\PP}_{t_{i}'}|| \hat{\PP}_{t_{i}'}).
    \end{align}
    Equivalently, we have that 
    \begin{align*}
        \mathrm{KL}(\bar{\PP}_{t_{i+1}'}|| \hat{\PP}_{t_{i+1}'})
 - \mathrm{KL}(\bar{\PP}_{t_{i}'}|| \hat{\PP}_{t_{i}'}) \leq \EE_{\bar{\PP}_{t'}} \mathrm{KL}(\bar{\PP}_{t_{i+1}'|t_{i}'}(.|\bm{\gamma}))\|\hat{\PP}_{t_{i+1}'|t_{i}'}(.|\bm{\gamma})).
    \end{align*}
    We can observe that if we do a telescope sum over $0\leq i \leq M$, the left-hand side can cancel most of the term and left of $\mathrm{KL}(\bar{\PP}_{T}|| \hat{\PP}_{T})$ and $\mathrm{KL}(\bar{\PP}_{0}|| \hat{\PP}_{0})$. 
    
    Therefore, we can focus on the right hand side and deriving an expression for $$\EE_{\bar{\PP}_{t'}} \mathrm{KL}(\bar{\PP}_{t_{i+1}'|t_{i}'}(.|\bm{\gamma}))\|\hat{\PP}_{t_{i+1}'|t_{i}'}(.|\bm{\gamma})).$$ 
    
    By Lemma~\ref{lemma:discrete_approx} we have that the two process satisfy the conditions: 1) unique solution and 2) twice continuously differentiable. Then, by Lemma~\ref{lemma:kl_der}, we have the following time evolution relation for any $\bm{\gamma}$ and $t > t_i$
    \begin{align*}
        \frac{\mathrm{d}}{\mathrm{d} t} \EE_{\bar{\PP}_{t'}} \mathrm{KL}(\bar{\PP}_{t_{i+1}'|t_{i}'}(.|\bm{\gamma}))\|\hat{\PP}_{t_{i+1}'|t_{i}'}(.|\bm{\gamma})) = &-\frac{1}{2} \EE_{\bar{\PP}_{t'|t_i'}(\Xb_{t'}|\bm{\gamma} )} \left \| \nabla\log \frac{\bar{\PP}_{t'|t_i'}(\Xb_t|\bm{\gamma})}{\hat{\PP}_{t'|t_i'}(\Xb_{t'}|\bm{\gamma})}\right \|^2 \\
        & + \EE_{\bar{\PP}_{t'|t_i'}(\Xb_{t'}|\bm{\gamma})}\left[ \left \langle \nabla \log \bar{\PP}_t'(\cG_t') - s_{\bm{\theta}(\cG_{t_i'}, t_{i}')} + \frac{1}{2} (\Xb_t-\Xb_{t_{i}'}), \nabla \log \frac{\bar{\PP}_{t'|t_i'}(\Xb|\bm{\gamma})}{\hat{\PP}_{t'|t_i'}(\Xb|\bm{\gamma})} \right \rangle  \right] \\
        \intertext{Using the fact that $\langle a,b \rangle \leq \frac{1}{2}\|a\|^2 +\frac{1}{2}\|b\|^2$, we get that}
        \leq & -\frac{1}{2} \EE_{\bar{\PP}_{t'|t_i'}(\Xb_{t'}|\bm{\gamma})} \left \| \nabla\log \frac{\bar{\PP}_{t'|t_i'}(\Xb_{t'}|\bm{\gamma})}{\hat{\PP}_{t'|t_i'}(\Xb_{t'}|\bm{\gamma})}\right \|^2 \\
        & + \frac{1}{2} \EE_{\bar{\PP}_{t'|t_i'}(\Xb_{t'}|\bm{\gamma})}\left \|\nabla \log \bar{\PP}_{t'}(\Xb_{t'}) - s_{\bm{\theta}}(\cG_{t_i'}, t_{i}') + \frac{1}{2} (\Xb_{t'}-\Xb_{t_{i}'}) \right \|^2 \\
        & + \frac{1}{2} \EE_{\bar{\PP}_{t'|t_i'}(\Xb_{t'}|\bm{\gamma})}  \left \| \nabla \log \frac{\bar{\PP}_{t'|t_i'}(\Xb_{t'}|\bm{\gamma})}{\hat{\PP}_{t'|t_i'}(\Xb_{t'}|\bm{\gamma})} \right\|^2 \\
        = & \frac{1}{2} \EE_{\bar{\PP}_{t'|t_i'}(\Xb_{t'}|\bm{\gamma})}\left \|\nabla \log \bar{\PP}_{t'}(\Xb_{t'}) - s_{\bm{\theta}}(\cG_{t_i'}, t_i') +\frac{1}{2} (\Xb_{t'}-\Xb_{t_{i}'})  \right \|^2
    \end{align*}

    This means that we have,
    \begin{align*}
            \frac{\mathrm{d}}{\mathrm{d} t} \mathrm{KL}(\bar{\PP}_{t'|t_i'} \| \hat{\PP}_{t'|t_i'}(.|\bm{\gamma})) & \leq  \frac{1}{2} \EE_{\bar{\PP}_{t'|t_i'}(\Xb_{t'}|\bm{\gamma})}\left \|\nabla \log \bar{\PP}_{t'}(\Xb_{t'}) - s_{\bm{\theta}}(\cG_{t_i'}, t_i') + \frac{1}{2} (\Xb_{t'}-\Xb_{t_{i}'})  \right \|^2 \\
            &\leq   \frac{1}{2} \EE_{\bar{\PP}_{t|t_i}(\Xb|\bm{\gamma})}\left ( \left \|\nabla \log \bar{\PP}_{t'}(\Xb_{t'}) - s_{\bm{\theta}}(\cG_{t_i'}, t_{i}')\right \|^2 + \left \|\frac{1}{2} (\Xb_{t'}-\Xb_{t_{i}'})  \right \|^2 \right),\\
             &\leq   \frac{1}{2} \EE_{\bar{\PP}_{t'|t_i'}(\Xb_{t'}|\bm{\gamma})} \left \|\nabla \log \bar{\PP}_{t'}(\Xb_{t'}) - s_{\bm{\theta}}(\cG_{t_i'}, t_{i}')\right \|^2 + \frac{1}{2}\EE_{\bar{\PP}_{t'|t_i'}} \left \|\frac{1}{2} (\Xb_{t'}-\Xb_{t_{i}'})  \right \|^2
    \end{align*}
    
    Then, by Lemma~\ref{lemma:discrete_approx}, for a.e. $\bm{\gamma} $, we have 
    \begin{align*}
        \lim_{t' \mapsto t_i'^+} \mathrm{KL}(\bar{\PP}_{t'|t_i'}(.|\bm{\gamma})\| \hat{\PP}_{t'|t_i'}(.|\bm{\gamma})) = 0.
    \end{align*}
    This means that $\bar{\PP}_{t'|t_i'}(.|\bm{\gamma})$ and $\hat{\PP}_{t'|t_i'}(.|\bm{\gamma})$ become increasingly similar as the approximation interval become small. Then, by the derivation above and taking the integral from $t_i'$ to $t_{i+1}'$, we get that,
    \begin{align*}
         \mathrm{KL}(\bar{\PP}_{t_{i+1}'|t_i'}(.|\bm{\gamma}) & \| \hat{\PP}_{t_{i+1}'|t_i'}(.|\bm{\gamma})) \leq \\
         & \frac{1}{2} \int_{t_i'}^{t_{i+1}'} \EE_{\bar{\PP}_{t'|t_i'}(\Xb_{t'}|\bm{\gamma})}\left \|\nabla \log \bar{\PP}_{t'}(\Xb_{t'}) - s_{\bm{\theta}}(\cG_{t_i'}, t_{i}') \right \|^2  + \frac{1}{2}\EE_{\bar{\PP}_{t'|t_i'}} \left \|\frac{1}{2} (\Xb_{t'}-\Xb_{t_{i}'})  \right \|^2 \mathrm{d}t.
    \end{align*}

    Because of the fact that $\bar{\PP}_t'(\Xb_{t'})$ is absolutely continuous with respect to the Lebesgue measure, integrating both sides above with respect to $\bar{\PP}_{t_i'}$ we get, 
    \begin{align*}
         \EE_{\bar{\PP}_{t_i'}}\mathrm{KL} (\bar{\PP}_{t_{i+1}'|t_i'}(.|\bm{\gamma}) & \| \hat{\PP}_{t_{i+1}'|t_i'}(.|\bm{\gamma})) \leq \\
         & \frac{1}{2} \int_{t_i'}^{t_{i+1}'} \EE_{\bar{\PP}_{t'}}\left \|\nabla \log \bar{\PP}_{t'}(\Xb_{t'}) - s_{\bm{\theta}}(\cG_{t_i'}, t_{i}') \right \|^2 + \frac{1}{2}\EE_{\bar{\PP}_{t'}(\Xb_{t'})} \left \|\frac{1}{2} (\Xb_{t'}-\Xb_{t_{i}'})  \right \|^2 \mathrm{d}t
    \end{align*}

Substitute the above result into the progression relation given in Eq.~\ref{eq:kl_chain}, we get the following progression relation, 
\begin{align*}
    \mathrm{KL}(\bar{\PP}_{t_{i+1}'}|\hat{\PP}_{t_i'}) \leq &   \mathrm{KL}(\bar{\PP}_{t_{i}'}|\hat{\PP}_{t'}) + \\
    & \frac{1}{2} \int_{t_i'}^{t_{i+1}'} \EE_{\bar{\PP}_{t'}(\Xb_{t'})}\left \|\nabla \log \bar{\PP}_{t'}(\Xb_{t'}) - s_{\bm{\theta}}(\cG_{t_i'}, t_{i}') \right \|^2 + \frac{1}{2}\EE_{\bar{\PP}_{t'}(\Xb_{t'})} \left \|\frac{1}{2} (\Xb_{t'}-\Xb_{t_{i}'})  \right \|^2 \mathrm{d}t \\
    & = \mathrm{KL}(\bar{\PP}_{t_{i}'}|\hat{\PP}_{t'}) + \\
    & \frac{1}{2} \int_{t_i'}^{t_{i+1}'} \EE_{\bar{\PP}_{t'}(\Xb_{t'})}\left \|\nabla \log \bar{\PP}_{t'}(\Xb_{t'}) - s_{\bm{\theta}}(\cG_{t_{i}'}, t_{i}') \right \|^2 + \frac{1}{2}\EE_{\bar{\PP}_{t'}(\Xb)} \left \|\frac{1}{2} (\Xb_{t'}-\Xb_{t_{i}'})  \right \|^2 \mathrm{d}t
\end{align*}

Then, summing above iterative relation over $0 \leq i \leq M$ and replacing $\PP_{t'} = \PP_{T-t}$, we can obtain,
\begin{align*}
    \mathrm{KL}(\PP_{0} \| \hat{\PP}_{T}) & \leq  \mathrm{KL}(\PP_{T}\|\bm{\gamma})   
     +  \frac{1}{2} \sum_{i=0}^{M} \int_{t_i}^{t_{i+1}} \EE_{\bar{\PP}_{t}(\Xb)}\left \|\nabla \log \bar{\PP}(\cG_{t}) - s_{\bm{\theta}}(\cG_{t_{i+1}}, t_{i+1}) \right \|^2 + \frac{1}{2}\EE_{\bar{\PP}_{t}(\Xb)} \left \|\frac{1}{2} (\Xb_t-\Xb_{t_{i+1}})  \right \|^2 \mathrm{d}t \\
    & = \mathrm{KL}(\PP_{T}\|\bm{\gamma})   
     +  \frac{1}{2} \sum_{i=0}^{M} \int_{t_i}^{t_{i+1}} \EE_{\bar{\PP}_{t}(\Xb)}\left \|\nabla_{\Xb} \log \bar{\PP}(\cG_{t}) - \nabla_{\Xb} \log \bar{\PP}(\cG_{t_{i+1}}) + \nabla_{\Xb} \log \bar{\PP}(\cG_{t_{i+1}}) - s_{\bm{\theta}}(\cG_{t_{i+1}}, t_{i+1}) \right \|^2 \\
     & \quad + \frac{1}{2}\EE_{\bar{\PP}_{t}(\Xb)} \left \|\frac{1}{2} (\Xb_t-\Xb_{t_{i+1}})  \right \|^2 \mathrm{d}t \\
     & \leq \mathrm{KL}(\PP_{T}\|\bm{\gamma})   
     +  \frac{1}{2} \sum_{i=0}^{M} \int_{t_i}^{t_{i+1}} \EE_{\bar{\PP}_{t}(\Xb)}\left \|\nabla_{\Xb} \log \bar{\PP}(\cG_{t}) - \nabla_{\Xb} \log \bar{\PP}(\cG_{t_i}) + \nabla_{\Xb} \log \bar{\PP}(\cG_{t_i}) - s_{\bm{\theta}}(\cG_{t_i}, t_{i}) \right \|^2 \\ 
     & \quad + \frac{1}{2}\EE_{\bar{\PP}_{t}(\Xb)} \left \|\frac{1}{2} (\Xb_t-\Xb_{t_{i}})  \right \|^2 \mathrm{d}t \\
     & \leq  \frac{1}{2} \sum_{i=0}^{M} \int_{t_i}^{t_{i+1}} \EE_{\bar{\PP}_{t}(\Xb)}\left \|\nabla_{\Xb} \log \bar{\PP}(\cG_{t}) - \nabla_{\Xb} \log \bar{\PP}(\cG_{t_{i+1}})\|^2 +\EE_{\bar{\PP}_{t}(\Xb)} \| \nabla_{\Xb} \log \bar{\PP}(\cG_{t_{i+1}}) - s_{\bm{\theta}}(\cG_{t_{i+1}}, t_{i+1}) \right \|^2\\
     &\quad + \frac{1}{2}\EE_{\bar{\PP}_{t}(\Xb)} \left \|\frac{1}{2} (\Xb_t-\Xb_{t_{i+1}})  \right \|^2 \mathrm{d}t
\end{align*}
This completes the derivation for the Euler-Marumaya scheme for the feature matrix. The derivation for the structure matrix is symmetric. 

Furthermore, to obtain the derivation for the exponential integrator scheme, we only need to replace the time derivative with,
\begin{align*}
        \frac{\mathrm{d}}{\mathrm{d} t} \mathrm{KL}(\bar{\PP}_{t|t_i} \| \hat{\PP}_{t|t_i}(.|\bm{\gamma})) & \leq  \frac{1}{2} \EE_{\bar{\PP}_{t|t_i}(\Xb|\bm{\gamma})}\left \|\nabla \log \bar{\PP}_t(\Xb) - s_{\bm{\theta}}(\cG_{t_i}, t_{i})   \right \|^2.
\end{align*}
Then, we can go through the derivation above in a similar manner to obtain the derivation for feature and structure with the exponential integrator scheme.
\end{proof}


\begin{lemma}\label{lemma:convergent_error}
    Under Assumption~\ref{assump:dist_score}, for \( T > 1 \), we have
\begin{align*}
    \mathrm{KL}(\PP_T(\Xb) \| \bm{\Pi}_{\Xb}) & \leq (NF + H_{\Xb}) e^{-T}, \\
    \mathrm{KL}(\PP_T(\Ab) \| \bm{\Pi}_{\Ab}) & \leq (N^2 + H_{\Xb}) e^{-T}.
\end{align*}
\end{lemma}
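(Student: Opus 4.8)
The plan is to sidestep the exponential contraction inequality quoted just before the lemma, namely $\mathrm{KL}(\PP(\Xb_t)\|\bm{\Pi}_{\Xb}) \le e^{-t}\,\mathrm{KL}(\PP(\Xb_0)\|\bm{\Pi}_{\Xb})$: its prefactor $\mathrm{KL}(\PP(\Xb_0)\|\bm{\Pi}_{\Xb})$ may be infinite for data concentrated near a lower-dimensional set, so it cannot directly produce a finite bound of the advertised form. Instead I would exploit the explicit Gaussian form of the OU transition kernel from Eq.~\ref{eq:conditional_density} together with the joint convexity of KL divergence. Writing $\PP_T(\Xb)$ as the mixture $\int \PP_{T|0}(\cdot\mid\Xb_0)\,\mathrm{d}\PP(\Xb_0)$ and using that KL is convex in its first argument with the fixed second argument $\bm{\Pi}_{\Xb}$, Jensen's inequality gives
\[
\mathrm{KL}(\PP_T(\Xb)\,\|\,\bm{\Pi}_{\Xb}) \;\le\; \EE_{\Xb_0}\!\left[\mathrm{KL}\!\left(\PP_{T|0}(\cdot\mid\Xb_0)\,\|\,\bm{\Pi}_{\Xb}\right)\right].
\]
This replaces the intractable mixture KL by a conditional KL between two genuine Gaussians on $\mathbb{R}^{N\times F}$, which is always finite and computable.

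Next I would plug $\PP_{T|0}(\cdot\mid\Xb_0)=\NN(e^{-T/2}\Xb_0,(1-e^{-T})\Ib)$ and $\bm{\Pi}_{\Xb}=\NN(0,\Ib)$ into the closed-form expression for the KL divergence between two Gaussians. With dimension $d=NF$, the log-determinant term contributes $-NF\log(1-e^{-T})$, the trace term contributes $NF(1-e^{-T})$, and the mean-mismatch term contributes $e^{-T}\|\Xb_0\|^2$, so that
\[
\mathrm{KL}\!\left(\PP_{T|0}(\cdot\mid\Xb_0)\,\|\,\bm{\Pi}_{\Xb}\right)
= \tfrac12\!\left[-NF\log(1-e^{-T}) - NF\,e^{-T} + e^{-T}\|\Xb_0\|^2\right].
\]
Taking the expectation over $\Xb_0$ and invoking $H_{\Xb}=\EE\|\Xb\|^2$ from Assumption~\ref{assump:dist_score} turns the last term into $\tfrac12 e^{-T}H_{\Xb}$, which is already of the desired shape.

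The only genuinely nonroutine estimate is the logarithmic term. Here I would use the series identity $-\log(1-x)-x=\sum_{k\ge 2}x^k/k \le \tfrac{x^2}{2(1-x)}$, valid for $x\in(0,1)$, evaluated at $x=e^{-T}$. Because the hypothesis $T>1$ forces $e^{-T}\le e^{-1}$, the denominator $1-e^{-T}$ is bounded below by $1-e^{-1}$, so $-NF\log(1-e^{-T})-NF\,e^{-T}$ is bounded by an explicit constant multiple of $NF\,e^{-T}$ that is comfortably below one. Combining this with the $\tfrac12 e^{-T}H_{\Xb}$ term yields $\mathrm{KL}(\PP_T(\Xb)\|\bm{\Pi}_{\Xb}) \le (NF+H_{\Xb})e^{-T}$. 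The adjacency bound follows by the identical computation with $d=N^2$ and $\|\Ab_0\|^2$, producing $(N^2+H_{\Ab})e^{-T}$ (the $H_{\Xb}$ appearing in the stated structure bound should read $H_{\Ab}$).

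I expect the main obstacle to be bookkeeping rather than conceptual: verifying that the constants emerging from the Gaussian-KL formula and the logarithmic bound remain below the advertised coefficient for every $T>1$, so that one recovers the clean inequality with constant $1$ rather than merely a $\lesssim$ up to an absolute constant. The key conceptual move—passing from the mixture to the conditional Gaussian KL via convexity—is standard in the diffusion-convergence literature but worth stating explicitly, since it is precisely what guarantees finiteness and circumvents the degeneracy of $\mathrm{KL}(\PP(\Xb_0)\|\bm{\Pi}_{\Xb})$.
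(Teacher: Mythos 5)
Your proof is correct, and it takes a genuinely different route from the paper's. The paper also starts from Jensen's inequality, but applies it only to the entropy functional $\int \PP_t\log\PP_t$ (via convexity of $x\log x$), obtaining the time-$t$ bound $\mathrm{KL}(\PP_t\|\bm{\Pi}_{\Xb})\le \tfrac{NF}{2}\log\sigma_t^{-2}+\tfrac12(H_{\Xb}-NF)$, which carries no decay in $t$; the $e^{-T}$ factor is then imported from a separate ingredient, namely the exponential KL-contraction of the OU semigroup applied from an intermediate time $t=\log 2$ to $T$ (which is also how the paper, like you, sidesteps the possible infinitude of $\mathrm{KL}(\PP_0\|\bm{\Pi}_{\Xb})$). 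You instead apply convexity to the full divergence $P\mapsto\mathrm{KL}(P\|\bm{\Pi}_{\Xb})$ over the mixture representation of $\PP_T$, reducing everything to a closed-form Gaussian-versus-Gaussian KL at time $T$ itself; the decay then falls out of the explicit formula through the mean term $e^{-T}\|\Xb_0\|^2$ and the log-determinant estimate $-\log(1-x)-x\le x^2/(2(1-x))$. Your computation of the Gaussian KL and the resulting constants check out (for $T>1$ the log term contributes well under $NF\,e^{-T}$), so you recover the bound with constant $1$. What each approach buys: yours is fully elementary and self-contained with explicit constants, whereas the paper's factorization through the semigroup contraction invokes an additional (and somewhat loosely stated) log-Sobolev-type fact, but would adapt more readily to priors or forward dynamics where the transition kernel is not available in closed form. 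You are also right that the $H_{\Xb}$ in the adjacency bound of the statement should read $H_{\Ab}$; the paper's proof has the same slip.
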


\begin{proof}
     First, we note that that $ f(x) =  x \log x $ is a convex function for $ x > 0 $. Then, for any \( t > 0 \), we can use Jensen’s inequality to bound the entropy of \( \PP_t \):

\begin{align*}
    \int \PP_t(\Xb) \log \PP_t(\Xb) \, \mathrm{d}\Xb &= \int \left( \int \PP_{t|0}(\Xb|\Yb) \PP(\Yb) \, \mathrm{d}\Yb \right) \log \left( \int \PP_{t|0}(\Xb|\Yb) \PP(\Yb) \, \mathrm{d}\Yb \right) \, \mathrm{d}\Xb,  \\
    & \leq \int \int \PP_{t|0}(\Xb|\Yb) \log \PP_{t|0}(\Xb|\Yb) \, \mathrm{d}\PP(\Yb) \, \mathrm{d}\Xb, \\
    & = \int \left( \int \PP_{t|0}(\Xb|\Yb) \log \PP_{t|0}(\Xb|\Yb) \, \mathrm{d}\Xb \right) \mathrm{d}\PP(\Yb).
\end{align*}

Since \( \Xb_t | \Xb_0 = \Yb \sim \NN(\alpha \Xb_0, \sigma_t^2 \Ib) \), we have

\[
\int \PP_{t|0}(\Xb|\Yb) \log \PP_{t|0}(\Xb|\Yb) \, \mathrm{d} \Xb = -\frac{NF}{2} \log(2\pi \sigma_t^2) - \frac{NF}{2}.
\]

Substitute this back into the derivation before, we have

\begin{align}\label{eq:p_entropy}
    \int \PP_t(\Xb) \log \PP_t(\Xb) \, \mathrm{d}\Xb \leq -\frac{NF}{2} \log(2 \pi \sigma_t^2) - \frac{NF}{2}.    
\end{align}

Then, by the definition of KL divergence, we have that,
\begin{align*}
    \mathrm{KL}(\PP_t \| \bm{\Pi}_{\Xb}) &= \int \PP_t(\Xb) \log \frac{\PP_t(\Xb)}{\bm{\Pi}_{\Xb}} \, \mathrm{d}\Xb \\
    &= \int \PP_t(\Xb) \left[ \log \PP_t(\Xb) - \log \bm{\Pi}_{\Xb} \right] \, \mathrm{d}\Xb \\
    \intertext{Substitute in the definition of standard Gaussian for $\bm{\Pi}_{\Xb}$, we get}
    & = \int \PP_t(\Xb) \log \PP_t(\Xb) \, \mathrm{d}\Xb + \EE_{\PP_t}\left[ \frac{\|\Xb\|^2}{2} + \frac{NF}{2} \log(2\pi) \right] \\
    \intertext{Substitute in the result from Eq.~\ref{eq:p_entropy} and rearrange the terms, we get}
    & \leq \frac{NF}{2} \log \sigma_t^{-2} + \frac{1}{2}(H_{\Xb} - NF).
\end{align*}


Since Langevin dynamics with strongly log-concave stationary distribution converge exponentially, we can obtain 

\begin{align*}
     \mathrm{KL}(\PP_t \| \bm{\Pi})  & \leq e^{-T+t} \frac{NF}{2} \log \sigma_t^{-2} + \frac{1}{2}(H_{\Xb} - NF), \\
     \intertext{Picking $t = \log 2, e^{t} \log \left( \frac{1}{\sigma_t^2} \right) \lesssim 1$, we obtain}
     &\leq e^{-T}(NF + H_{\Xb}).
\end{align*}

The proof for the structure is symmetric by replace the dimension of $NF$ with $N^2$. 
\end{proof}

\begin{lemma}\label{lemma:euler_extra}
    Suppose the step size $\Delta_i$ for the Euler-Maruyama scheme satisfies $\Delta_i \leq 1$, then for $1 \leq i \leq M$, we have
    \begin{align*}
        \EE \| \Xb_t - \Xb_{t_i} \|^2 & \leq NF (t_i - t) + H_{\Xb} (t_i - t)^2, \quad t_{i-1} \leq t \leq t_i, \\
        \EE \| \Ab_t - \Ab_{t_i} \|^2 & \leq N^2 (t_i - t) + H_{\Ab} (t_i - t)^2, \quad t_{i-1} \leq t \leq t_i,
    \end{align*}
\end{lemma}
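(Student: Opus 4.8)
The plan is to exploit the Markov property of the forward OU process, which lets me avoid manipulating stochastic integrals directly and instead work with an explicit conditional Gaussian. First I would observe that, since the forward process is Markov with the OU transition kernel recorded in Eq.~\ref{eq:conditional_density}, for $t \le t_i$ the conditional law of $\Xb_{t_i}$ given $\Xb_t$ is $\NN(e^{-(t_i-t)/2}\Xb_t,\,(1-e^{-(t_i-t)})\Ib_{N\times F})$. Writing $s := t_i - t \ge 0$, this yields the representation $\Xb_{t_i} - \Xb_t = (e^{-s/2}-1)\Xb_t + \sqrt{1-e^{-s}}\,\bm{\xi}$, where $\bm{\xi}\sim\NN(0,\Ib_{N\times F})$ is independent of $\Xb_t$.

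Next, taking the expected squared Frobenius norm and using the independence of $\bm{\xi}$ and $\Xb_t$ together with $\EE\|\bm{\xi}\|^2 = NF$, I obtain the exact identity $\EE\|\Xb_{t_i}-\Xb_t\|^2 = (1-e^{-s/2})^2\,\EE\|\Xb_t\|^2 + (1-e^{-s})\,NF$. The remaining task is purely to bound the two scalar factors. For the second moment I would apply the OU transition from time $0$, giving $\EE\|\Xb_t\|^2 = e^{-t}H_{\Xb} + (1-e^{-t})NF$, which is a convex combination and hence at most $H_{\Xb} + NF$ (and at most $\max\{H_{\Xb},NF\}$ if a sharper constant is desired), using the bounded-second-moment part of Assumption~\ref{assump:dist_score}.

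I would then finish with elementary inequalities. Using $1-e^{-x}\le x$ for $x\ge 0$ gives $(1-e^{-s/2})^2 \le s^2/4$ and $(1-e^{-s}) \le s$. Grouping the resulting terms so that the $H_{\Xb}$ contribution lands on $s^2$ and the several $NF$ contributions are collected together, I reach a bound of the form $H_{\Xb}s^2 + NF\big((1-e^{-s}) + \tfrac14 s^2\big)$; invoking the hypothesis $\Delta_i \le 1$ (so $s \le 1$) then lets me absorb the stray $\tfrac14 NF s^2$ into $NF\,s$, since $(1-e^{-s}) + \tfrac14 s^2 \le s$ on $[0,1]$. This produces $\EE\|\Xb_t-\Xb_{t_i}\|^2 \le NF(t_i-t) + H_{\Xb}(t_i-t)^2$. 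The structure bound follows by the identical argument with the transition kernel on $\RR^{N\times N}$, replacing $NF$ by $N^2$ and $H_{\Xb}$ by $H_{\Ab}$.

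The main obstacle is not any single estimate but the bookkeeping in this last step: the exact identity naturally produces an $NF$ multiple of $(1-e^{-s/2})^2 \sim s^2/4$, coming from the drift-induced shrinkage of $\Xb_t$, and this term carries the ``wrong'' power of $s$ relative to the advertised $NF(t_i-t)$ term. Obtaining the stated clean bound, rather than one with an extra $s^2$ remainder, is precisely where the step-size restriction $\Delta_i \le 1$ is essential, so I would take care to deploy it at exactly that place; without it one recovers only the weaker $\lesssim$ form, which nonetheless still suffices for the discretization-error terms in Lemma~\ref{lemma:decomposition} and the downstream theorems.
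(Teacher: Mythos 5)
Your proof is correct, and it takes a genuinely different route from the paper's. The paper works directly with the integrated SDE: it writes $\Xb_t-\Xb_{t_i}$ as a drift integral plus a stochastic integral, bounds the drift term by Cauchy--Schwarz as $(t_i-t)\int_t^{t_i}\EE\|\Xb_s\|^2\,\mathrm{d}s$, handles the Brownian part by It\^{o} isometry to get $NF(t_i-t)$, and then inserts $\EE\|\Xb_s\|^2\le H_{\Xb}+NF$ from the conditional density and Assumption~\ref{assump:dist_score}; the result is $NF(t_i-t)+(NF+H_{\Xb})(t_i-t)^2$ up to absolute constants, with the stray $NF(t_i-t)^2$ absorbed into $NF(t_i-t)$ via $\Delta_i\le 1$ exactly as you anticipate. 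You instead exploit the Markov property and the explicit two-point OU transition kernel to obtain an \emph{exact} identity $\EE\|\Xb_{t_i}-\Xb_t\|^2=(1-e^{-s/2})^2\,\EE\|\Xb_t\|^2+(1-e^{-s})NF$ and then finish with scalar inequalities; your check that $(1-e^{-s})+\tfrac14 s^2\le s$ on $[0,1]$ is valid, so you recover the stated bound with constant one rather than only up to $\lesssim$, which is in fact slightly cleaner than what the paper's own proof delivers (its displayed conclusion carries the extra $NF(t_i-t)^2$ and a hidden constant). The trade-off is that your argument is tied to the linear-drift/OU structure through the explicit Gaussian kernel, whereas the paper's integral-based route would adapt more directly to other drift choices; both invoke the bounded second moment at the same point and yield the same order, so either suffices for Lemma~\ref{lemma:decomposition} downstream.
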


\begin{proof}
    We start with the feature matrix. By the definition of the forward process, we get that
    \begin{align*}
        \EE \| \Xb_t - \Xb_{t_i} \|^2 &= \EE \left \| \int_{t}^{t_i}\frac{1}{2}\Xb_s \mathrm{d}s - \int_{t}^{t_i}\frac{1}{2} \mathrm{d}\Wb_s \right \|^2 \\
        & \leq \EE \left \| \int_{t}^{t_i} \frac{1}{2}\Xb_s \mathrm{d}s \right \|^2 +  \EE \left \| \int_{t}^{t_i}\frac{1}{2} \mathrm{d}\Wb_s \right \|^2 
        \intertext{By Cauchy-Schwartz inequality, we can get}
        &\lesssim (t_i - t)  \int_{t}^{t_i} \EE \left \| \Xb_s  \right \|^2  \mathrm{d}s +  NF(t_i - t) 
    \end{align*}
    In addition, we have the explicit expression for the conditional density $\Xb_s|\Xb_0$ given by,
    \begin{align*}
        \NN(e^{-1/2 s}\Xb_0, (1-e^{-s})\Ib).
    \end{align*}
    Then, based on the expression and the Assumption~\ref{assump:dist_score} that the second moment is bounded, we can get that 
    \begin{align*}
        \EE \| \Xb_s \|^2 \leq H_{\Xb} + NF.
    \end{align*}
    Substitute this back into the derivation before, we get
    \begin{align*}
        \EE \| \Xb_t - \Xb_{t_i} \|^2  \lesssim NF(t_i - t) + (NF + H_{\Xb})(t_i-t)^2.
    \end{align*}
    Similarly, for the structure matrix, we have that, 
    \begin{align*}
        \EE \| \Ab_t - \Ab_{t_i} \|^2 &= \EE \left \| \int_{t}^{t_i}\frac{1}{2}\Ab_s \mathrm{d}s - \int_{t}^{t_i}\frac{1}{2} \mathrm{d}\Wb_s \right \|^2 \\
        & \leq \EE \left \| \int_{t}^{t_i} \frac{1}{2}\Ab_s \mathrm{d}s \right \|^2 +  \EE \left \| \int_{t}^{t_i}\frac{1}{2} \mathrm{d}\Wb_s \right \|^2 
        \intertext{By Cauchy-Schwartz inequality, we can get}
        &\lesssim (t_i - t)  \int_{t}^{t_i} \EE \left \| \Ab_s  \right \|^2  \mathrm{d}s +  N^2(t_i - t) 
    \end{align*}
    In addition, we have the explicit expression for the conditional density $\Ab_s|\Ab_0$ given by,
    \begin{align*}
        \NN(e^{-1/2 s}\Ab_0, (1-e^{-s})\Ib).
    \end{align*}
    Then, based on the expression and the Assumption~\ref{assump:dist_score} that the second moment is bounded, we can get that 
    \begin{align*}
        \EE \| \Ab_s \|^2 \leq H_{\Ab} + N^2.
    \end{align*}
    Substitute this back into the derivation before, we get
    \begin{align*}
        \EE \| \Ab_t - \Ab_{t_i} \|^2  \lesssim N^2(t_i - t) + (N^2 + H_{\Xb})(t_i-t)^2.
    \end{align*}
\end{proof}

\begin{lemma}[\citep{chewi2024analysis}]\label{lemma:score_bound}
    
Let \( \PP \) be a continuously differentiable probability density. Suppose \( \nabla \log \PP \) is \( L \)-Lipschitz, we have
\begin{align*}
    \EE \|\nabla_{\Xb} \log \PP(\cG) \|^2 &\leq NFL,\\
    \EE \|\nabla_{\Ab} \log \PP(\cG) \|^2 &\leq N^2L.
\end{align*}




\end{lemma}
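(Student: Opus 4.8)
The plan is to prove the dimension-dependent bound via the standard integration-by-parts identity that relates the expected squared score to the Laplacian of the log-density, and then to control that Laplacian using the Hessian bound implied by Lipschitzness of the score. I treat a generic continuously differentiable density $p$ on $\R^d$ and specialize at the end to the two ambient dimensions $d = NF$ (for $\Xb \in \R^{N \times F}$) and $d = N^2$ (for $\Ab \in \R^{N \times N}$).

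First I would differentiate $\nabla \log p = \nabla p / p$ to obtain the Hessian $\nabla^2 \log p = \nabla^2 p / p - (\nabla p)(\nabla p)^{\top} / p^2$, and take the trace to get the pointwise identity
\[
\|\nabla \log p\|^2 = \frac{\Delta p}{p} - \Delta \log p .
\]
Multiplying both sides by $p$ and integrating, the first term contributes $\int \Delta p \, \mathrm{d}\Xb = 0$ (the integral of the Laplacian of an integrable density vanishes under the decay conditions implied by Assumption~\ref{assump:dist_score}), which leaves the clean identity
\[
\EE_{p}\big[\|\nabla \log p\|^2\big] = -\,\EE_{p}\big[\Delta \log p\big] = -\,\EE_{p}\big[\Tr(\nabla^2 \log p)\big].
\]

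Next I would invoke the hypothesis that $\nabla \log p$ is $L$-Lipschitz, which is equivalent to the operator-norm sandwich $-L\,\mathbf{I} \preceq \nabla^2 \log p \preceq L\,\mathbf{I}$ wherever the Hessian exists. Taking the trace gives $\Tr(\nabla^2 \log p) \geq -Ld$ pointwise, hence $-\Tr(\nabla^2 \log p) \leq Ld$, and combining with the identity above yields $\EE_{p}[\|\nabla \log p\|^2] \leq Ld$. Reading off the ambient dimension then finishes both claims: the node-feature case has $d = NF$, giving the bound $NFL$, and the structure case has $d = N^2$, giving $N^2 L$.

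The main obstacle is the rigorous justification of the integration by parts—specifically the vanishing of $\int \Delta p \, \mathrm{d}\Xb$ and the almost-everywhere existence and integrability of $\nabla^2 \log p$—rather than the algebra, which is elementary. These are exactly the points where one must lean on the regularity (twice differentiability) and bounded-second-moment conditions in Assumption~\ref{assump:dist_score}, and where the cited argument of \citep{chewi2024analysis} supplies the careful approximation/decay estimates; once those are granted, the bound follows directly from the trace inequality.
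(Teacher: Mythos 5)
Your proposal is correct and follows essentially the same route as the paper: integration by parts to reduce $\EE\|\nabla\log \PP\|^2$ to $-\EE[\Delta \log \PP]$, followed by the trace bound $-\Tr(\nabla^2\log\PP)\le Ld$ from the $L$-Lipschitz hypothesis, with $d=NF$ or $d=N^2$. If anything you are more careful than the paper's own derivation, which drops the minus sign in the step $\int\langle\nabla\PP,\nabla\log\PP\rangle\,\mathrm{d}\Xb = -\int \PP\,\Delta\log\PP\,\mathrm{d}\Xb$; your version has the signs right and makes explicit the decay conditions needed for $\int\Delta\PP\,\mathrm{d}\Xb=0$.
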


\begin{proof}
Using integration by parts for feature $\Xb$ , we have:
\begin{align*}
    \EE \|\nabla_{\Xb} \log \PP(\cG)\|^2 &= \int \PP(\Xb) \|\nabla_{\Xb} \log \PP(\cG)\|^2 d\Xb \\
    & = \int \left \langle \nabla \PP(\Xb), \nabla_{\Xb} \log \PP(\cG) \right \rangle dX,\\
    &= \int \PP(\Xb) \Delta \log \PP(\Xb) \dd \Xb \\ 
    & \leq NFL.
\end{align*}
The derivation for the structure is similar.

\end{proof}

\begin{lemma}\label{lemma:score_diff}
    For any \( 0 \leq t \leq s \leq T \), the forward process satisfies
\begin{align*}
    & \mathbb{E} \left[ \| \nabla_{\Xb} \log \PP(\cG_t) - \nabla_{\Xb} \log \PP(\cG_s) \|^2 \right] \lesssim \\
    & \quad \mathbb{E} \left[ \| \nabla_{\Xb} \log \PP(\cG_t) - \nabla_{\Xb} \log \PP(\alpha_{t,s}^{-1}\cG_s) \|^2 \right] +  \mathbb{E} \left[ \| \nabla_{\Xb} \log \PP(\cG_t) \|^2 \right] \left(1 - \alpha_{t,s}^{-1} \right)^2.
\end{align*}
\begin{align*}
    & \mathbb{E} \left[ \| \nabla_{\Ab} \log \PP(\cG_t) - \nabla_{\Ab} \log \PP(\cG_s) \|^2 \right] \lesssim \\
    & \quad \mathbb{E} \left[ \| \nabla_{\Ab} \log \PP(\cG_t) - \nabla_{\Ab} \log \PP(\alpha_{t,s}^{-1}\cG_s) \|^2 \right] +  \mathbb{E} \left[ \| \nabla_{\Ab} \log \PP(\cG_t) \|^2 \right] \left(1 - \alpha_{t,s}^{-1} \right)^2.
\end{align*}

\end{lemma}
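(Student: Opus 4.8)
The plan is to introduce the rescaled score $\nabla_{\Xb}\log\PP(\alpha_{t,s}^{-1}\cG_s)$ as an intermediate quantity and split the score difference into two controllable pieces. First I would write
\begin{align*}
\nabla_{\Xb}\log\PP(\cG_t) - \nabla_{\Xb}\log\PP(\cG_s)
={}& \big(\nabla_{\Xb}\log\PP(\cG_t) - \nabla_{\Xb}\log\PP(\alpha_{t,s}^{-1}\cG_s)\big) \\
&+ \big(\nabla_{\Xb}\log\PP(\alpha_{t,s}^{-1}\cG_s) - \nabla_{\Xb}\log\PP(\cG_s)\big),
\end{align*}
and then apply $\|\ve+\vf\|^2 \le 2\|\ve\|^2 + 2\|\vf\|^2$ followed by $\EE[\cdot]$. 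The first group is exactly the first term on the right-hand side of the claim, so the whole task reduces to bounding $\EE\|\nabla_{\Xb}\log\PP(\alpha_{t,s}^{-1}\cG_s) - \nabla_{\Xb}\log\PP(\cG_s)\|^2$ by $\EE\|\nabla_{\Xb}\log\PP(\cG_t)\|^2(1-\alpha_{t,s}^{-1})^2$.

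The key ingredient is a scaling relation between the time-$s$ and time-$t$ marginal scores. Because the forward dynamics are OU, the Markov transition is $\Xb_s\mid\Xb_t\sim\NN(\alpha_{t,s}\Xb_t,(1-\alpha_{t,s}^2)\Ib)$, so $\PP_s$ is precisely the Gaussian perturbation of $\PP_t$ with mean-scale $\mu=\alpha_{t,s}$ and variance $\sigma^2=1-\alpha_{t,s}^2$. Lemma~\ref{lemma:gaussian_perturbation_form} then gives $\nabla_{\Xb}\log\PP(\cG_s)=\alpha_{t,s}^{-1}\EE[\nabla_{\Xb}\log\PP(\cG_t)\mid\cG_s]$, and since the posterior $\Xb_t\mid\Xb_s$ is centered at $\alpha_{t,s}^{-1}\cG_s$, this collapses (up to the higher-order discrepancy absorbed into the first group) to $\nabla_{\Xb}\log\PP(\cG_s)=\alpha_{t,s}^{-1}\nabla_{\Xb}\log\PP(\alpha_{t,s}^{-1}\cG_s)$. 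Substituting this into the second group turns it into $(1-\alpha_{t,s}^{-1})\nabla_{\Xb}\log\PP(\alpha_{t,s}^{-1}\cG_s)$, whose squared expectation is $(1-\alpha_{t,s}^{-1})^2\EE\|\nabla_{\Xb}\log\PP(\alpha_{t,s}^{-1}\cG_s)\|^2$; comparing this score magnitude with $\EE\|\nabla_{\Xb}\log\PP(\cG_t)\|^2$ up to the absolute constant hidden in $\lesssim$ yields the second claimed term. The adjacency inequality is obtained by the verbatim argument, replacing the feature dimension $N\times F$ by $N\times N$.

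I expect the genuine obstacle to be making the scaling identity rigorous rather than heuristic. The relation $\nabla_{\Xb}\log\PP(\cG_s)=\alpha_{t,s}^{-1}\nabla_{\Xb}\log\PP(\alpha_{t,s}^{-1}\cG_s)$ holds only after replacing the conditional expectation in Lemma~\ref{lemma:gaussian_perturbation_form} by its evaluation at the posterior mean, so I must verify that the resulting Jensen gap is dominated by the first right-hand term; this is where the $L$-Lipschitz regularity of the score from Assumption~\ref{assump:dist_score} enters. A secondary subtlety is that $\alpha_{t,s}^{-1}\cG_s$ is not equal in law to $\cG_t$---it carries the extra variance $\alpha_{t,s}^{-2}(1-\alpha_{t,s}^2)$---so the final replacement of $\EE\|\nabla_{\Xb}\log\PP(\alpha_{t,s}^{-1}\cG_s)\|^2$ by $\EE\|\nabla_{\Xb}\log\PP(\cG_t)\|^2$ must be carried out only within the $\lesssim$ bound and not as an equality.
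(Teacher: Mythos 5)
Your skeleton matches the paper's: the same triangle decomposition through the intermediate point $\nabla_{\Xb}\log\PP(\alpha_{t,s}^{-1}\cG_s)$, and Lemma~\ref{lemma:gaussian_perturbation_form} as the key ingredient. The gap is in how you handle the second piece, $\nabla_{\Xb}\log\PP(\alpha_{t,s}^{-1}\cG_s)-\nabla_{\Xb}\log\PP(\cG_s)$. You propose to collapse the identity $\nabla_{\Xb}\log\PP(\cG_s)=\alpha_{t,s}^{-1}\,\EE[\nabla_{\Xb}\log\PP(\cG_t)\mid\cG_s]$ to $\alpha_{t,s}^{-1}\nabla_{\Xb}\log\PP(\alpha_{t,s}^{-1}\cG_s)$ by evaluating at ``the posterior mean,'' and then to compare $\EE\|\nabla_{\Xb}\log\PP(\alpha_{t,s}^{-1}\cG_s)\|^2$ with $\EE\|\nabla_{\Xb}\log\PP(\cG_t)\|^2$. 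This requires three separate approximations, none of which is established: the posterior $\PP_{t|s}$ in Lemma~\ref{lemma:gaussian_perturbation_form} is tilted by the prior $\PP$, so its mean is \emph{not} $\alpha_{t,s}^{-1}\cG_s$ in general; even at the correct mean, pushing the nonlinear map $\nabla\log\PP$ through the expectation leaves a Jensen gap that you only promise to control later via Lipschitzness; and $\alpha_{t,s}^{-1}\cG_s$ is not equal in law to $\cG_t$, so the final comparison of score magnitudes is yet another unproven step. Each of these would need its own quantitative argument, and it is not evident that the resulting errors are ``absorbed into the first group.''

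The paper closes all three holes with a single algebraic rearrangement, and you should adopt it. Since $\nabla_{\Xb}\log\PP(\alpha_{t,s}^{-1}\cG_s)$ is $\cG_s$-measurable, the perturbation identity gives
\begin{align*}
\nabla_{\Xb}\log\PP(\cG_s)-\nabla_{\Xb}\log\PP(\alpha_{t,s}^{-1}\cG_s)
=\EE\bigl[\,\alpha_{t,s}^{-1}\nabla_{\Xb}\log\PP(\cG_t)-\nabla_{\Xb}\log\PP(\alpha_{t,s}^{-1}\cG_s)\bigm|\cG_s\,\bigr].
\end{align*}
Apply the conditional Jensen inequality for $\|\cdot\|^2$, which leaves an unconditional expectation over the \emph{joint} law of $(\cG_t,\cG_s)$, and only then split $\alpha_{t,s}^{-1}v-w=(\alpha_{t,s}^{-1}-1)v+(v-w)$ with $v=\nabla_{\Xb}\log\PP(\cG_t)$ and $w=\nabla_{\Xb}\log\PP(\alpha_{t,s}^{-1}\cG_s)$, followed by $\|a+b\|^2\le 2\|a\|^2+2\|b\|^2$. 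The first summand yields exactly $(1-\alpha_{t,s}^{-1})^2\,\EE\|\nabla_{\Xb}\log\PP(\cG_t)\|^2$ under the correct law, and the second reproduces the first right-hand term of the lemma. No posterior-mean evaluation, no Jensen-gap estimate, and no comparison between the laws of $\alpha_{t,s}^{-1}\cG_s$ and $\cG_t$ is needed; as planned, your argument does not close without these repairs. The adjacency case is then indeed verbatim.
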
 

\begin{proof}
 We start with proving for the feature $\Xb$. By our choice of hyper-parameter, the forward process is a OU process with condition density:
\begin{align*}
     \Xb_s | \Xb_t \sim \NN(\alpha_{t,s} \Xb_t, (1 - \alpha_{t,s}^2) \Ib)\\
     \Ab_s | \Ab_t \sim \NN(\alpha_{t,s} \Ab_t, (1 - \alpha_{t,s}^2) \Ib)
\end{align*}
from Lemma~\ref{lemma:gaussian_perturbation_form}, we can rewrite \( \nabla_{\Xb} \log \PP (\cG_s) \) as
\[
\nabla_{\Xb} \log \PP (\cG_s) = \alpha_{t,s}^{-1} \mathbb{E}_{\PP_{t|s}} \nabla_{\Xb} \log \PP_t(\cG),
\]
where \( \PP_{t|s} \) is the conditional density of \( \cG_t \) given \( \cG_s \). Thus the discretization error can be bounded by
\begin{align*}
    \mathbb{E} \left[ \| \nabla_{\Xb} \log \PP(\alpha_{t,s}^{-1} \cG_s) - \nabla_{\Xb} \log \PP(\cG_s) \|^2 \right] &= \mathbb{E}_{\PP_s} \left[ \| \alpha_{t,s}^{-1} \mathbb{E}_{\PP_{t|s}} \nabla_{\Xb} \log \PP(\cG_t) - \nabla_{\Xb} \log \PP_t (\alpha_{t,s}^{-1} \cG_s) \|^2 \right] \\
    & \leq \mathbb{E} \left[ \| \alpha_{t,s}^{-1} \nabla_{\Xb} \log \PP(\cG_t) - \nabla_{\Xb} \log \PP(\alpha_{t,s}^{-1} \cG_s) \|^2 \right] \\
    &\leq 2(1 - \alpha_{t,s}^{-1})^2 \mathbb{E} \left[ \| \nabla_{\Xb} \log \PP(\cG_t) \|^2 \right] + \\
    &\quad \quad 2 \mathbb{E} \left[ \| \nabla_{\Xb} \log \PP_t(\cG_t) - \nabla_{\Xb} \log \PP (\alpha_{t,s}^{-1} \cG_s) \|^2 \right].
\end{align*}

Therefore, splitting the error into the space-discretization and the time-discretization error, we have
\begin{align*}
    \mathbb{E}  & \left[ \| \nabla_{\Xb} \log \PP(\cG_t) - \nabla_{\Xb} \log \PP(\alpha_{t,s}^{-1} \cG_s) \|^2 \right] \\
    & \leq 2 \mathbb{E} \left[ \| \nabla_{\Xb} \log \PP(\cG_t) - \nabla_{\Xb} \log \PP(\alpha_{t,s}^{-1} \cG_s) \|^2 \right] + 2 \mathbb{E} \left[ \| \nabla_{\Xb} \log \PP(\alpha_{t,s}^{-1} \cG_s) - \nabla_{\Xb} \log \PP(\cG_s) \|^2 \right] \\
    & \leq 2(1 - \alpha_{t,s}^{-1})^2 \mathbb{E} \left[ \| \nabla_{\Xb} \log \PP(\cG_t) \|^2 \right] + 4 \mathbb{E} \left[ \| \nabla_{\Xb} \log \PP(\cG_t) - \nabla \log \PP(\alpha_{t,s}^{-1} \cG_s) \|^2 \right]\\
    & \lesssim \mathbb{E} \left[ \| \nabla_{\Xb} \log \PP(\cG_t) \|^2 \right] \left(1 - \alpha_{t,s}^{-1} \right)^2 +   \mathbb{E} \left[ \| \nabla_{\Xb} \log \PP(\cG_t) - \nabla_{\Xb} \log \PP(\alpha_{t,s}^{-1}\cG_s) \|^2 \right].
\end{align*}
The proof for the structure matrix $\Ab$ is symmetric and therefore obmitted here. 

\end{proof}

\begin{lemma}\label{lemma:score_interval_diff}
    For \( t_{k-1} \leq t \leq t_k \), if \( L \geq 1 \) and \( \Delta_{t_k} \leq 1 \), we have:
\begin{align*}
    \EE \|\nabla_{\Xb} \log \PP (\cG_t) - \nabla_{\Xb} \log \PP (\cG_{t_k})\|^2 & \lesssim NFL^2 (t_k-t) + NFL(t_k-t)^2, \\
    \EE \|\nabla_{\Ab} \log \PP (\cG_t) - \nabla_{\Ab} \log \PP (\cG_{t_k})\|^2 & \lesssim N^2L^2 (t_k-t) + N^2L(t_k-t)^2.
\end{align*}

\end{lemma}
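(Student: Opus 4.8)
The plan is to reduce the claim to Lemma~\ref{lemma:score_diff} applied with $s = t_k$, which splits the quantity $\EE\|\nabla_{\Xb} \log \PP(\cG_t) - \nabla_{\Xb} \log \PP(\cG_{t_k})\|^2$ into a \emph{space-discretization} term $\EE\|\nabla_{\Xb} \log \PP(\cG_t) - \nabla_{\Xb} \log \PP(\alpha_{t,t_k}^{-1}\cG_{t_k})\|^2$ and a \emph{scaling} term $\EE\|\nabla_{\Xb} \log \PP(\cG_t)\|^2 (1-\alpha_{t,t_k}^{-1})^2$. I would bound these two pieces separately and recombine, noting throughout that $t_k - t \le \Delta_{t_k} \le 1$ and that $L \geq 1$.

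For the scaling term, Lemma~\ref{lemma:score_bound} immediately gives $\EE\|\nabla_{\Xb} \log \PP(\cG_t)\|^2 \le NFL$, so it remains to control $(1-\alpha_{t,t_k}^{-1})^2 = (e^{(t_k-t)/2}-1)^2$. Since $(t_k - t)/2 \le 1/2$, the elementary convexity estimate $e^x - 1 \le (e-1)x$ on $[0,1]$ yields $(e^{(t_k-t)/2}-1)^2 \lesssim (t_k-t)^2$, so the scaling term is $\lesssim NFL(t_k-t)^2$, matching the second term in the claim.

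For the space-discretization term I would invoke the $L$-Lipschitz property of $\nabla_{\Xb} \log \PP_t$ (Assumption~\ref{assump:dist_score}) to pass to $L^2 \EE\|\Xb_t - \alpha_{t,t_k}^{-1}\Xb_{t_k}\|^2$, i.e. the distance between the arguments in the feature coordinate. The forward OU conditional law $\Xb_{t_k}\mid\Xb_t \sim \NN(\alpha_{t,t_k}\Xb_t,(1-\alpha_{t,t_k}^2)\Ib)$ shows that $\alpha_{t,t_k}^{-1}\Xb_{t_k}\mid\Xb_t \sim \NN(\Xb_t, \alpha_{t,t_k}^{-2}(1-\alpha_{t,t_k}^2)\Ib)$, whence $\EE\|\Xb_t - \alpha_{t,t_k}^{-1}\Xb_{t_k}\|^2 = NF\,\alpha_{t,t_k}^{-2}(1-\alpha_{t,t_k}^2) = NF(e^{t_k-t}-1)$. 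Applying the same $[0,1]$ exponential estimate bounds this by $\lesssim NF(t_k-t)$, so the space term is $\lesssim NFL^2(t_k-t)$, the first term in the claim. Summing the two bounds yields the feature inequality, and the structure inequality follows by the identical argument with $NF$ replaced by $N^2$, using the $N^2$ version of Lemma~\ref{lemma:score_bound} together with the OU conditional law for $\Ab$.

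The only delicate point is the bookkeeping of the dimensional factor in the space term: one must ensure that the Lipschitz/perturbation estimate contributes exactly the feature dimension $NF$ (respectively $N^2$) rather than the full graph dimension, which is why I track the perturbation in the relevant coordinate block of $\cG$ when unwinding Lemma~\ref{lemma:score_diff}. Everything else is a routine combination of the $L$-Lipschitz bound, Lemma~\ref{lemma:score_bound}, and the estimate $e^x - 1 \lesssim x$ on $[0,1]$ that is available precisely because $\Delta_{t_k}\le 1$; no new tools are required.
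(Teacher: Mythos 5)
Your proposal is correct and follows essentially the same route as the paper: both start from the decomposition of Lemma~\ref{lemma:score_diff} with $s=t_k$, bound the scaling term by combining Lemma~\ref{lemma:score_bound} with the elementary estimate $e^x-1\lesssim x$ on $[0,1]$, and bound the space-discretization term via the Lipschitz property of the score from Assumption~\ref{assump:dist_score}. The one place you genuinely diverge is the space term: the paper replaces $\cG_{t_k}$ by the deterministic rescaling $\alpha_{t,t_k}^{-1}\cG_t$ of the \emph{same} realization and then estimates $\EE\|\Ab_t\Xb_t-\alpha_{t,t_k}^{-2}\Ab_t\Xb_t\|^2=NFL^{-2}\cdot NFL^2(e^{2(t_k-t)}-1)$ (with the dimension factor $NF$ appearing without explicit justification), whereas you compute $\EE\|\Xb_t-\alpha_{t,t_k}^{-1}\Xb_{t_k}\|^2=NF\,\alpha_{t,t_k}^{-2}(1-\alpha_{t,t_k}^{2})$ exactly from the OU conditional law. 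Your version is cleaner in that the factor $NF$ arises transparently as the trace of the conditional covariance, and it also matches the $(t_k-t)^2$ power in the scaling term of the final statement, which the paper's intermediate display momentarily drops. One small caution: in the joint paradigm the score is a function of $\Ab_t\Xb_t$ and both blocks of $\cG$ are rescaled by $\alpha_{t,t_k}^{-1}$, so a fully careful unwinding of the Lipschitz step should track the perturbation of the product $\Ab\Xb$ (which picks up $\alpha_{t,t_k}^{-2}$, as the paper does) rather than the feature block alone; this does not change the order $(t_k-t)$ of the resulting bound, but it is the bookkeeping point you flag at the end and it deserves to be made explicit rather than left implicit.
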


\begin{proof}
    We start with proving for the feature $\Xb$. 
    
By Lemmas~\ref{lemma:score_diff}, we have that
\begin{align*}
    & \EE \|\nabla_{\Xb} \log \PP(\cG_t) - \nabla_{\Xb} \log \PP(\cG_{t_k})\| \\
    &\lesssim \mathbb{E} \left[ \| \nabla_{\Xb} \log \PP(\cG_t) - \nabla_{\Xb} \log \PP(\alpha_{t,t_k}^{-1}\cG_t) \|^2 \right] +  \mathbb{E} \left[ \| \nabla_{\Xb} \log \PP(\cG_t) \|^2 \right] \left(1 - \alpha_{t,s}^{-1} \right)^2.
\end{align*}
Next, we tackle each term of the equation above. By Assumption~\ref{assump:dist_score}, we have that,
\begin{align*}
     & \EE \|\nabla_{\Xb} \log \PP (\cG_t) - \nabla_{\Xb} \log \PP (\cG_{t_k})\|^2 \\
     & = \EE \|\nabla_{\Xb} \log \PP (\cG_t) - \nabla_{\Xb} \log \PP (\alpha_{t,t_k}^{-1}\cG_{t})\|^2 \\
    & \leq NFL^2 \EE \|\Ab_t\Xb_t - \alpha_{t,t_k}^{-2}\Ab_t\Xb_t\|^2  \\
    & = NFL^2 (e^{2(t_k-t)} - 1) 
\end{align*}

Using the premise that \( t_k - t \leq \Delta_k \leq 1 \):
\begin{align*}
    \EE \|\nabla_{\Xb} \log \PP (\cG_t) - \nabla_{\Xb} \log \PP (\cG_{t_k})\| \lesssim NFL^2 (t_k-t)
\end{align*}

Then, by Lemma~\ref{lemma:score_bound}, we have that 
\begin{align*}
    \EE \|\nabla_{\Xb} \log \PP(\cG_t)\|^2 (1 - \alpha_{t,k}^{-1})^2    & \leq NFL(t_k-t)
\end{align*}

Thus, we conclude that:

\begin{align*}
    \EE \|\nabla_{\Xb} \log \PP (\cG_t) - \nabla_{\Xb} \log \PP (\cG_{t_k})\|^2 \lesssim NFL^2 (t_k-t) + NFL(t_k-t)^2
\end{align*}

Similarly for the structure, we have that By Lemmas~\ref{lemma:score_diff}, we have that
\begin{align*}
    & \EE \|\nabla_{\Ab} \log \PP(\cG_t) - \nabla_{\Ab} \log \PP(\cG_{t_k})\| \\
    &\lesssim \mathbb{E} \left[ \| \nabla_{\Ab} \log \PP(\cG_t) - \nabla_{\Ab} \log \PP(\alpha_{t,t_k}^{-1}\cG_t) \|^2 \right] +  \mathbb{E} \left[ \| \nabla_{\Ab} \log \PP(\cG_t) \|^2 \right] \left(1 - \alpha_{t,s}^{-1} \right)^2.
\end{align*}
Next, we tackle each term of the equation above. By Assumption~\ref{assump:dist_score}, we have that,
\begin{align*}
     & \EE \|\nabla_{\Ab} \log \PP (\cG_t) - \nabla_{\Ab} \log \PP (\cG_{t_k})\|^2 \\
     & = \EE \|\nabla_{\Ab} \log \PP (\cG_t) - \nabla_{\Ab} \log \PP (\alpha_{t,t_k}^{-1}\cG_{t})\|^2 \\
    & \leq N^2L^2 \EE \|\Ab_t\Xb_t - \alpha_{t,t_k}^{-2}\Ab_t\Xb_t\|^2  \\
    & = N^2L^2 (e^{2(t_k-t)} - 1) 
\end{align*}

Using the premise that \( t_k - t \leq \Delta_k \leq 1 \):
\begin{align*}
    \EE \|\nabla_{\Ab} \log \PP (\cG_t) - \nabla_{\Ab} \log \PP (\cG_{t_k})\| \lesssim N^2L^2 (t_k-t)^2
\end{align*}

Then, by Lemma~\ref{lemma:score_bound}, we have that 
\begin{align*}
    \EE \|\nabla_{\Ab} \log \PP(\cG_t)\|^2 (1 - \alpha_{t,k}^{-1})^2    & \leq N^2(t_k-t)
\end{align*}

Thus, we conclude that:

\begin{align*}
    \EE \|\nabla_{\Xb} \log \PP (\cG_t) - \nabla_{\Xb} \log \PP (\cG_{t_k})\|^2 \lesssim N^2L^2 (t_k-t) + N^2L(t_k-t)^2
\end{align*}

\end{proof}

\begin{lemma}\label{lemma:score_interval_diff_feat}
    For the graph paradigm given by Eq.~\ref{eq:paradigm1}, under the premise $\|\Ab^*\|^2\leq \sigma_{\Ab}^2$, for \( t_{k-1} \leq t \leq t_k \), if \( L \geq 1 \) and \( \Delta_{t_k} \leq 1 \), we have:
\begin{align*}
    \EE \|\nabla_{\Xb} \log \PP (\cG_t) - \nabla_{\Xb} \log \PP (\cG_{t_k})\|^2 & \lesssim  NFL^2 (t_k-t) \sigma_{\Ab}^2 + NFL(t_k-t)^2, \\
\end{align*}
\end{lemma}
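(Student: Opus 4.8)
The plan is to mirror the two-term decomposition used for the joint paradigm in Lemma~\ref{lemma:score_interval_diff}, while accounting for the fact that in the feature-generation paradigm of Eq.~\ref{eq:paradigm1} the adjacency matrix is frozen at $\Ab^*$ and only $\Xb$ diffuses. First I would apply Lemma~\ref{lemma:score_diff} with $s=t_k$ to split the quantity of interest into a space-discretization (Lipschitz) term and a score-magnitude term:
\begin{align*}
\EE\|\nabla_{\Xb}\log\PP(\cG_t)-\nabla_{\Xb}\log\PP(\cG_{t_k})\|^2 &\lesssim \EE\|\nabla_{\Xb}\log\PP(\cG_t)-\nabla_{\Xb}\log\PP(\alpha_{t,t_k}^{-1}\cG_t)\|^2\\
&\quad + \EE\|\nabla_{\Xb}\log\PP(\cG_t)\|^2\,(1-\alpha_{t,t_k}^{-1})^2.
\end{align*}
The crucial structural point --- and the only genuine departure from Lemma~\ref{lemma:score_interval_diff} --- is that, since $\Ab_t\equiv\Ab^*$ carries no diffusion, the Gaussian-perturbation identity of Lemma~\ref{lemma:gaussian_perturbation_form} introduces the mean-rescaling factor $\alpha_{t,t_k}^{-1}$ only on the feature slot. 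Hence, writing the score through its functional form $\mathscr{F}(\Ab^*\Xb_t)$ from Assumption~\ref{assump:dist_score}, the rescaled argument is $\mathscr{F}(\alpha_{t,t_k}^{-1}\Ab^*\Xb_t)$: the product is scaled by a \emph{single} factor $\alpha_{t,t_k}^{-1}$, in contrast to the $\alpha_{t,t_k}^{-2}$ appearing in the joint case.

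For the first term I would invoke the $L$-Lipschitzness of $\mathscr{F}$ together with submultiplicativity of the Frobenius norm and the premise $\|\Ab^*\|^2\le\sigma_{\Ab}^2$:
\begin{align*}
\EE\|\nabla_{\Xb}\log\PP(\cG_t)-\nabla_{\Xb}\log\PP(\alpha_{t,t_k}^{-1}\cG_t)\|^2 \le L^2(1-\alpha_{t,t_k}^{-1})^2\,\EE\|\Ab^*\Xb_t\|^2 \le L^2(1-\alpha_{t,t_k}^{-1})^2\,\sigma_{\Ab}^2\,\EE\|\Xb_t\|^2.
\end{align*}
The OU conditional density of Eq.~\ref{eq:conditional_density} and the bounded-second-moment part of Assumption~\ref{assump:dist_score} give $\EE\|\Xb_t\|^2\lesssim NF$ (up to the constant $H_{\Xb}$), and since $\alpha_{t,t_k}^{-1}=e^{(t_k-t)/2}$ the elementary chain $(1-\alpha_{t,t_k}^{-1})^2=(e^{(t_k-t)/2}-1)^2\le e^{(t_k-t)}-1\lesssim (t_k-t)$ for $t_k-t\le\Delta_{t_k}\le1$ turns this term into $NFL^2\sigma_{\Ab}^2(t_k-t)$, matching the leading term of the claim.

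For the second term I would use Lemma~\ref{lemma:score_bound} to bound $\EE\|\nabla_{\Xb}\log\PP(\cG_t)\|^2\le NFL$, and then the sharper estimate $(1-\alpha_{t,t_k}^{-1})^2\lesssim(t_k-t)^2$ (valid since $e^{(t_k-t)/2}-1\lesssim t_k-t$ on the interval), which yields $NFL(t_k-t)^2$. Adding the two contributions gives the stated bound $NFL^2\sigma_{\Ab}^2(t_k-t)+NFL(t_k-t)^2$; the hypothesis $L\ge1$ is what guarantees these two terms dominate the cross terms once the estimate is integrated in the proof of Theorem~\ref{thm:feature_only}. I expect the main obstacle to be a careful justification of the single-factor rescaling: one must verify that specializing Lemma~\ref{lemma:score_diff} and Lemma~\ref{lemma:gaussian_perturbation_form} to the degenerate, non-diffusing component $\Ab^*$ is legitimate, so that the perturbation acts on $\Xb$ alone and the structural norm $\sigma_{\Ab}^2$ enters the leading $(t_k-t)$ term while remaining absent from the higher-order $(t_k-t)^2$ term.
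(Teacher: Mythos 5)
Your proposal follows essentially the same route as the paper's proof: the same two-term decomposition via Lemma~\ref{lemma:score_diff}, the same use of the Lipschitz functional form $\mathscr{F}(\Ab^*\Xb_t)$ together with $\|\Ab^*\|^2\le\sigma_{\Ab}^2$ to get the leading $NFL^2\sigma_{\Ab}^2(t_k-t)$ term (including the single-factor rescaling $\alpha_{t,t_k}^{-1}$ on the feature slot only, which the paper also uses), and the same appeal to Lemma~\ref{lemma:score_bound} with $(1-\alpha_{t,t_k}^{-1})^2\lesssim(t_k-t)^2$ for the second term. Your explicit justification that $\EE\|\Xb_t\|^2\lesssim NF$ via the OU conditional density is a step the paper leaves implicit, but it does not change the argument.
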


\begin{proof}
    
By Lemmas~\ref{lemma:score_diff}, we have that
\begin{align*}
    & \EE \|\nabla_{\Xb} \log \PP(\cG_t) - \nabla_{\Xb} \log \PP(\cG_{t_k})\| \\
    &\lesssim \EE \|\nabla_{\Xb} \log \PP(\cG_t) - \nabla_{\Xb} \log \PP(\cG_{t_k})\|^2 + \EE \|\nabla_{\Xb} \log \PP(\cG_t)\|^2 (1 - \alpha_{t,k}^{-1})^2.
\end{align*}
Next, we tackle each term of the equation above. By Assumption~\ref{assump:dist_score}, we have that,
\begin{align*}
     & \EE \|\nabla_{\Xb} \log \PP (\cG_t) - \nabla_{\Xb} \log \PP (\cG_{t_k})\|^2 \\
     & = \EE \|\nabla_{\Xb} \log \PP (\cG_t) - \nabla_{\Xb} \log \PP (\alpha_{t,t_k}^{-1}\cG_{t})\|^2 \\
    & \leq NFL^2 \EE \|\Xb_t - \alpha_{t,t_k}^{-1}\Xb_t\|^2 \|\Ab^*\|^2 \\
    & = NFL^2 (e^{t_k-t} - 1) \sigma_{\Ab}^2
\end{align*}

Using the premise that \( t_k - t \leq \Delta_k \leq 1 \):
\begin{align*}
    \EE \|\nabla_{\Xb} \log \PP (\cG_t) - \nabla_{\Xb} \log \PP (\cG_{t_k})\|^2 \lesssim NFL^2 (t_k-t) \sigma_{\Ab}^2
\end{align*}

Then, by Lemma~\ref{lemma:score_bound}, we have that 
\begin{align*}
    \EE \|\nabla_{\Xb} \log \PP(\cG_t)\|^2 (1 - \alpha_{t,k}^{-1})^2    & \leq NFL(t_k-t)^2
\end{align*}
Thus, we conclude that:
\begin{align*}
    \EE \|\nabla_{\Xb} \log \PP (\cG_t) - \nabla_{\Xb} \log \PP (\cG_{t_k})\|^2 \lesssim NFL^2 (t_k-t) \sigma_{\Ab}^2 + NFL(t_k-t)^2
\end{align*}

We complete the proof.
\end{proof}

\begin{lemma}\label{lemma:score_interval_diff_struct}
   For graph generation paradigm given by Eq.~\ref{eq:paradigm2}, under the premise that $\|\Xb\|^2 \leq \sigma_{\Xb}^2$, for \( t_{k-1} \leq t \leq t_k \), if \( L \geq 1 \) and \( \Delta_{t_k} \leq 1 \), we have:
\begin{align*}
    \EE \|\nabla_{\Ab} \log \PP (\cG_t) - \nabla_{\Ab} \log \PP (\cG_{t_k})\|^2 & \lesssim N^2L^2 (t_k-t) \sigma_{\Ab}^2 + N^2L(t_k-t)^2, \\
\end{align*}

\end{lemma}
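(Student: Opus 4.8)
The plan is to mirror the argument for the feature-generation paradigm in Lemma~\ref{lemma:score_interval_diff_feat}, exploiting the symmetry between $\Xb$ and $\Ab$ that arises when one of the two components is frozen. First I would apply Lemma~\ref{lemma:score_diff} (structure version) with $s = t_k$ to split the target quantity into a space-discretization term and a magnitude term:
\begin{align*}
\EE \|\nabla_{\Ab} \log \PP(\cG_t) - \nabla_{\Ab} \log \PP(\cG_{t_k})\|^2 &\lesssim \EE \|\nabla_{\Ab} \log \PP(\cG_t) - \nabla_{\Ab} \log \PP(\alpha_{t,t_k}^{-1}\cG_t)\|^2 \\
&\quad + \EE \|\nabla_{\Ab} \log \PP(\cG_t)\|^2 \bigl(1 - \alpha_{t,t_k}^{-1}\bigr)^2.
\end{align*}
I would then bound the two terms separately, which reduces the problem to two short estimates.

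For the first (space-discretization) term, the key observation is that under the paradigm in Eq.~\ref{eq:paradigm2} the feature matrix is held fixed at $\Xb = \Xb^*$, so in the functional form $\mathscr{F}(\Ab_t \Xb_t)$ guaranteed by Assumption~\ref{assump:dist_score} the two arguments $\cG_t$ and $\alpha_{t,t_k}^{-1}\cG_t$ differ only through $\Ab_t$. Combining the $L$-Lipschitz property of the score with the premise $\|\Xb^*\|^2 \leq \sigma_{\Xb}^2$ then gives
\begin{align*}
\EE \|\nabla_{\Ab} \log \PP(\cG_t) - \nabla_{\Ab} \log \PP(\alpha_{t,t_k}^{-1}\cG_t)\|^2 \leq N^2 L^2 \,\EE \|\Ab_t - \alpha_{t,t_k}^{-1}\Ab_t\|^2 \, \|\Xb^*\|^2 = N^2 L^2 \bigl(e^{t_k - t} - 1\bigr)\sigma_{\Xb}^2,
\end{align*}
and the assumption $t_k - t \leq \Delta_{t_k} \leq 1$ linearizes $e^{t_k-t} - 1 \lesssim t_k - t$, yielding $N^2 L^2 (t_k - t)\sigma_{\Xb}^2$. (The stated bound appears to carry a typo, writing $\sigma_{\Ab}^2$ where the correct dependence, inherited from the fixed feature norm, is $\sigma_{\Xb}^2$.)

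For the magnitude term I would invoke Lemma~\ref{lemma:score_bound}, which gives $\EE \|\nabla_{\Ab} \log \PP(\cG_t)\|^2 \leq N^2 L$, together with the elementary estimate $(1 - \alpha_{t,t_k}^{-1})^2 \lesssim (t_k - t)^2$ valid for $t_k - t \leq 1$, producing a contribution of order $N^2 L (t_k - t)^2$. Adding the two contributions yields the claim. The only genuinely paradigm-specific step, and hence the main obstacle, is the first estimate: one must use that $\Xb^*$ is frozen so that the Lipschitz bound on $\mathscr{F}(\Ab_t\Xb_t)$ factors the feature norm $\|\Xb^*\|^2$ out of the perturbation, which is precisely what allows $\sigma_{\Xb}^2$ (rather than an uncontrolled feature moment) to appear. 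Every remaining step is a transcription of the feature-paradigm computation under the substitutions $NF \mapsto N^2$ and $\Ab^* \mapsto \Xb^*$.
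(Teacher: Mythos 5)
Your proposal matches the paper's own proof essentially step for step: the same split via Lemma~\ref{lemma:score_diff}, the same use of the frozen $\Xb^*$ and the Lipschitz functional form to extract $\sigma_{\Xb}^2$ from the space-discretization term, and the same invocation of Lemma~\ref{lemma:score_bound} for the magnitude term. You also correctly flag that the $\sigma_{\Ab}^2$ appearing in the stated bound is a typo for $\sigma_{\Xb}^2$, which is what the paper's own intermediate computation actually produces.
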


\begin{proof}
    
By Lemmas~\ref{lemma:score_diff}, we have that
\begin{align*}
    & \EE \|\nabla_{\Ab} \log \PP(\cG_t) - \nabla_{\Ab} \log \PP(\cG_{t_k})\| \\
    &\lesssim \EE \|\nabla_{\Ab} \log \PP(\cG_t) - \nabla_{\Ab} \log \PP(\cG_{t_k})\|^2 + \EE \|\nabla_{\Ab} \log \PP(\cG_t)\|^2 (1 - \alpha_{t,k}^{-1})^2.
\end{align*}
Next, we tackle each term of the equation above. By Assumption~\ref{assump:dist_score}, we have that,
\begin{align*}
     & \EE \|\nabla_{\Ab} \log \PP (\cG_t) - \nabla_{\Ab} \log \PP (\cG_{t_k})\|^2 \\
     & = \EE \|\nabla_{\Ab} \log \PP (\cG_t) - \nabla_{\Xb} \log \PP (\alpha_{t,t_k}^{-1}\cG_{t})\|^2 \\
    & \leq N^2L^2 \EE \|\Xb^*\|^2 \EE \|\Ab_t - \alpha_{t,t_k}^{-1}\Ab_t\|^2 \\
    & = N^2L^2 (e^{t_k-t} - 1)\sigma_{\Xb}^2 
\end{align*}

Using the premise that \( t_k - t \leq \Delta_k \leq 1 \):
\begin{align*}
    \EE \|\nabla_{\Ab} \log \PP (\cG_t) - \nabla_{\Ab} \log \PP (\cG_{t_k})\|^2 \lesssim N^2L^2(t_k-t)\sigma_{\Xb}^2 
\end{align*}

Then, by Lemma~\ref{lemma:score_bound}, we have that 
\begin{align*}
    \EE \|\nabla_{\Ab} \log \PP(\cG_t)\|^2 (1 - \alpha_{t,k}^{-1})^2    & \leq N^2(t_k-T)^2
\end{align*}

Thus, we conclude that:
\begin{align*}
    \EE \|\nabla_{\ab} \log \PP (\cG_t) - \nabla_{\ab} \log \PP (\cG_{t_k})\|^2 \lesssim N^2L^2 (t_k-t) \sigma_{\Ab}^2 + N^2L(t_k-t)^2
\end{align*}

We complete the proof.
\end{proof}

\begin{lemma}\label{lemma:score_interval_diff_joint}
    For \( t_{k-1} \leq t \leq t_k \), if \( L \geq 1 \) and \( \Delta_{t_k} \leq 1 \), we have:
\begin{align*}
    \EE \|\nabla_{\Xb} \log \PP (\cG_t) - \nabla_{\Xb} \log \PP (\cG_{t_k})\|^2 & \lesssim NFL^2(t_k - t)^2, \\
\end{align*}

\end{lemma}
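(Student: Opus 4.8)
The plan is to mirror the template of Lemmas~\ref{lemma:score_interval_diff}--\ref{lemma:score_interval_diff_struct}, specializing to the joint paradigm of Eq.~\ref{eq:join_generation} where \emph{both} $\Xb_t$ and $\Ab_t$ evolve under the OU dynamics. First I would invoke Lemma~\ref{lemma:score_diff} with $s=t_k$ to split the quantity of interest into a space-discretization term $\EE\|\nabla_{\Xb}\log\PP(\cG_t) - \nabla_{\Xb}\log\PP(\alpha_{t,t_k}^{-1}\cG_t)\|^2$ and a rescaling term $\EE\|\nabla_{\Xb}\log\PP(\cG_t)\|^2(1-\alpha_{t,t_k}^{-1})^2$. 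This reduces the problem to controlling each piece separately in terms of $(t_k-t)$.

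For the space-discretization term, I would use Assumption~\ref{assump:dist_score}, which states that the score has the GNN functional form $\mathscr{F}(\Ab_t\Xb_t)$ with $\mathscr{F}$ being $L$-Lipschitz. The crucial observation specific to the joint paradigm is that rescaling $\cG_t=(\Xb_t,\Ab_t)$ by $\alpha_{t,t_k}^{-1}$ rescales \emph{both} factors, so the argument $\Ab_t\Xb_t$ is rescaled by $\alpha_{t,t_k}^{-2}$. The $L$-Lipschitz bound then yields $L^2(1-\alpha_{t,t_k}^{-2})^2\,\EE\|\Ab_t\Xb_t\|^2$, and since $\alpha_{t,t_k}=e^{-(t_k-t)/2}$ we have $(1-\alpha_{t,t_k}^{-2})^2=(e^{t_k-t}-1)^2\lesssim(t_k-t)^2$ on the interval $t_k-t\le\Delta_{t_k}\le1$; the bounded-second-moment hypothesis supplies the dimensional factor $NF$, giving $\lesssim NFL^2(t_k-t)^2$.

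For the rescaling term, I would apply Lemma~\ref{lemma:score_bound} to bound $\EE\|\nabla_{\Xb}\log\PP(\cG_t)\|^2\le NFL$, and expand $(1-\alpha_{t,t_k}^{-1})^2=(e^{(t_k-t)/2}-1)^2\lesssim(t_k-t)^2$, giving $\lesssim NFL(t_k-t)^2$. Combining the two estimates and using $L\ge1$ (so that $NFL\le NFL^2$) collapses both contributions into the single quadratic bound $NFL^2(t_k-t)^2$, as claimed; the structure analog follows by the symmetric argument with $NF$ replaced by $N^2$ and the feature score replaced by the structure score.

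The main obstacle is the bookkeeping unique to the joint case: unlike the single-paradigm lemmas where one of $\Ab^*$ or $\Xb^*$ is a fixed deterministic matrix with a clean norm bound ($\sigma_{\Ab}^2$ or $\sigma_{\Xb}^2$), here both factors are random and simultaneously rescaled, so one must (i) correctly propagate the $\alpha_{t,t_k}^{-2}$ (rather than $\alpha_{t,t_k}^{-1}$) scaling through the product $\Ab_t\Xb_t$, and (ii) control $\EE\|\Ab_t\Xb_t\|^2$ via the explicit OU conditional densities of Eq.~\ref{eq:conditional_density} together with the bounded-second-moment part of Assumption~\ref{assump:dist_score}, so as to extract exactly the factor $NF$. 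Retaining the quadratic estimate $(e^{t_k-t}-1)^2\lesssim(t_k-t)^2$ (rather than a looser linear relaxation) is precisely what produces the purely quadratic rate stated here.
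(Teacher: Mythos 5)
Your proposal follows essentially the same route as the paper's proof: the Lemma~\ref{lemma:score_diff} decomposition into a space-discretization term and a rescaling term, the $L$-Lipschitz GNN form $\mathscr{F}(\Ab_t\Xb_t)$ with the $\alpha_{t,t_k}^{-2}$ rescaling of the product $\Ab_t\Xb_t$ for the first term, Lemma~\ref{lemma:score_bound} for the second, and $L\ge 1$ to merge the two contributions. If anything, your bookkeeping is tidier than the paper's own, whose concluding display degenerates to a linear-in-$(t_k-t)$ bound with a stray dimension symbol $d$; retaining $(e^{t_k-t}-1)^2\lesssim (t_k-t)^2$ as you do is what actually delivers the quadratic rate stated in the lemma.
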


\begin{proof}
    
By Lemmas~\ref{lemma:score_diff}, we have that
\begin{align*}
    & \EE \|\nabla_{\Xb} \log \PP(\cG_t) - \nabla_{\Xb} \log \PP(\cG_{t_k})\| \\
    &\lesssim \EE \|\nabla_{\Xb} \log \PP(\cG_t) - \nabla_{\Xb} \log \PP(\cG_{t_k})\|^2 + \EE \|\nabla_{\Xb} \log \PP(\cG_t)\|^2 (1 - \alpha_{t,k}^{-1})^2.
\end{align*}
Next, we tackle each term of the equation above. By Assumption~\ref{assump:dist_score}, we have that,
\begin{align*}
     & \EE \|\nabla_{\Xb} \log \PP (\cG_t) - \nabla_{\Xb} \log \PP (\cG_{t_k})\|^2 \\
     & = \EE \|\nabla_{\Xb} \log \PP (\cG_t) - \nabla_{\Xb} \log \PP (\alpha_{t,t_k}^{-1}\cG_{t})\|^2 \\
    & \leq NFL^2 \EE \|\Xb_t - \alpha_{t,t_k}^{-1}\Xb_t\|^2 \EE \|\Ab_t - \alpha_{t,t_k}^{-1}\Ab_t\|^2 \\
    & = NFL^2 (e^{t_k-t} - 1)^2 
\end{align*}

Using the premise that \( t_k - t \leq \Delta_k \leq 1 \):
\begin{align*}
    \EE \|\nabla_{\Xb} \log \PP (\cG_t) - \nabla_{\Xb} \log \PP (\cG_{t_k})\|^2 \lesssim NFL^2 (t_k-t)^2
\end{align*}

Then, by Lemma~\ref{lemma:score_bound}, we have that 
\begin{align*}
    \EE \|\nabla_{\Xb} \log \PP(\cG_t)\|^2 (1 - \alpha_{t,k}^{-1})^2    & \leq NF(t_k)
\end{align*}

Thus, we conclude that:

\[
\lesssim dL^2(t_k - t) + dL(t_k - t)^2 \lesssim dL^2(t_k - t).
\]

We complete the proof.
\end{proof}
\section{Proof of Main Results}
In this section, we present the proof of the main results. The overall structure of the proof for each paradigm is similar. We first use Lemma~\ref{lemma:decomposition} to obtain the decomposition of the convergence bound and then apply the intermediate results we prove in the last section to get the final expression.

\subsection{Proof of Theorem~\ref{thm:feature_only}}
In this section, we present the proof for Theorem~\ref{thm:feature_only}.
\begin{proof}[Proof of Theorem~\ref{thm:feature_only}]
    Let $\PP_0$ be the data distribution and $\hat{\PP}_{T}$ is the learned distribution through the sampling scheme. We start with considering the exponential integrator scheme. By Lemma~\ref{lemma:decomposition}, we can decompose the KL divergence of $\PP_0$ and $\hat{\PP}_{T}$ as follows,
    \begin{align}
        \mathrm{KL}(\PP(\Xb_0) \| {\PP}(\hat{\Xb}_{T})) & \lesssim \mathrm{KL}(\PP(\Xb_T)\|\bm{\Pi}_{\Xb}) +  \sum_{i=1}^{M} \int_{t_{i-1}}^{t_i} \|s_{\bm{\theta}}(\cG_{t_{i}},t_{i}) - \nabla_{\Xb} \log \PP(\cG_{t_i}) \|^2  \mathrm{d}t\\
        & + \sum_{i=1}^{M} \int_{t_{i-1}}^{t_i}  \EE\| \nabla_{\Xb}\log \PP (\cG_{t})- \nabla_{\Xb}\log {\PP} (\cG_t) \|^2   \mathrm{d}t. \notag 
    \end{align}
    Then, we tackle each term in the equation above. \\ 
    By Lemma~\ref{lemma:convergent_error}, we can immediately bound the first term as follows,
    \begin{align*}
        \mathrm{KL}(\PP_T(\Xb)\|\bm{\Pi}_{\Xb}) \lesssim (NF + H_{\Xb}) e^{-T}.
    \end{align*}
    Next, we tackle the second term. By Assumption~\ref{assump:estimation}, we have that,
    \begin{align*}  
        & \sum_{i=1}^{M} \int_{t_{i-1}}^{t_i} \|s_{\bm{\theta}}(\cG_{t_{i}},t_{i}) - \nabla_{\Xb} \log \PP(\cG_{t_i}) \|^2  \mathrm{d}t\\
        &= \sum_{i=1}^{M} \Delta_{t_i} \|s_{\bm{\theta}}(\cG_{t_{i}},t_{i}) - \nabla_{\Xb} \log \PP(\cG_{t_{i}}) \|^2 \\
        & \leq T \epsilon_{\Xb}^2
    \end{align*}
    Therefore, it remains to tackle the last term. By Lemma~\ref{lemma:score_interval_diff_feat}, we immediately get that 
    \begin{align*}
        & \sum_{i=1}^{M} \int_{t_{i-1}}^{t_i}  \EE\| \nabla_{\Xb}\log \PP (\cG_{t_i})- \nabla_{\Xb}\log {\PP} (\cG_t) \|^2   \mathrm{d}t \\
        & \lesssim \sum_{i=1}^{M} NFL^2 \sigma_{\Ab}^2  \int_{t_{i-1}}^{t_i} (t_i-t)\dd t + NFL  \int_{t_{i-1}}^{t_i} (t_i-t)^2 \\
        & \lesssim \sum_{i=1}^{M}  NFL^2 \sigma_{\Ab}^2 \Delta_{t_i}^2+ NFL\Delta_{t_i}^3
    \end{align*}    
    Combining all the results above, we get that, 
     \begin{align*}
        \mathrm{KL}(\PP(\Xb_0) \| \hat{\PP}(\Xb_{T})) & \lesssim \mathrm{KL}(\PP(\Xb_T)\|\bm{\Pi}_{\Xb}) +  \sum_{i=1}^{M} \int_{t_{i-1}}^{t_i} \|s_{\bm{\theta}}(\cG_{t_{i}},t_{i}) - \nabla_{\Xb} \log \PP(\cG_{t_i}) \|^2  \mathrm{d}t\\
        & + \sum_{i=1}^{M} \int_{t_{i-1}}^{t_i}  \EE\| \nabla_{\Xb}\log \PP (\cG_{t_i})- \nabla_{\Xb}\log {\PP} (\cG_t) \|^2   \mathrm{d}t \\
        & \lesssim (NF + H_{\Xb}) e^{-T} + T \epsilon_{\Xb}^2 + \sum_{i=1}^{M}  NFL^2 \sigma_{\Ab}^2 \Delta_{t_i}^2+ NFL\Delta_{t_i}^3
    \end{align*}
    This completes the derivation for the exponential integrator scheme. 

    Again, from Lemma~\ref{lemma:decomposition}, we have that the decomposition of the convergence bound of the Euler-Marumaya scheme is given by,

    \begin{align}
        \mathrm{KL}(\PP(\Xb)  \| \PP_{T}(\hat{\Xb})) & \lesssim \mathrm{KL}(\PP(\Xb_T)\|\bm{\Pi}_{\Xb}) +  \sum_{i=1}^{M} \int_{t_{i-1}}^{t_i} \|s_{\bm{\theta}}(\cG_{t_i},t_i) - \nabla_{\Xb} \log \PP(\cG_{t_i}) \|^2  \mathrm{d}t\\
        & + \sum_{i=1}^{M} \int_{t_{i-1}}^{t_i} \left( \EE\| \nabla_{\Xb}\log {\PP_t}(\cG_t)- \nabla_{\Xb}\log {\PP} (\cG_{t_i}) \|^2  + \EE \| \Xb_t - \Xb_{t_{i}}\|^2 \right) \mathrm{d}t, \notag
    \end{align}

    From comparing the decomposition of Eulery-Marumaya and the exponential integrator scheme, we can easily see that
    the difference between the Euler-Marumaya scheme and the exponential integrator scheme is the extra term on 
    \begin{align*}
       \sum_{i=1}^{M} \int_{t_{i-1}}^{t_i} \EE \|\Xb_t - \Xb_{t_i} \|^2 \dd t
    \end{align*}
    For this, we can use Lemma~\ref{lemma:euler_extra} and  immediately get that 
    \begin{align*}
        \EE \| \Xb_t - \Xb_{t_i} \|^2 \leq NF (t_i - t) + H_{\Xb} (t_i - t)^2, \quad t_{i-1} \leq t \leq t_i
    \end{align*}
    Substituting this result into the extra term, we have that,
    \begin{align*}
        & \sum_{i=1}^{M} \int_{t_{i-1}}^{t_i} \EE \|\Xb_t - \Xb_{t_i} \|^2 \dd t \\
        &\leq \sum_{i=1}^{M} \int_{t_{i-1}}^{t_i}  NF (t_i - t) + H_{\Xb} (t_i - t)^2 \dd t \\
        &\leq \sum_{i=1}^{M}  NF (t_i - t)^2 + H_{\Xb} (t_i - t)^3 \\
        & \leq  \sum_{i=1}^{M}  NF \Delta_{t_i}^2 + H_{\Xb} \Delta_{t_i}^3 
    \end{align*}
   Therefore, combining this result and the result of the exponential integrator scheme, we have, 
   \begin{align*}
        \mathrm{KL}(\PP(\Xb_0) \| \hat{\PP}(\Xb_{T})) & \lesssim \mathrm{KL}(\PP(\Xb_T)\|\bm{\Pi}_{\Xb}) +  \sum_{i=1}^{M} \int_{t_{i-1}}^{t_i} \|s_{\bm{\theta}}(\cG_{t_{i-1}},t_{i-1}) - \nabla_{\Xb} \log \PP(\cG_{t_{i-1}}) \|^2  \mathrm{d}t\\
        & + \sum_{i=1}^{M} \int_{t_{i-1}}^{t_i}  \EE\| \nabla_{\Xb}\log \PP (\cG_t)- \nabla_{\Xb}\log {\PP} (\cG_t) \|^2   \mathrm{d}t  \\
        & \lesssim (NF + H_{\Xb}) e^{-T} + T \epsilon_{\Xb}^2 + \sum_{i=1}^{M}  NFL^2 \sigma_{\Ab}^2 \Delta_{t_i}^2+ NFL\Delta_{t_i}^3 + \sum_{i=1}^{M}  NF \Delta_{t_i}^2 + H_{\Xb} \Delta_{t_i}^3
    \end{align*}
    This completes the proof.
\end{proof}

\subsection{Proof of Theorem~\ref{thm:structure_only}}
In this section, we present the proof for Theorem~\ref{thm:structure_only}.
\begin{proof}[Proof of Theorem~\ref{thm:structure_only}]
    Let $\PP_0$ be the data distribution and $\hat{\PP}_{T}$ is the learned distribution through the sampling scheme. We start with considering the exponential integrator scheme. By Lemma~\ref{lemma:decomposition}, we can decompose the KL divergence of $\PP_0$ and $\hat{\PP}_{T}$ as follows,
    \begin{align}
        \mathrm{KL}(\PP(\Ab_0) \| \PP(\hat{\Ab}_{T})) & \lesssim \mathrm{KL}(\PP(\Ab_T)\|\bm{\Pi}_{\Ab}) +  \sum_{i=1}^{M} \int_{t_{i-1}}^{t_i} \|s_{\bm{\theta}}(\cG_{t_{i}},t_{i}) - \nabla_{\Ab} \log \PP(\cG_{t_{i}}) \|^2  \mathrm{d}t\\
        & + \sum_{i=1}^{M} \int_{t_{i-1}}^{t_i}  \EE\| \nabla_{\Ab}\log \PP (\cG_{t_i})- \nabla_{\Ab}\log {\PP} (\cG_t) \|^2   \mathrm{d}t. \notag 
    \end{align}
    Then, we tackle each term in the equation above. \\ 
    By Lemma~\ref{lemma:convergent_error}, we can immediately bound the first term as follows,
    \begin{align*}
        \mathrm{KL}(\PP_T(\Ab)\|\bm{\Pi}_{\Ab}) \lesssim (N^2 + H_{\Ab}) e^{-T}.
    \end{align*}
    Next, we tackle the second term. By Assumption~\ref{assump:estimation}, we have that,
    \begin{align*}  
        & \sum_{i=1}^{M} \int_{t_{i-1}}^{t_i} \|s_{\bm{\phi}}(\cG_{t_{i}},t_{i}) - \nabla_{\Ab} \log \PP(\cG_{t_{i}}) \|^2  \mathrm{d}t\\
        &= \sum_{i=1}^{M} \Delta_{t_i} \|s_{\bm{\phi}}(\cG_{t_{i}},t_{i}) - \nabla_{\Ab} \log \PP(\cG_{t_{i}}) \|^2 \\
        & \leq T \epsilon_{\Ab}^2
    \end{align*}
    Therefore, it remains to tackle the last term. By Lemma~\ref{lemma:score_interval_diff_struct}, we immediately get that 
    \begin{align*}
        & \sum_{i=1}^{M} \int_{t_{i-1}}^{t_i}  \EE\| \nabla_{\Ab}\log \PP (\cG_t)- \nabla_{\Ab}\log {\PP} (\cG_{t_i}) \|^2   \mathrm{d}t \\
        & \lesssim \sum_{i=1}^{M}\int_{t_{i-1}}^{t_i}  N^2L^2 (t_i-t) \sigma_{\Ab}^2 + N^2L(t_i-t)^2\dd t\\
        & \lesssim N^2L^2 \sigma_{\Ab}^2  \sum_{i=1}^{M} \Delta_{t_i}^2 + N^2L  \sum_{i=1}^{M} \Delta_{t_i}^3
    \end{align*}    
    Combining all the results above, we get that, 
     \begin{align*}
        \mathrm{KL}(\PP(\Ab_0) \| \PP(\hat{\Ab}_{T})) & \lesssim \mathrm{KL}(\PP(\Ab_T)\|\bm{\Pi}_{\Ab}) +  \sum_{i=1}^{M} \int_{t_{i-1}}^{t_i} \|s_{\bm{\phi}}(\cG_{t_{i}},t_{i}) - \nabla_{\Ab} \log \PP(\cG_{t_{i}}) \|^2  \mathrm{d}t\\
        & + \sum_{i=1}^{M} \int_{t_{i-1}}^{t_i}  \EE\| \nabla_{\Ab}\log \PP (\cG_{t_i})- \nabla_{\Ab}\log {\PP} (\cG_t) \|^2   \mathrm{d}t \\
        & \lesssim (N^2 + H_{\Xb}) e^{-T} + T \epsilon_{\Ab}^2 + N^2L^2 \sigma_{\Xb}^2  \sum_{i=1}^{M} \Delta_{t_i}^2 + N^2L  \sum_{i=1}^{M} \Delta_{t_i}^3 \\
        &= (N^2 + H_{\Xb}) e^{-T} + T \epsilon_{\Ab}^2 + N^2L \left (L \sigma_{\Xb}^2  \sum_{i=1}^{M} \Delta_{t_i}^2 + \sum_{i=1}^{M} \Delta_{t_i}^3  \right ) 
    \end{align*}
    This complete the derivation for the exponential integrator scheme. 

    Again, from Lemma~\ref{lemma:decomposition}, we have that the decomposition of convergence bound of the Euler-Marumaya scheme is given by,

    \begin{align}
        \mathrm{KL}(\PP(\Ab)  \| \PP_{T}(\hat{\Ab})) & \lesssim \mathrm{KL}(\PP(\Ab_T)\|\bm{\Pi}_{\Ab}) +  \sum_{i=1}^{M} \int_{t_{i-1}}^{t_i} \|s_{\bm{\phi}}(\cG_{t_i},{t_i}) - \nabla_{\Ab} \log \PP(\cG_{t_i}) \|^2  \mathrm{d}t\\
        & + \sum_{i=1}^{M} \int_{t_{i-1}}^{t_i} \left( \EE\| \nabla_{\Ab}\log {\PP_t}(\cG_t)- \nabla_{\Ab}\log {\PP} (\cG_{t_i}) \|^2  + \EE \| \Ab_t - \Ab_{t_{i}}\|^2 \right) \mathrm{d}t, \notag
    \end{align}

    From comparing the decomposition of Eulery-Marumaya and the exponential integrator scheme, we can easily see that
    the difference between Euler-Marumaya scheme and the exponential integrator scheme is the extra term on 
    \begin{align*}
       \sum_{i=1}^{M} \int_{t_{i-1}}^{t_i} \EE \|\Ab_t - \Ab_{t_i} \|^2 \dd t
    \end{align*}
    For this, we can use Lemma~\ref{lemma:euler_extra} and  immediately get that 
    \begin{align*}
        \EE \| \Ab_t - \Ab_{t_i} \|^2 \leq N^2 (t_i - t) + H_{\Ab} (t_i - t)^2, \quad t_{i-1} \leq t \leq t_i
    \end{align*}
    Substituting this result into the extra term, we have that,
    \begin{align*}
        & \sum_{i=1}^{M} \int_{t_{i-1}}^{t_i} \EE \|\Ab_t - \Ab_{t_k} \|^2 \dd t \\
        &\leq \sum_{i=1}^{M} \int_{t_{i-1}}^{t_i}  N^2 (t_i - t) + H_{\Ab} (t_i - t)^2 \\
        &\leq \sum_{i=1}^{M}  N^2 (t_i - t)^2 + H_{\Ab} (t_i - t)^3\\
        &\leq \sum_{i=1}^{M}  N^2 \Delta_{t_i}^2 + H_{\Ab} \Delta_{t_i}^3
    \end{align*}
   Therefore, combining this result and the result of the exponential integrator scheme, we have, 
   \begin{align*}
        \mathrm{KL}(\PP(\Ab_0) \| \hat{\PP}(\Ab_{T})) & \lesssim \mathrm{KL}(\PP(\Ab_T)\|\bm{\Pi}_{\Ab}) +  \sum_{i=1}^{M} \int_{t_{i-1}}^{t_i} \|s_{\bm{\phi}}(\cG_{t_{i}},t_{i}) - \nabla_{\Ab} \log \PP(\cG_{t_{i}}) \|^2  \mathrm{d}t\\
        & + \sum_{i=1}^{M} \int_{t_{i-1}}^{t_i}  \EE\| \nabla_{\Ab}\log \PP (\cG_t)- \nabla_{\Ab}\log {\PP} (\cG_{t_i}) \|^2   \mathrm{d}t  \\
        & \lesssim (N^2 + H_{\Xb}) e^{-T} + T \epsilon_{\Ab}^2 + N^2L^2 \sigma_{\Xb}^2  \sum_{i=1}^{M} \Delta_{t_i}^2 + N^2L  \sum_{i=1}^{M} \Delta_{t_i}^3 + \sum_{i=1}^{M}  N^2 \Delta_{t_i}^2 + H_{\Ab} \Delta_{t_i}^3\\
        & = (N^2 + H_{\Xb}) e^{-T} + T \epsilon_{\Ab}^2 + (N^2L^2 \sigma_{\Xb}^2+ N^2)  \sum_{i=1}^{M} \Delta_{t_i}^2 + (N^2L + H_{\Ab})  \sum_{i=1}^{M} \Delta_{t_i}^3 
    \end{align*}
    This completes the proof.
\end{proof}

\subsection{Proof of Theorem~\ref{thm:coupled_dynamic}}
In this section, we present the proof for Theorem~\ref{thm:coupled_dynamic}.
\begin{proof}[Proof of Theorem~\ref{thm:coupled_dynamic}]
    We start with proving the result for the exponential integrator scheme. 
    
    Let $\PP_0$ be the data distribution and $\hat{\PP}_{T}$ is the learned distribution through the sampling scheme. We start with considering the exponetial integrator scheme. By Lemma~\ref{lemma:decomposition}, we can decompose the KL divergence of $\PP_0$ and $\hat{\PP}_{T}$ as follows,
    \begin{align}
        \mathrm{KL}(\PP(\Xb_0) \| \PP(\hat{\Xb}_{T})) & \lesssim \mathrm{KL}(\PP(\Xb_T)\|\bm{\Pi}_{\Xb}) +  \sum_{i=1}^{M} \int_{t_{i-1}}^{t_i} \|s_{\bm{\theta}}(\cG_{t_{i}},t_{i}) - \nabla_{\Xb} \log \PP(\cG_{t_{i}}) \|^2  \mathrm{d}t\\
        & + \sum_{i=1}^{M} \int_{t_{i-1}}^{t_i}  \EE\| \nabla_{\Xb}\log \PP (\cG_{t_i})- \nabla_{\Xb}\log {\PP} (\cG_t) \|^2   \mathrm{d}t. \notag 
    \end{align}
    Then, we tackle each term in the equation above. \\ 
    By Lemma~\ref{lemma:convergent_error}, we can immediately bound the first term as follows,
    \begin{align*}
        \mathrm{KL}(\PP_T(\Xb)\|\bm{\Pi}_{\Xb}) \lesssim (NF + H_{\Xb}) e^{-T}.
    \end{align*}
    Next, we tackle the second term. By Assumption~\ref{assump:estimation}, we have that,
    \begin{align*}  
        & \sum_{i=1}^{M} \int_{t_{i-1}}^{t_i} \|s_{\bm{\theta}}(\cG_{t_{i}},t_{i}) - \nabla_{\Xb} \log \PP(\cG_{t_{i}}) \|^2  \mathrm{d}t\\
        &= \sum_{i=1}^{M} \Delta_{t_i} \|s_{\bm{\theta}}(\cG_{t_{i}},t_{i}) - \nabla_{\Xb} \log \PP(\cG_{t_{i}}) \|^2 \\
        & \leq T \epsilon_{\Xb}^2
    \end{align*}
    Therefore, it remains to tackle the last term. By Lemma~\ref{lemma:score_interval_diff}, we immediately get that 
    \begin{align*}
        & \sum_{i=1}^{M} \int_{t_{i-1}}^{t_i}  \EE\| \nabla_{\Xb}\log \PP (\cG_t)- \nabla_{\Xb}\log {\PP} (\cG_t) \|^2   \mathrm{d}t \\
        & \lesssim \sum_{i=1}^{M} \int_{t_{i-1}}^{t_i} NFL^2 (t_k-t) + NFL(t_k-t)^2 \dd t\\
        & \lesssim NFL^2 \sum_{i=1}^{M} \Delta_{t_i}^2 + NFL \sum_{i=1}^{M} \Delta_{t_i}^3
    \end{align*}    
    Combining all the results above, we get that, 
     \begin{align*}
        \mathrm{KL}(\PP(\Xb_0) \| \PP(\hat{\Xb}_{T})) & \lesssim \mathrm{KL}(\PP(\Xb_T)\|\bm{\Pi}_{\Xb}) +  \sum_{i=1}^{M} \int_{t_{i-1}}^{t_i} \|s_{\bm{\theta}}(\cG_{t_{i}},t_{i}) - \nabla_{\Xb} \log \PP(\cG_{t_{i}}) \|^2  \mathrm{d}t\\
        & + \sum_{i=1}^{M} \int_{t_{i-1}}^{t_i}  \EE\| \nabla_{\Xb}\log \PP (\cG_t)- \nabla_{\Xb}\log {\PP} (\cG_{t_i}) \|^2   \mathrm{d}t \\
        & \lesssim (NF + H_{\Xb}) e^{-T} + T \epsilon_{\Xb}^2 + NFL^2 \sum_{i=1}^{M} \Delta_{t_i}^2 + NFL \sum_{i=1}^{M} \Delta_{t_i}^3 \\
         & = (NF + H_{\Xb}) e^{-T} + T \epsilon_{\Xb}^2 + NFL \left( L \sum_{i=1}^{M} \Delta_{t_i}^2 +  \sum_{i=1}^{M} \Delta_{t_i}^3 \right )
    \end{align*}
    This complete the derivation for the exponential integrator scheme. 

    Again, from Lemma~\ref{lemma:decomposition}, we have that the decomposition of convergence bound of the Euler-Marumaya scheme is given by,

    \begin{align}
        \mathrm{KL}(\PP(\Xb)  \| \PP_{T}(\hat{\Xb})) & \lesssim \mathrm{KL}(\PP(\Xb_T)\|\bm{\Pi}_{\Xb}) +  \sum_{i=1}^{M} \int_{t_{i-1}}^{t_i} \|s_{\bm{\theta}}(\cG_{t_i},t_i) - \nabla_{\Xb} \log \PP(\cG_{t_i}) \|^2  \mathrm{d}t\\
        & + \sum_{i=1}^{M} \int_{t_{i-1}}^{t_i} \left( \EE\| \nabla_{\Xb}\log {\PP_t}(\cG_t)- \nabla_{\Xb}\log {\PP} (\cG_{t_i}) \|^2  + \EE \| \Xb_t - \Xb_{t_{i}}\|^2 \right) \mathrm{d}t, \notag
    \end{align}

    From comparing the decomposition of Eulery-Marumaya and the exponential integrator scheme, we can easily see that
    the difference between Euler-Marumaya scheme and the exponential integrator scheme is the extra term on 
    \begin{align*}
       \sum_{i=1}^{M} \int_{t_{i-1}}^{t_i} \EE \|\Xb_t - \Xb_{t_k} \|^2 \dd t
    \end{align*}
    For this, we can use Lemma~\ref{lemma:euler_extra} and  immediately get that 
    \begin{align*}
        \EE \| \Xb_t - \Xb_{t_i} \|^2 \leq NF (t_i - t) + H_{\Xb} (t_i - t)^2, \quad t_{i-1} \leq t \leq t_i
    \end{align*}
    Substituting this result into the extra term, we have that,
    \begin{align*}
        & \sum_{i=1}^{M} \int_{t_{i-1}}^{t_i} \EE \|\Xb_t - \Xb_{t_k} \|^2 \dd t \\
        &\leq \sum_{i=1}^{M} \int_{t_{i-1}}^{t_i}  NF (t_i - t) + H_{\Xb} (t_i - t)^2 \\
        &\leq \sum_{i=1}^{M}  NF (t_i - t)^2 + H_{\Xb} (t_i - t)^3\\
        &\leq \sum_{i=1}^{M}  NF \Delta_{t_i}^2 + H_{\Xb} \Delta_{t_i}^3
    \end{align*}
   Therefore, combining this result and the result of the exponential integrator scheme, we have, 
   \begin{align*}
        \mathrm{KL}(\PP(\Xb_0) \| \hat{\PP}(\Xb_{T})) & \lesssim \mathrm{KL}(\PP(\Xb_T)\|\bm{\Pi}_{\Xb}) +  \sum_{i=1}^{M} \int_{t_{i-1}}^{t_i} \|s_{\bm{\theta}}(\cG_{t_{i-1}},t_{i-1}) - \nabla_{\Xb} \log \PP(\cG_{t_{i-1}}) \|^2  \mathrm{d}t\\
        & + \sum_{i=1}^{M} \int_{t_{i-1}}^{t_i}  \EE\| \nabla_{\Xb}\log \PP (\cG_t)- \nabla_{\Xb}\log {\PP} (\cG_t) \|^2   \mathrm{d}t  \\
        & \lesssim (NF + H_{\Xb}) e^{-T} + T \epsilon_{\Xb}^2 + NFL^2 \sum_{i=1}^{M} \Delta_{t_i}^2 + NFL \sum_{i=1}^{M} \Delta_{t_i}^3 + \sum_{i=1}^{M}  NF \Delta_{t_i}^2 + H_{\Xb} \Delta_{t_i}^3
    \end{align*}
    This completes the proof.

    The derivation for the structure matrix $\Ab$ is similar and therefore is omitted here
\end{proof}

\section{Additional Results}\label{append:additioanl_result}
In this appendix, we present additional results on the selection of hyper-parameters.

\begin{corollary}
    Suppose the discretization step is uniform $\Delta t_i  = T/M \leq 1, \forall i \in 1,..., M$. Then the convergence results of SGGMs under the exponential integrator scheme are given by
    \begin{align*}
        & \mathrm{KL}  (\PP(\Xb_0) \| {\PP}(\hat{\Xb}_T)) \lesssim 
          (H_{\Xb} + NF)e^{-T} + T \epsilon_{\Xb}^2 
          + \frac{NFL^2 T^2}{M},\\
        & \mathrm{KL}  (\PP(\Ab_0) \| {\PP}(\hat{\Ab}_T)) \lesssim  (H_{\Ab} + N^2)e^{-T} + T \epsilon_{\Ab}^2  + \frac{N^2L T^2}{M}.
    \end{align*}
    Furthermore,  taking 
    {\small
    $$T = \max \left \{ \log \left( \frac{H_{\Xb}+NF}{\epsilon_{\Xb}^2}\right ), \log \left( \frac{H_{\Xb}+N^2}{\epsilon_{\Ab}^2}\right ) \right \},$$
    $$M = \max \left \{ \frac{NFL^2T^2}{\epsilon_{\Xb}^2}, \frac{N^2L^2T^2}{\epsilon_{\Ab}^2} \right \},$$
    }
    we have that the overall generative error of SGGMs is bounded by the score estimation errors, i.e.,
    {\small
    \begin{align*}
         \mathrm{KL}  (\PP(\Xb_0) \| \PP(\hat{\Xb}_T)) \lesssim \epsilon_{\Xb}^2, \quad  \mathrm{KL}  (\PP(\Ab_0) \| \PP(\hat{\Ab}_T)) 
        \lesssim  \epsilon_{\Ab}^2.
    \end{align*}
    }
\end{corollary}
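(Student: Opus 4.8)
The plan is to obtain this corollary as a direct specialization of the joint-generation bound under the exponential integrator scheme (Theorem~\ref{thm:coupled_dynamic}), followed by an optimization over the two hyperparameters $T$ and $M$. First I would recall that Theorem~\ref{thm:coupled_dynamic} gives, for the feature process, $\mathrm{KL}(\PP(\Xb_0)\|\PP(\hat{\Xb}_T)) \lesssim (H_{\Xb}+NF)e^{-T} + T\epsilon_{\Xb}^2 + NFL\bigl(L\sum_{i=1}^M \Delta t_i^2 + \sum_{i=1}^M \Delta t_i^3\bigr)$, and the analogous inequality for the structure process with $NF$ replaced by $N^2$. The only input specific to this corollary is the uniform schedule $\Delta t_i = T/M$, under which the power sums collapse to $\sum_{i=1}^M \Delta t_i^2 = M(T/M)^2 = T^2/M$ and $\sum_{i=1}^M \Delta t_i^3 = M(T/M)^3 = T^3/M^2$.

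Next I would substitute these two closed forms into the discretization term. For the feature process this produces $NFL^2 T^2/M + NFL\,T^3/M^2$. Using the standing hypothesis $T/M = \Delta t_i \le 1$ together with $L\ge 1$, I would bound $T^3/M^2 = (T/M)(T^2/M) \le T^2/M$ and then $NFL\,T^2/M \le NFL^2 T^2/M$, so that the cubic contribution is absorbed into the quadratic one and only the single term $NFL^2 T^2/M$ survives. The identical manipulation for the structure process leaves $N^2 L^2 T^2/M$ (the displayed $N^2 L T^2/M$ being the dominant power, consistent with the $L^2$ appearing in the prescribed choice of $M$). This yields the first pair of simplified bounds in the statement.

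For the second part I would plug the prescribed values of $T$ and $M$ into each of the three terms separately. Choosing $T = \log\bigl((H_{\Xb}+NF)/\epsilon_{\Xb}^2\bigr)$ makes the convergence term equal to $(H_{\Xb}+NF)e^{-T} = \epsilon_{\Xb}^2$, and choosing $M = NFL^2 T^2/\epsilon_{\Xb}^2$ makes the discretization term equal to $NFL^2 T^2/M = \epsilon_{\Xb}^2$; taking the maximum over the feature and structure requirements (as in the stated choices of $T$ and $M$) guarantees that both processes satisfy both reductions simultaneously, and a quick check confirms $\Delta t_i = T/M \le 1$ so that Theorem~\ref{thm:coupled_dynamic} indeed applies. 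The remaining score-estimation term is $T\epsilon_{\Xb}^2$, which dominates the other two since $T\ge 1$, so the overall bound is $\lesssim T\epsilon_{\Xb}^2$.

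The one genuinely non-mechanical point—and the place I would be most careful—is the score-estimation term $T\epsilon_{\Xb}^2$. Because $T$ is chosen logarithmically in the problem parameters, this term is only $\epsilon_{\Xb}^2$ up to a polylogarithmic factor, not exactly $O(\epsilon_{\Xb}^2)$; the final statement $\mathrm{KL}\lesssim \epsilon_{\Xb}^2$ therefore implicitly suppresses this $\log$ factor inside the $\lesssim$ relation. I would make this convention explicit (treating the diffusion length $T$ as an absorbed polylog factor), after which the conclusions for both $\Xb$ and $\Ab$ follow by the symmetric argument. Everything else is a routine substitution of the arithmetic sums and does not require any new estimates beyond Theorem~\ref{thm:coupled_dynamic}.
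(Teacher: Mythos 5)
Your proposal follows essentially the same route as the paper: specialize Theorem~\ref{thm:coupled_dynamic} to the uniform schedule, use $\Delta t_i \le 1$ to absorb the cubic sum into the quadratic one so that $\sum_i \Delta t_i^2 = T^2/M$ is the only surviving discretization term, and then substitute the prescribed $T$ and $M$. Your additional caveat about the score-estimation term $T\epsilon_{\Xb}^2$ carrying a polylogarithmic factor in the problem parameters is a legitimate point that the paper's own proof glosses over with ``it is easy to check,'' and making that convention explicit only strengthens the argument.
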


\begin{proof}
    By Theorem~\ref{thm:coupled_dynamic}, we have that, 
    \begin{align}
        & \mathrm{KL}  (\PP(\Xb_0) \| \PP(\hat{\Xb}_{T})) \lesssim 
          (H_{\Xb} + NF)e^{-T} + T \epsilon_{\Xb}^2 \notag \\
         & + NFL \left
         (L \sum_{i}^M\Delta t_i^2+ \sum_{i}^M\Delta t_i^3\right ), \notag \\
        & \mathrm{KL}  (\PP(\Ab_0) \| \hat{\PP}_T(\Ab)) \lesssim  (H_{\Ab} + N^2)e^{-T} + T \epsilon_{\Ab}^2 \notag \\
        & + N^2L \left
         (L \sum_{i}^M\Delta t_i^2+ \sum_{i}^M\Delta t_i^3\right )
    \end{align}
    By premise, we have that $\Delta t_i^3 \leq 1$. This means that 
    \begin{align*}
        \Delta t_i^2 \geq \Delta t_i^3
    \end{align*}
    This means that we can absorb the higher-order term with a larger constant. This leads to:
    \begin{align}
        & \mathrm{KL}  (\PP(\Xb_0) \| \PP(\hat{\Xb}_{T})) \lesssim 
          (H_{\Xb} + NF)e^{-T} + T \epsilon_{\Xb}^2 \notag \\
         & + NFL \left
         (L \sum_{i}^M\Delta t_i^2 \right ), \notag \\
        & \mathrm{KL}  (\PP(\Ab_0) \| \hat{\PP}_T(\Ab)) \lesssim  (H_{\Ab} + N^2)e^{-T} + T \epsilon_{\Ab}^2 \notag \\
        & + N^2L \left
         (L \sum_{i}^M\Delta t_i^2\right )
    \end{align}
    Then, replacing $\Delta_{t_i}$ with $T/M$ we get, 
    \begin{align}
        & \mathrm{KL}  (\PP(\Xb_0) \| \PP(\hat{\Xb}_{T})) \lesssim 
          (H_{\Xb} + NF)e^{-T} + T \epsilon_{\Xb}^2 \notag \\
         & + NFL \left
         (L T^2/M\right ), \notag \\
        & \mathrm{KL}  (\PP(\Ab_0) \| \hat{\PP}(\hat{\Ab})) \lesssim  (H_{\Ab} + N^2)e^{-T} + T \epsilon_{\Ab}^2 \notag \\
        & + N^2L \left
         (L T^2/M\right )
    \end{align}
    With some algebra, we arrive
     \begin{align*}
        & \mathrm{KL}  (\PP(\Xb_0) \| {\PP}(\hat{\Xb}_T)) \lesssim 
          (H_{\Xb} + NF)e^{-T} + T \epsilon_{\Xb}^2 
          + \frac{NFL^2 T^2}{M},\\
        & \mathrm{KL}  (\PP(\Ab_0) \| {\PP}(\hat{\Ab}_T)) \lesssim  (H_{\Ab} + N^2)e^{-T} + T \epsilon_{\Ab}^2  + \frac{N^2L T^2}{M}.
    \end{align*}
    Next, substituting in our choice of $T$ and $M$
    $$T = \max \left \{ \log \left( \frac{H_{\Xb}+NF}{\epsilon_{\Xb}^2}\right ), \log \left( \frac{H_{\Xb}+N^2}{\epsilon_{\Ab}^2}\right ) \right \},$$
    $$M = \max \left \{ \frac{NFL^2T^2}{\epsilon_{\Xb}^2}, \frac{N^2L^2T^2}{\epsilon_{\Ab}^2} \right \},$$
    It is easy to check that the overall generative error of SGGMs is bounded by the score estimation errors, i.e.,
    \begin{align*}
         \mathrm{KL}  (\PP(\Xb_0) \| \PP(\hat{\Xb}_T)) \lesssim \epsilon_{\Xb}^2, \\ \mathrm{KL}  (\PP(\Ab_0) \| \PP(\hat{\Ab}_T)) 
        \lesssim  \epsilon_{\Ab}^2.
    \end{align*}
\end{proof}
Similarly, we have the following result for the Euler-Marumaya scheme and omit the proof due to similarity. 
\begin{corollary}
    Suppose the discretization step is uniform $\Delta t_i  = T/M \leq 1, \forall i \in 1,..., M$. Then the convergence results of SGGMs under Euler-Marumaya scheme are given by
    {\small
    \begin{align*}
        & \mathrm{KL}  (\PP(\Xb_0) \| {\PP}(\hat{\Xb}_T)) \lesssim 
          (H_{\Xb} + NF)e^{-T} + T \epsilon_{\Xb}^2 
          + \frac{NFL^2 T^2}{M},\\
        & \mathrm{KL}  (\PP(\Ab_0) \| {\PP}(\hat{\Ab}_T)) \lesssim  (H_{\Ab} + N^2)e^{-T} + T \epsilon_{\Ab}^2  + \frac{N^2L T^2}{M}.
    \end{align*}
    }
    Furthermore,  taking 
    {\small
    $$T = \max \left \{ \log \left( \frac{H_{\Xb}+NF}{\epsilon_{\Xb}^2}\right ), \log \left( \frac{H_{\Xb}+N^2}{\epsilon_{\Ab}^2}\right ) \right \},$$
    $$M = \max \left \{ \frac{NFL^2T^2}{\epsilon_{\Xb}^2}, \frac{N^2L^2T^2}{\epsilon_{\Ab}^2} \right \},$$
    }
    we have that the overall generative error of SGGMs is bounded by the score estimation errors, i.e.,
    {\small
    \begin{align*}
         \mathrm{KL}  (\PP(\Xb_0) \| \PP(\hat{\Xb}_0)) \lesssim \epsilon_{\Xb}^2, \quad  \mathrm{KL}  (\PP(\Ab_0) \| \PP(\hat{\Ab}_0)) 
        \lesssim  \epsilon_{\Ab}^2.
    \end{align*}
    }
\end{corollary}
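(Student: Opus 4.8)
The plan is to obtain this corollary directly from the Euler--Maruyama branch of Theorem~\ref{thm:coupled_dynamic} by the same two-step reduction used in the immediately preceding exponential-integrator corollary: first collapse the two discretization sums under the uniform-step hypothesis, then substitute the prescribed $T$ and $M$. I would start by writing down the Euler--Maruyama bounds of Theorem~\ref{thm:coupled_dynamic}, namely
\begin{align*}
    \mathrm{KL}(\PP(\Xb_0)\|\PP(\hat{\Xb}_T)) &\lesssim (H_{\Xb}+NF)e^{-T} + T\epsilon_{\Xb}^2 \\
    &\quad + (NFL+NF)\sum_{i=1}^M \Delta t_i^2 + (NFL+H_{\Xb})\sum_{i=1}^M \Delta t_i^3,
\end{align*}
together with the symmetric bound for the structure process $\Ab$.

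Next, since $\Delta t_i = T/M \leq 1$ implies $\Delta t_i^3 \leq \Delta t_i^2$, I would absorb each cubic sum into the corresponding quadratic sum at the cost of a larger absolute constant, exactly as in the exponential-integrator proof. Using $L \geq 1$ (the standing assumption in Lemma~\ref{lemma:score_bound} and the interval-difference lemmas), the combined coefficient $(NFL+NF+H_{\Xb})$ is controlled, up to the moment term, by $NFL^2$, so the full discretization contribution is $\lesssim NFL^2 \sum_{i=1}^M \Delta t_i^2$. Substituting the uniform step gives $\sum_{i=1}^M \Delta t_i^2 = M\,(T/M)^2 = T^2/M$, which yields the first displayed bound $\lesssim (H_{\Xb}+NF)e^{-T} + T\epsilon_{\Xb}^2 + NFL^2 T^2/M$, and identically for $\Ab$.

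Finally, I would check that the prescribed hyperparameters drive each term to order $\epsilon^2$. The choice $T = \max\{\log((H_{\Xb}+NF)/\epsilon_{\Xb}^2),\,\log((H_{\Xb}+N^2)/\epsilon_{\Ab}^2)\}$ forces $e^{-T} \leq \epsilon_{\Xb}^2/(H_{\Xb}+NF)$, so the convergent-distribution term is $\lesssim \epsilon_{\Xb}^2$; and $M = \max\{NFL^2T^2/\epsilon_{\Xb}^2,\,N^2L^2T^2/\epsilon_{\Ab}^2\}$ gives $NFL^2 T^2/M \leq \epsilon_{\Xb}^2$. The two structure-process bounds follow by the identical substitution with $NF$ replaced by $N^2$.

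The main difficulty here is not any sharp estimate but two pieces of bookkeeping that the $\lesssim$ notation quietly absorbs, and which I would state explicitly so the reduction is honest. First, the Euler--Maruyama coefficients genuinely carry an extra additive moment term $H_{\Xb}$ (resp. $H_{\Ab}$) that is absent from the exponential-integrator bound, so collapsing everything into a single $NFL^2$ coefficient requires either treating $H$ as subdominant or folding it into the absolute constant; this is precisely why the paper defers to ``similarity'' rather than repeating the algebra. Second, the score-estimation contribution is $T\epsilon_{\Xb}^2$ rather than $\epsilon_{\Xb}^2$, and because the prescribed $T$ is logarithmic in the problem parameters, the concluding claim $\lesssim \epsilon_{\Xb}^2$ holds only up to this logarithmic factor, i.e.\ in the $\tilde{O}$ sense; I would flag this so that the final error statement is not read as literally free of the $T$ prefactor.
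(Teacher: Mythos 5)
Your proposal is correct and follows exactly the route the paper intends: the paper omits this proof as ``similar'' to the exponential-integrator corollary, and your reduction (write down the Euler--Maruyama bounds from Theorem~\ref{thm:coupled_dynamic}, absorb $\Delta t_i^3 \leq \Delta t_i^2$ under the uniform-step hypothesis, sum to $T^2/M$, then substitute the prescribed $T$ and $M$) is precisely that argument. Your two caveats are both legitimate and worth keeping: the extra additive $H_{\Xb}$ (resp.\ $H_{\Ab}$) and $NF$ (resp.\ $N^2$) coefficients in the Euler--Maruyama branch can only be collapsed into the single $NFL^2$ coefficient if the moment term is treated as subdominant (it is a problem parameter, not an absolute constant), and the $T\epsilon^2$ score-estimation term means the concluding $\lesssim \epsilon^2$ bound holds only up to a logarithmic factor --- both of which apply equally to the paper's own proof of the preceding exponential-integrator corollary.
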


\subsection{Bound with Other Measures}

In the context of generative models and their convergence properties, it is often useful to bound the discrepancy between two probability distributions, \( P \) and \( Q \), using different metrics. Among the most commonly used divergence measures are the Kullback-Leibler (KL) divergence, Wasserstein distance, and total variation (TV) distance. In this paper, we adopt the KL divergence as the measure for the analysis of its direct relation with the training objectives. In this section, we present a discussion of how our result with KL divergence also immediately implies a bound on the other two bounds. We start by providing a brief introduction to each of the metrics.

The Kullback-Leibler (KL) divergence, is defined as:
\[
\mathrm{KL}(\PP \| \QQ) = \mathbb{E}_{\PP} \left[\log \frac{\dd \PP}{\dd \QQ}\right],
\]
measures the expected logarithmic difference between the probability densities \( \PP \) and \( \QQ \) corresponding to the distributions \( \PP \) and \( \QQ \). This divergence is widely used in variational inference and optimization because of its convenient properties, such as non-negativity and the fact that it is always zero when \( \PP = \QQ \). However, KL divergence is not symmetric and does not satisfy the triangle inequality, making it less suitable as a distance metric in certain contexts.

In contrast, the Wasserstein distance, is defined as:
\[
\mathrm{W}_p(\PP, \QQ) = \inf_{\gamma \in \Gamma(\PP, \QQ)} \left( \mathbb{E}_{(\Xb, \Yb) \sim \gamma} \| \Xb - \Yb \|^p \right)^{1/p},
\]
It measures the "cost" of transforming one distribution into another by considering the optimal transport plan \( \gamma \) that minimizes this cost. It is a more intuitive and geometrically meaningful metric compared to KL divergence, especially in high-dimensional spaces. Wasserstein distance has the advantage of being symmetric and satisfying the triangle inequality, which makes it a true metric. When \( p = 1 \), the Wasserstein distance is often referred to as the Earth Mover's Distance (EMD).

Total variation (TV) distance is another useful metric defined as:
\[
\mathrm{TV}(\PP, \QQ) = \frac{1}{2} \int |\PP(x) - \QQ(x)| dx,
\]
which quantifies the maximum difference between the probabilities assigned to the same event by two distributions. TV distance is symmetric and satisfies the triangle inequality, making it a true distance metric. It is tightly connected to other divergences like KL divergence and can be bounded in terms of them.

\subsubsection{Relationship Between KL Divergence, Wasserstein Distance, and TV Distance}

It is known that the KL divergence can be bounded in terms of both the Wasserstein distance and the total variation distance. Specifically, the following inequalities provide useful bounds:
\begin{itemize}
    \item KL Divergence and Total Variation Distance:
    \[
       \mathrm{KL}(\PP \| \QQ) \geq 2 \mathrm{TV}(\PP, \QQ)^2 \quad \text{for distributions with bounded support.}
   \]
      The result above is known to be Pinsker's inequality. It shows that while KL divergence is at least as large as the square of the TV distance, up to a constant factor.

      \item Wasserstein Distance AND Total Variation Distance:
      \[
        \mathrm{W}_1(\PP, \QQ) \lesssim \mathrm{TV}(\PP, \QQ),
         \]
           where \( \mathrm{W}_1(\PP, \QQ) \) is the Wasserstein distance with \( p = 1 \). The relation above holds for bounded metric space. Then, by transitivity property, we get that $\mathrm{W}_1(\PP, \QQ) \lesssim \mathrm{KL}(\PP \| \QQ)$. 
\end{itemize}

The results above show that our results in this paper immediately imply a (growth) bound with the other two commonly used measures (at least in some cases) as well.

\section{Experiment Details}\label{appendix:exp}
In this appendix, we provide additional details on our experimental study. This section elaborates on the testbed setup, graph generation model, the implementation of score-based graph generative models (SGGMs), and the hyperparameter search.

\subsection{Testbed}
Our experiments were conducted on a Dell PowerEdge C4140 server. The key specifications of this server, relevant to our research, are as follows:
\\
\textbf{CPU:} Dual Intel Xeon Gold 6230 processors, each offering 20 cores and 40 threads.\\
\textbf{GPU:} Four NVIDIA Tesla V100 SXM2 units, each equipped with 32GB of memory, tailored for NV Link.\\
\textbf{Memory:} A total of 256GB RAM, distributed across eight 32GB RDIMM modules.\\
\textbf{Storage:} Dual 1.92TB SSDs with a 6Gbps SATA interface.\\
\textbf{Networking:} Dual 1Gbps NICs and a Mellanox ConnectX-5 EX Dual Port 40/100GbE QSFP28 Adapter with GPUDirect support.\\
\textbf{Operating System:} Ubuntu 18.04 LTS.\\

\subsection{Graph Generation Model}
For our experiments, we primarily utilize the NetworkX Python library~\citep{networkx}. Specifically, we use the default implementations of regular graph generation and the Barabási-Albert model~\citep{posfai2016network} provided by NetworkX. The Barabási-Albert (BA) model is widely used to generate scale-free networks characterized by a power-law degree distribution. The core idea behind the BA model is preferential attachment, where new nodes are more likely to connect to existing nodes with a higher degree of connections. This mechanism mirrors real-world networks, such as social networks, where popular individuals (nodes) tend to attract more connections.

\paragraph{Procedure of the Barabási-Albert Model.}
The Barabási-Albert model generates a network through the following steps:

\begin{enumerate}
    \item \textbf{Initialization:} Start with a small connected network of \(m_0\) nodes.
    \item \textbf{Growth:} Add one new node at a time. Each new node forms \(m\) edges connecting it to \(m\) existing nodes.
    \item \textbf{Preferential Attachment:} The probability that a new node connects to an existing node \(i\) is proportional to the degree of node \(i\). Formally, the probability \(P(i)\) that the new node connects to node \(i\) is given by:
    \[
    P(i) = \frac{k_i}{\sum_j k_j}
    \]
    where \(k_i\) is the degree of node \(i\), and the sum is taken over all existing nodes.
\end{enumerate}

\paragraph{Hyperparameters.}
The BA model has two key hyperparameters:
\begin{itemize}
    \item \(\mathbf{m_0}: \) The initial number of nodes in the network.
    \item \(\mathbf{m}: \) The number of edges each new node adds when it is introduced to the network. This parameter influences the density and structure of the resulting network.
\end{itemize}

\subsection{Score Networks and Diffusion}
In the experiments, we use a simple one-layer Graph Convolutional Network (GCN) as the score network. However, to ensure sufficient capacity for learning, we use two-layer MLPs (multi-layer perceptrons) preceding the GCN layer. The hidden layer size is set to 500 for all experiments. For the diffusion process, we set the diffusion length \(T=50\) to ensure sufficient diffusion, and we use a uniform step size of \(0.1\) for the sampling scheme. For simplicity, our experiments focus on the Euler-Maruyama scheme, and early stopping is employed to prevent variance explosion near the end.

\subsection{Hyperparameter Search for Learning Algorithm}
To train the SGGMs, we use the widely adopted Adam optimizer~\citep{adam}. A simple parameter search is performed for the learning rate, testing values from the set \([0.1, 0.01, 0.001, 0.0001]\). The implementation of the Adam algorithm is provided by the PyTorch library.

\end{document}